\providecommand{\algorithmname}{Algorithm}
\theoremstyle{plain}
\newtheorem{thm}{\protect\theoremname}
\theoremstyle{remark}
\theoremstyle{plain}
\theoremstyle{plain}
\theoremstyle{plain}
\newtheorem{lem}{Lemma}
\theoremstyle{plain}
\newtheorem{fact}{Fact}
\renewcommand{\citep}{\cite}
\renewcommand{\citet}{\cite}
\newcommand{\algmargin}{\the\ALG@thistlm}
\newlength{\whilewidth}
\algnewcommand{\parState}[1]{\State%
	\parbox[t]{\dimexpr\linewidth-\algmargin}{\strut #1\strut}}
\providecommand{\theoremname}{Theorem}
\begin{document}
\global\long\def\cC{{\cal C}}%
\global\long\def\cM{{\cal M}}%
\global\long\def\cN{{\cal N}}%
\global\long\def\cV{{\cal V}}%
\global\long\def\cF{{\cal F}}%
\global\long\def\cR{{\cal R}}%
\global\long\def\cP{{\cal P}}%
\global\long\def\cG{{\cal G}}%
\global\long\def\cB{{\cal B}}%
\global\long\def\cD{{\cal D}}%
\global\long\def\cX{{\cal X}}%
\global\long\def\cS{{\cal S}}%
\global\long\def\cA{{\cal A}}%
\global\long\def\cY{{\cal Y}}%
\global\long\def\cL{{\cal L}}%
\global\long\def\cQ{{\cal Q}}%
\global\long\def\cU{{\cal U}}%
\global\long\def\real{\mathbb{R}}%
\global\long\def\E{\mathbb{E}}%
\global\long\def\T{\mathbb{T}}%
\global\long\def\P{\mathbb{P}}%
\global\long\def\pol{\pi}%
\global\long\def\indic{\mathbb{I}}%
\global\long\def\Ps{P^{\star}}%
\global\long\def\Php{\hat{P}^{\pol}}%
\global\long\def\Pp{P^{\pol}}%
\global\long\def\Vs{V^{\star}}%
\global\long\def\Vp{V^{\pol}}%
\global\long\def\Qs{Q^{\star}}%
\global\long\def\Qp{Q^{\pol}}%
\global\long\def\vhat{\hat{v}}%
\global\long\def\zhat{\hat{z}}%
\global\long\def\e{\mathbf{e}}%
\global\long\def\g{\mathbf{g}}%
\global\long\def\w{w}%
\global\long\def\v{\mathbf{v}}%
\global\long\def\fmap{\phi}%
\global\long\def\pmap{\mu}%
\global\long\def\IdMat{I}%
\global\long\def\bP{\mathbf{P}}%
\global\long\def\A{\mathbf{A}}%
\global\long\def\ucbrate{\lambda_{\text{UCB}}}%
\global\long\def\tmp{\beta}%
\global\long\def\diag{\mathop{\text{diag}}}%
\global\long\def\argmin{\mathop{\text{argmin}}}%
\global\long\def\argmax{\mathop{\text{argmax}}}%
\newcommandx\norm[2][usedefault, addprefix=\global, 1=\#1]{\ensuremath{\left\Vert #1\right\Vert {}_{#2}}}%
\global\long\def\conf{\text{conf}}%
\global\long\def\lse{\mathsf{lse}}%
\renewcommandx\norm[2][usedefault, addprefix=\global, 1=\#1]{\Vert#1\|_{#2}}%
\global\long\def\reg{\textup{Regret}}%
\global\long\def\Var{\textup{Var}}%
\global\long\def\sens{\text{sensitivity}}%
\global\long\def\edim{d_{E}}%
\global\long\def\rsvil{\mathsf{RSVI.L}}%
\global\long\def\rsvig{\mathsf{RSVI.G}}%
\global\long\def\mvi{\mathsf{MetaRSVI}}%
\global\long\def\kl{\mathsf{KL}}%
\global\long\def\condrew{\text{if }\beta>0}%
\global\long\def\condcost{\text{if }\beta<0}%

\title{Exponential Bellman Equation and Improved Regret Bounds for Risk-Sensitive Reinforcement Learning}

\author{Yingjie Fei$^1$,$\ $ Zhuoran Yang$^2$,$\ $ Yudong Chen$^3$,$\ $ and Zhaoran Wang$^1$\\
	\normalsize 
	$^1$ Northwestern University, 
	\normalsize 
	$^2$ Princeton University, 
	\normalsize 
	$^3$ University of Wisconsin-Madison\\
	\normalsize 
	yf275@cornell.edu, zy6@princeton.edu, yudong.chen@wisc.edu, zhaoranwang@gmail.com}

\date{}
\maketitle

\begin{abstract}

We study risk-sensitive reinforcement learning (RL) based on the entropic risk measure. Although existing works have established non-asymptotic regret guarantees for this problem, they leave open an exponential gap between the upper and lower bounds. We identify the deficiencies in existing algorithms and their analysis that result in such a gap. To remedy these deficiencies, we investigate a simple transformation of the risk-sensitive Bellman equations, which we call the exponential Bellman equation. The exponential Bellman equation inspires us to develop a novel analysis of Bellman backup procedures in risk-sensitive RL algorithms, and further motivates the design of a novel exploration mechanism. We show that these analytic and algorithmic innovations together lead to improved regret upper bounds over existing ones.
\end{abstract}

\section{Introduction \label{sec:intro}}
Risk-sensitive reinforcement learning (RL) is important for practical
and high-stake applications, such as  self-driving and robotic surgery. In contrast with standard and risk-neutral RL, it optimizes some risk measure
of cumulative rewards instead of their expectation. One foundational framework for risk-sensitive RL maximizes 
the entropic risk measure of the reward, which takes the form of 
%\[
%V_{\text{erm}}^{\pi}=\beta^{-1}\log\{\E_{\pi,P}[e^{\beta R}]\},
%\]
\[
V^{\pi}=\frac{1}{\beta} \log\{\E_{\pi}[e^{\beta R}]\},
\]
with respect to the policy $\pi$, where $\beta\ne0$ is a given risk
parameter and $R$ denotes the cumulative rewards.

Recently, the works of \citet{fei2020risk,fei2021risk} investigate the online
setting of the above risk-sensitive RL problem. 
%, without making these stringent assumptions. 
Under $K$-episode MDPs with horizon
length of $H$, they propose two model-free algorithms, namely RSVI and RSQ,
and prove that their algorithms achieve the regret upper bound (with its informal form given by)
\begin{align*}
%$
\reg(K)\lesssim e^{|\beta| H^{2}} \cdot \frac{e^{|\beta| H}-1}{|\beta| H}\sqrt{\text{poly}(H)\cdot K}
% \label{eq:regret_upper_informal_intro}
%$
\end{align*}
without assuming knowledge of the transition distribution or access
to a simulator.  They also provide a lower bound (informally presented as)
\begin{align*}
%$
\reg(K)\gtrsim\frac{e^{|\beta| H'}-1}{|\beta| H}\sqrt{\text{poly}(H)\cdot K}
%$
\end{align*}
that any algorithm has to incur, where $H'$ is a linear function
of $H$.  
Despite the non-asymptotic nature
of their results, it is not hard to see that a wide gap exists between
the two bounds. Specifically, the upper bound has an additional $e^{|\beta| H^{2}}$
factor compared to the lower bound, and even worse, this factor is
dominating in the upper bound since the quadratic exponent in $e^{|\beta| H^{2}}$
makes it exponentially larger than $\frac{e^{|\beta| H}-1}{|\beta| H}$
even for moderate values of $|\beta|$ and $H$. 
It is unclear whether the factor of $e^{|\beta|H^{2}}$ is intrinsic in the upper bound.

In this paper, we show that the additional factor in the upper bound is not intrinsic for the upper bound and can be eliminated by a refined algorithmic design and analysis. We identify two deficiencies in the existing algorithms and their analysis: (1) the main element of the analysis follows existing analysis of risk-neutral RL algorithms, which fails to exploit the special structure of the Bellman equations of risk-sensitive RL; (2) the existing algorithms use an excessively large bonus that results in the exponential blow-up in the regret upper bound.

To address the above shortcomings, we consider a simple transformation of the Bellman equations analyzed so far in the literature, which we call the \textit{exponential Bellman equation}. A distinctive feature of the exponential Bellman equation is that they associate the instantaneous reward and value function of the next step in a multiplicative way, rather than in an additive way as in the standard Bellman equations.
From the exponential Bellman equation, we develop a novel analysis
of the Bellman backup procedure for risk-sensitive RL algorithms that
are based on the principle of optimism. The analysis further motivates a novel exploration
mechanism called \emph{doubly decaying} bonus, which helps the algorithms
adapt to their estimation error over each horizon step while at the
same time exploring efficiently. These discoveries enable us to propose
two model-free algorithms for RL with the entropic risk measure based
on the novel bonus. By combining the new analysis and bonus design,
we prove that the preceding algorithms attain nearly optimal regret
bounds under episodic and finite-horizon MDPs. Compared to prior results,
our regret bounds feature an exponential improvement with respect to the horizon
length and risk parameter, removing the factor of $e^{|\beta| H^2}$ from existing upper bounds. This significantly narrows the gap between
upper bounds and the existing lower bound of regret.

In summary, we make the following theoretical contributions in this
paper.
\begin{enumerate}
	\item We investigate the gap between existing upper and lower regret bounds in the context of risk-sensitive RL, and identify deficiencies of the existing algorithms and analysis;
	\item We consider the exponential Bellman equation, which inspires us to propose a novel analysis
	of the Bellman backup procedure for RL algorithms based on the entropic
	risk measure. It further motivates a novel bonus design called
	doubly decaying bonus. We then design two model-free risk-sensitive RL
	algorithms equipped with the novel bonus. 
	\item The novel analytic framework and bonus design together enable us to
	prove that the preceding algorithms achieve nearly optimal regret
	bounds, which improve upon existing ones by an exponential factor in terms of the horizon
	length and risk sensitivity.
\end{enumerate}

\section{Related works \label{sec:related}}

The problem of RL with respect to the entropic risk measure is first proposed by the classical work of \citet{howard1972risk},
and has since inspired a large body of studies \citep{bauerle2014more,borkar2001sensitivity,borkar2002q,borkar2010learning,borkar2002risk,cavazos2011discounted,coraluppi1999chapter,di1999risk,di2000infinite,di2007infinite,fleming1995risk,hernandez1996risk,huang2020stochastic,jaskiewicz2007average,marcus1997risk,mihatsch2002risk,osogami2012robustness,patek2001terminating,shen2013risk,shen2014risk,whittle1990risk}.
However, the algorithms from this line of works require knowledge
of the transition kernel or assume access to a simulator of the underlying
environment. Theoretical properties of these algorithms are investigated
based on these assumptions, but the results are mostly of asymptotic
nature, which do not shed light on their dependency on key parameters
of the environment and agent. 

The work of \citet{fei2020risk} represents the first effort to investigate the setting where transitions are unknown and simulators of the environment are unavailable. It establishes the first non-asymptotic regret or sample complexity guarantees under the tabular setting. Building upon \citet{fei2020risk}, the authors of \citet{fei2021risk} extend the results to the function approximation setting, by considering linear and general function approximations of the underlying MDPs. Nevertheless, as discussed in Section \ref{sec:intro}, both works leave open an exponential gap between the regret upper and lower bounds, which the present work aims to address via novel algorithms and analysis motivated by the exponential Bellman equation.

We remark that although the exponential Bellman
equation has been previously investigated in the literature of risk-sensitive
RL \citep{borkar2002q,bauerle2014more}, this is the first
time that it is explored for deriving regret and sample complexity guarantees of risk-sensitive
RL algorithms. In Appendix \ref{sec:dist_RL}, we also make connections between risk-sensitive
RL and distributional RL through the exponential Bellman equation.

\paragraph{Notations.}

For a positive integer $n$, we let $[n]\coloneqq\{1,2,\ldots,n\}$.
For two non-negative sequences $\{a_{i}\}$ and $\{b_{i}\}$, we write
$a_{i}\lesssim b_{i}$ if there exists a universal constant $C>0$
such that $a_{i}\le Cb_{i}$ for all $i$, and write $a_{i}\asymp b_{i}$
if $a_{i}\lesssim b_{i}$ and $b_{i}\lesssim a_{i}$. We use $\tilde{O}(\cdot)$
to denote $O(\cdot)$ while hiding logarithmic factors. For functions
$f,g:\cU\to\real$, where $\cU$ denotes their domain, we write $f\ge g$
if $f(u)\ge g(u)$ for any $u\in\cU$. We denote by $\indic\{\cdot\}$
the indicator function.

\section{Problem background}

\subsection{Episodic and finite-horizon MDP }

The setting of episodic Markov decision processes can be denoted by
$\text{MDP}(\cS,\cA,H,\cP,\cR)$, where $\cS$ is the set of states,
$\cA$ is the set of actions, $H\in\mathbb{Z}_{>0}$ is the length
of each episode, and $\cP=\{P_{h}\}_{h\in[H]}$ and $\cR=\{r_{h}\}_{h\in[H]}$
are the sets of transition kernels and reward functions, respectively.
We let $S\coloneqq\left|\cS\right|$ and $A\coloneqq\left|\cA\right|$,
and we assume $S,A<\infty$. We let $P_{h}(\cdot\,|\,s,a)$
denote the probability distribution over successor states of step
$h+1$ if action $a$ is executed in state $s$ at step $h$. We assume that the reward function
$r_{h}:\cS\times\cA\to[0,1]$ is deterministic. We also assume that both $\cP$ and $\cR$ are unknown to learning agents.

Under the setting of an episodic MDP, the agent aims to learn the
optimal policy by interacting with the environment throughout $K>0$ episodes, described as follows. At the beginning of episode $k$,
an initial state $s_{1}^{k}$ is selected by the environment and we assume $s_{1}^k$ stays the same for all $k \in [K]$. In each
step $h\in[H]$ of episode $k$, the agent observes state $s_{h}^{k}\in\cS$,
executes an action $a_{h}^{k}\in\cA$, and receives a reward equal
to $r_{h}(s_{h}^{k},a_{h}^{k})$ from the environment. The MDP then
transitions into state $s_{h+1}^{k}$ randomly drawn from the transition
kernel $P_{h}(\cdot\,|\,s_{h}^{k},a_{h}^{k})$. The episode terminates
at step $H+1$, in which the agent does not take actions or receive
rewards. We define a policy $\pi=\{\pi_{h}\}_{h\in[H]}$ as a collection
of functions $\pi_{h}:\cS\to\cA$, where $\pi_{h}(s)$ is the action
that the agent takes in state $s$ at step $h$ of the episode.

\subsection{Risk-sensitive RL}

For each $h\in[H]$, we define the value function $V_{h}^{\pi}:\cS\to\real$
of a policy $\pi$ as the cumulative utility of the agent at state
$s$ of step $h$ under the entropic risk measure, assuming that the
agent commits to policy $\pi$ in later steps. Specifically, we define
\begin{equation}
\forall(h,s)\in[H]\times\cS,\quad V_{h}^{\pi}(s)\coloneqq\frac{1}{\tmp}\log\left\{ \E\left[e^{\tmp\sum_{i=h}^{H}r_{i}(s_{i},\pi_{i}(s_{i}))}\ \Big|\ s_{h}=s\right]\right\},\label{eq:value_func}
\end{equation}
where $\beta\ne0$ is a given risk parameter. The agent aims to maximize
his cumulative utility in step 1, that is, to find a policy $\pi$
such that $V_{1}^{\pi}(s)$ is maximized for all state $s\in\cS$.
Under this setting, if $\beta>0$, the agent is risk-seeking and if
$\beta<0$, the agent is risk-averse. Furthermore, as $\beta\to0$
the agent tends to be risk-neutral and $V_{h}^{\pi}(s)$ tends to
the classical value function.

We may also define the action-value function $Q_{h}^{\pi}:\cS\times\cA\to\real$,
which is the cumulative utility of the agent who follows policy $\pi$,
conditional on a particular state-action pair; formally, this is given
by 
\begin{equation}
%\[
\forall(h,s,a)\in[H]\times\cS\times\cA,\quad Q_{h}^{\pi}(s,a)\coloneqq\frac{1}{\tmp}\log\left\{ \E\left[e^{\tmp\sum_{i=h}^{H}r_{i}(s_{i},a_{i})}\ \Big|\ s_{h}=s,a_{h}=a\right]\right\}, \label{eq:action_value_func}
%\]
\end{equation}
Under some mild regularity conditions \citep{bauerle2014more}, there always exists an optimal 
policy, which we denote as $\pi^{*}$, that yields the optimal value
$V_{h}^{*}(s)\coloneqq\sup_{\pi}V_{h}^{\pi}(s)$ for all $(h,s)\in[H]\times\cS$. 

\paragraph{Bellman equations.}
For all $(s,a)\in\cS\times\cA$, the Bellman
equation associated with a policy $\pi$ is given by
\begin{align}
Q_{h}^{\pi}(s,a) & =r_{h}(s,a)+\frac{1}{\tmp}\log\left\{ \E_{s'\sim P_{h}(\cdot\,|\,s,a)}\left[e^{\beta\cdot V_{h+1}^{\pi}(s')}\right]\right\} ,\label{eq:bellman}\\
V_{h}^{\pi}(s) & =Q_{h}^{\pi}(s,\pol(s)),\qquad V_{H+1}^{\pi}(s)=0\nonumber 
\end{align}
for $h\in[H]$. In Equation \eqref{eq:bellman}, it can be seen that
the action value $Q_{h}^{\pi}$ of step $h$ is a non-linear function
of the value function $V_{h+1}^{\pi}$ of the later step. This is
in contrast with the linear Bellman equations in the risk-neutral
setting ($\beta \to 0$), where $Q_{h}^{\pi}(s,a)=r_{h}(s,a)+\E_{s'}[V_{h+1}^{\pi}(s')]$.
Based on Equation \eqref{eq:bellman}, for $h\in[H]$, the Bellman
optimality equation is given by 
\begin{align}
Q_{h}^{*}(s,a) & =r_{h}(s,a)+\frac{1}{\tmp}\log\left\{ \E_{s'\sim P_{h}(\cdot\,|\,s,a)}\left[e^{\beta\cdot V_{h+1}^{*}(s')}\right]\right\} ,\label{eq:bellman_opt}\\
V_{h}^{*}(s) & =\max_{a\in\cA}Q_{h}^{*}(s,a),\qquad V_{H+1}^{*}(s)=0.\nonumber 
\end{align}

\paragraph{Exponential Bellman equation.}
We introduce the \textit{exponential Bellman equation}, which is an exponential transformation of Equations \eqref{eq:bellman} and \eqref{eq:bellman_opt} (by taking exponential on both sides): for any policy $\pi$ and tuple $(h,s,a)$, we have
%, associating
%the problem of RL under the entropic risk measure with exponential
%Bellman equations is rather straightforward: 
%indeed, by taking exponential
%on both sides of the Equation \eqref{eq:bellman}, we get 
%\begin{equation}
%e^{\beta\cdot Q_{h}^{\pi}(s,a)}=e^{\beta\cdot r_{h}(s,a)}\E_{s'\sim P_{h}(\cdot\,|\,s,a)}[e^{\beta\cdot V_{h+1}^{\pi}(s')}],\label{eq:exp_bellman}
%\end{equation}
\begin{equation}
e^{\beta\cdot Q_{h}^{\pi}(s,a)}=\E_{s'\sim P_{h}(\cdot\,|\,s,a)} \big[e^{\beta (r_{h}(s,a) + V_{h+1}^{\pi}(s') ) } \big].\label{eq:exp_bellman}
\end{equation}
When $\pi = \pi^*$, we obtain the corresponding optimality equation 
%which assumes the same form as the exponential Bellman equation \eqref{eq:exp_bellman_dist}.
%Similarly, an exponential Bellman optimality equation can be derived
%by taking exponential on both sides of Equation \eqref{eq:bellman_opt},
%so that 
%\begin{equation}
%e^{\beta\cdot Q_{h}^{*}(s,a)}=e^{\beta\cdot r_{h}(s,a)}\E_{s'\sim P_{h}(\cdot\,|\,s,a)}\left[e^{\beta\cdot V_{h+1}^{*}(s')}\right].\label{eq:exp_bellman_opt}
%\end{equation}
\begin{equation}
e^{\beta\cdot Q_{h}^{*}(s,a)}=\E_{s'\sim P_{h}(\cdot\,|\,s,a)}\big[e^{\beta (r_{h}(s,a) + V_{h+1}^{*}(s') )}\big].\label{eq:exp_bellman_opt}
\end{equation}
Note that Equation \eqref{eq:exp_bellman} associates the current and future cumulative utilities
($Q_{h}^{\pi}$ and $V_{h+1}^{\pi}$) in a multiplicative way. An
implication of 
%Equation \eqref{eq:exp_bellman_dist} 
Equation \eqref{eq:exp_bellman}
is that one may
estimate $e^{\beta\cdot Q_{h}^{\pi}(s,a)}$ by a quantity of the form
%computing the sample MGFs of the reward distribution, e.g., by using the estimator
%of the form
%\begin{equation}
%w_{h}(s,a)=\text{SampAvg}(\{e^{\mu\cdot Q_{h+1}(s,a)}\})\label{eq:weight_template}
%\end{equation}
\begin{equation}
w_{h}(s,a)=\text{SampAvg}(\{e^{\beta(r_h(s_h, a_h) + V_{h+1}(s_{h+1}))}: (s_h, a_h) = (s,a)\})\label{eq:weight_template}
\end{equation}
given some estimate of the value function $V_{h+1}$. 
Here, 
we denote by $\text{SampAvg}(\cX)$ the sample average computed over
elements in the set $\cX$  throughout past episodes, and it can be seen as an empirical MGF
of cumulative rewards from step $h+1$. 
Equation \eqref{eq:exp_bellman} also suggests the following policy improvement procedure for a risk-sensitive
policy $\pi$: 
\begin{equation}
\pi_{h}(s)\leftarrow\argmax_{a'\in\cA}Q_{h}(s,a')=\begin{cases}
\argmax_{a'\in\cA}e^{\beta\cdot Q_{h}(s,a')}, & \condrew\\
\argmin_{a'\in\cA}e^{\beta\cdot Q_{h}(s,a')}, & \condcost,
\end{cases}\label{eq:exp_policy_improv}
\end{equation}
where $Q_{h}$ denotes some estimated action-value function, possibly obtained from the quantity $w_h$. 

\vspace{1em}

In the next section, we will discuss how the exponential Bellman equation
\eqref{eq:exp_bellman} 
%and \eqref{eq:exp_bellman_opt} 
inspires the
development of a novel analytic framework for risk-sensitive RL. Before
proceeding, we introduce a performance metric for the agent. For
each episode $k$, recall that $s_{1}^{k}$ is the initial state chosen
by the environment and let $\pi^{k}$ be the policy of the agent at
the beginning of episode $k$. Then the difference $V_{1}^{*}(s_{1}^{k})-V_{1}^{\pi^{k}}(s_{1}^{k})$
is called the \emph{regret} of the agent in episode $k$. Therefore,
after $K$ episodes, the total regret for the agent is given by
\begin{align}
\reg(K) & \coloneqq\sum_{k\in[K]}[V_{1}^{*}(s_{1}^{k})-V_{1}^{\pi^{k}}(s_{1}^{k})],\label{eq:regret}
\end{align}
which serves as the key performance metric studied in this paper.

\section{Analysis of risk-sensitive RL}

\subsection{Mechanism of existing analysis}

In this section, we provide an informal overview of the mechanism
underlying the existing analysis of risk-sensitive RL. Let us focus on the case $\beta>0$ for simplicity of exposition; similar reasoning holds
for $\beta<0$. A key step in the existing regret analysis of RL
algorithms is to establish a recursion on the difference $V_{h}^{k}-V_{h}^{\pi^{k}}$ over $h\in[H]$,
where $V_{h}^{k}$ is the iterate of an algorithm in step $h$ of
episode $k$ and $V_{h}^{\pi^{k}}$ is the value function of the policy
used in episode $k$. Such approach can be commonly found in the literature
of algorithms that use the upper confidence bound \citep{jin2018q,jin2019provably},
in which the recursion takes the form of 
%\begin{equation}
%\tilde{V}_{h}^{k}-\tilde{V}_{h}^{\pi^{k}}\le \tilde{V}_{h+1}^{k}-\tilde{V}_{h+1}^{\pi^{k}}+\psi_{h}^{k},\label{eq:recur}
%\end{equation}
\begin{equation}
V_{h}^{k}-V_{h}^{\pi^{k}}\le V_{h+1}^{k}-V_{h+1}^{\pi^{k}}+\psi_{h}^{k},\label{eq:recur}
\end{equation}
for $\beta \to 0$ and some quantity $\psi_{h}^{k}$. The work of \citet{fei2020risk},
which studies the risk-sensitive setting under the entropic risk measure,
also follows this approach and derives regret bounds by establishing
the recursion of the form
\begin{equation}
V_{h}^{k}-V_{h}^{\pi^{k}}\le e^{\beta H}\left(V_{h+1}^{k}-V_{h+1}^{\pi^{k}}\right)+\frac{1}{\beta}\tilde{b}_{h}^{k}+e^{\beta H}\tilde{m}_{h}^{k},\label{eq:exp_recur_subopt}
\end{equation}
where $\tilde{b}_{h}^{k}$ denotes the bonus which enforces the upper
confidence bound and leads to the inequality $V_{h}^{k}\ge V_{h}^{\pi}$
for any policy $\pi$, and $\tilde{m}_{h}^{k}$ is part of a martingale
difference sequence. The derivation of Equation \eqref{eq:exp_recur_subopt}
is based on the Bellman equation \eqref{eq:bellman}, which shows
that the action value $Q_{h}^{\pi^{k}}$ is the sum of the reward
$r_{h}$ and the entropic risk measure of $V_{h+1}^{\pi^{k}}$. 
%Such
%additive structure also appears in the Bellman equation of risk-neutral
%RL.  
Following \citet{fei2020risk}, we may then unroll the recursion
\eqref{eq:exp_recur_subopt} from $h=H$ to $h=1$ to get 
\begin{equation}
V_{1}^{k}-V_{1}^{\pi^{k}}\le\frac{1}{\beta}e^{\beta H^{2}}\sum_{h}\tilde{b}_{h}^{k}+e^{\beta H^{2}}\sum_{h}\tilde{m}_{h}^{k},\label{eq:exp_recur_sum_subopt}
\end{equation}
given that $V_{H+1}^{k}=V_{H+1}^{\pi^{k}}=0$. Using the inequality
$\reg(K)\le\sum_{k}(V_{1}^{k}-V_{1}^{\pi^{k}})$, $\sum_{k,h}\tilde{b}_{h}^{k}\lesssim(e^{\beta H}-1)\sqrt{K}$
and $\sum_{k,h}\tilde{m}_{h}^{k}\lesssim\sqrt{K}$, we obtain the
regret bound in \citet{fei2020risk} as $\reg(K)\lesssim e^{\beta H^{2}}\frac{e^{\beta H}-1}{\beta H}\sqrt{K}$.
Therefore, it can be seen that the dominating factor $e^{\beta H^{2}}$
in their regret bound originates in Equation \eqref{eq:exp_recur_sum_subopt},
which can be further traced back to the exponential factor $e^{\beta H}$
in the error dynamics \eqref{eq:exp_recur_subopt}.

\subsection{Refined approach via exponential Bellman equation}

While the existing analysis in \eqref{eq:exp_recur_subopt} is motivated
by the Bellman equation of the form given in \eqref{eq:bellman},
we propose to work on the exponential Bellman equation \eqref{eq:exp_bellman}.
Equation \eqref{eq:exp_bellman} operates on the quantities $e^{\beta\cdot Q_{h}^{\pi}}$
and $e^{\beta\cdot V_{h+1}^{\pi}}$, which can be thought of as the
MGFs of the current and future values, while the reward function $r_{h}$
is involved as a multiplicative term. This motivates us to derive
a new recursion: 
\begin{equation}
e^{\beta\cdot V_{h}^{k}}-e^{\beta\cdot V_{h}^{\pi^{k}}}\le e^{\beta\cdot r_{h}^{k}}\big(e^{\beta\cdot V_{h+1}^{k}}-e^{\beta\cdot V_{h+1}^{\pi^{k}}}\big)+b_{h}^{k}+m_{h}^{k},\label{eq:exp_recur}
\end{equation}
where $b_{h}^{k},m_{h}^{k}$ denote some bonus and martingale terms,
respectively, and $r_{h}^{k}$ stands for the reward in step $h$
of episode $k$. Unrolling Equation \eqref{eq:exp_recur} yields 
\begin{equation}
e^{\beta\cdot V_{1}^{k}}-e^{\beta\cdot V_{1}^{\pi^{k}}}\le\sum_{h}e^{\beta\cdot D_{h}^{k}}(b_{h}^{k}+m_{h}^{k}),\label{eq:exp_recur_sum}
\end{equation}
where $D_{h}^{k}=\sum_{i\in[h-1]}r_{i}^{k}$. In words, the error
of $e^{\beta\cdot V_{1}^{k}}-e^{\beta\cdot V_{1}^{\pi^{k}}}$ is bounded
by the weighted sum of bonus and martingale difference terms, where
the weights are given by $e^{\beta\cdot D_{h}^{k}}$, the exponential
rewards up to step $h-1$. We may then apply a localized linearization
of the logarithmic function, which gives $\reg(K)\le\frac{1}{\beta}\sum_{k}(e^{\beta\cdot V_{1}^{k}}-e^{\beta\cdot V_{1}^{\pi^{k}}})$,
and arrives at a regret upper bound (the formal regret bounds will be established in Theorems \ref{thm:regret_V} and \ref{thm:regret_Q_learn} below). Different from Equation \eqref{eq:exp_recur_subopt}
where rewards are only implicitly encoded in $V_{h}^{k}$, in Equation
\eqref{eq:exp_recur} rewards are explicitly involved in the error
dynamics via an exponential term.

To see why Equation \eqref{eq:exp_recur} is intuitively correct,
we may divide both sides of the equation by $\beta$ and take $\beta\to0$.
By doing so, we should expect to obtain quantities from the error
dynamics \eqref{eq:recur} of risk-neutral RL. Since the function
$f_{\beta}(x)=(e^{\beta x}-1)/\beta$ satisfies that $f_{\beta}(x)\to x$
as $\beta\to0$ for any fixed $x$, we have 
\begin{align*}
\lim_{\beta\to0}\frac{1}{\beta}(e^{\beta\cdot V_{h}^{k}}-e^{\beta\cdot V_{h}^{\pi^{k}}}) & =V_{h}^{k}-V_{h}^{\pi^{k}},\\
\lim_{\beta\to0}\frac{1}{\beta}(e^{\beta\cdot r_{h}^{k}}(e^{\beta\cdot V_{h+1}^{k}}-e^{\beta\cdot V_{h+1}^{\pi^{k}}})) & =r_{h}^{k}+V_{h+1}^{k}-(r_{h}^{k}+V_{h+1}^{\pi^{k}})=V_{h+1}^{k}-V_{h+1}^{\pi^{k}},
\end{align*}
recovering terms in \eqref{eq:recur}.
% related to 
%risk-neutral RL. 
Therefore, the recursion \eqref{eq:exp_recur} can
be seen as generalizing those in the analysis of risk-neutral RL.

By comparing Equations \eqref{eq:exp_recur} and \eqref{eq:exp_recur_subopt},
we see that while both error dynamics are derived from the same underlying
Bellman equation, they inspire drastically different forms of recursion.
%One intriguing property of Equation \eqref{eq:exp_recur} is that
%it retains the non-contractiveness of the recursion \eqref{eq:exp_recur_subopt}
%as we step backward through $h=H,\ldots,1$, and furthermore 
Note that the multiplicative
factor $e^{\beta\cdot r_{h}^{k}}$ in Equation \eqref{eq:exp_recur}
is milder than the factor $e^{\beta H}$ in Equation \eqref{eq:exp_recur_subopt}, since $r^k_h \in [0,1]$.
This is the source of an improvement of our refined analysis over
existing works. On the other hand, the success of applying the error
dynamics \eqref{eq:exp_recur} in our analysis crucially depends on
the choice of bonus terms $\{b_{h}^{k}\}$, as an improper choice
would blow up the error $e^{\beta\cdot V_{1}^{k}}-e^{\beta\cdot V_{1}^{\pi^{k}}}$.
This observation motivates our novel bonus design, as we explain next
in Section \ref{sec:algo}. 

\section{Algorithms \label{sec:algo}}

\subsection{Overview of algorithms}

In this section, we propose two model-free algorithms for RL with
the entropic risk measure. We first present RSVI2, which is based on
value iteration, in Algorithm \ref{alg:alg_lsvi}. The algorithm has
two main stages: it first estimates the value function using data
accumulated up to episode $k-1$ (Line \ref{line:LSVI_estim_value_begin}--\ref{line:LSVI_estim_value_end})
and then executes the estimated policy to collect new trajectory (Line
\ref{line LSVI_exec_policy}). In value function estimation, it computes
the weights $w_{h}$, or the empirical MGF of some estimated cumulative rewards
%$r_{h}(s,a)+V_{h+1}(s_{h+1}^{\tau})$
evaluated at $\beta$, which can be seen as a
simple moving average over $\tau \in [k-1]$.
%by writing 
%\begin{align}
%w_{h}(s,a) & \leftarrow\text{SMA}(\{e^{\beta[r_{h}(s_{h}^{\tau},a_{h}^{\tau})+V_{h+1}(s_{h+1}^{\tau})]}: (s_{h}^{\tau},a_{h}^{\tau})=(s,a)\})\nonumber \\
%& \coloneqq\frac{1}{N_{h}(s,a)}\sum_{\tau\in[k-1]}\indic\{(s_{h}^{\tau},a_{h}^{\tau})=(s,a)\}\cdot e^{\beta[r_{h}(s,a)+V_{h+1}(s_{h+1}^{\tau})]},\label{eq:lsvi_SMA}
%\end{align}
%where SMA denotes the simple moving average. 
Therefore, Line \ref{line:LSVI_interm_value}
functions as a concrete implementation of Equation \eqref{eq:weight_template}
where the sample average is instantiated as a simple moving average.
Then in Line \ref{line:lsvi_Q_update}, it computes an augmented
estimate $G_{h}$ by combining $w_{h}$ with a bonus term $b_{h}$
(defined in Line \ref{line:lsvi_bonus_def}). This is followed by
thresholding to put $G_{h}$ in the proper range. Note that
$G_{h}$ is an optimistic estimator of the quantity $e^{\beta\cdot Q_{h}^{\pi}}$
in Equation \eqref{eq:exp_bellman}: the construction of $G_{h}$
is augmented by $b_{h}$ so that it encourages exploration of rarely
visited state-action pairs in future episodes, and thereby follows
the principle of Risk-Sensitive Optimism in the Face of Uncertainty \citep{fei2020risk}.
When $\beta<0$, the bonus is subtracted from $w_{h}$,
since a higher level of optimism corresponds to a \emph{smaller} value
of the estimate. In addition, Line \ref{line LSVI_exec_policy} follows
the reasoning of policy improvement suggested in Equation \eqref{eq:exp_policy_improv}.
\begin{algorithm}[t]
	\begin{algorithmic}[1]
		
		\State $Q_{h}(\cdot,\cdot),V_{h}(\cdot)\leftarrow H-h+1$, $N_{h}(\cdot,\cdot)\leftarrow0$
		and $w_{h}(\cdot,\cdot)\leftarrow0$ for all $h\in[H+1]$ 
		
		\For{episode $k=1,\ldots,K$}
		
		\For{step $h=H,\ldots,1$} \label{line:LSVI_estim_value_begin}
		
		\For{$(s,a)\in\cS\times\cA$ such that $N_{h}(s,a)\ge1$}
		
		\State $w_{h}(s,a)\leftarrow\frac{1}{N_{h}(s,a)}\sum_{\tau\in[k-1]}\indic\{(s_{h}^{\tau},a_{h}^{\tau})=(s,a)\}\cdot e^{\beta[r_{h}(s,a)+V_{h+1}(s_{h+1}^{\tau})]}$
		\label{line:LSVI_interm_value}
		
		\State $b_{h}(s,a)\leftarrow c|e^{\beta(H-h+1)}-1|\sqrt{\frac{S\log(HSAK/\delta)}{N_{h}(s,a)}}$
		where $c>0$ is a universal constant    \label{line:lsvi_bonus_def}
		
		\State $G_{h}(s,a)\leftarrow\begin{cases}
		\min\{\w_{h}(s,a)+b_{h}(s,a),\ e^{\beta(H-h+1)}\}, & \condrew\\
		\max\{\w_{h}(s,a)-b_{h}(s,a),\ e^{\beta(H-h+1)}\}, & \condcost
		\end{cases}$ \label{line:lsvi_Q_update}
		
		\State $V_{h}(s)\leftarrow\max_{a'\in\cA}\frac{1}{\beta}\log\{G_{h}(s,a')\}$
		\label{line:LSVI_value}
		
		\EndFor 
		
		\EndFor \label{line:LSVI_estim_value_end}

		\State $\forall h\in[H]$, take		$a_{h}\leftarrow\argmax_{a'\in\cA}\frac{1}{\beta} \log\{G_{h}(s_{h},a')\}$;
		observe $r_{h}(s_{h},a_{h})$, $s_{h+1}$ 
		\label{line LSVI_exec_policy}
		
		\State Add 1 to $N_{h}(s_{h},a_{h})$
		
		\EndFor
		
%		\selectlanguage{french}%
	\end{algorithmic}
	
%	\selectlanguage{english}%
	\caption{RSVI2 \label{alg:alg_lsvi}}
\end{algorithm}

Next, we introduce RSQ2 in Algorithm \ref{alg:Q_learn}, which is
based on Q-learning. Similar to Algorithm \ref{alg:alg_lsvi}, it
consists of value estimation (Line \ref{line:qlearn_bonus_def}--\ref{line:qlearn_value})
and policy execution (Line \ref{line:qlearn_exec_policy}) steps.
By combining Lines \ref{line:qlearn_interm_value} and \ref{line:qlearn_Q_update},
we see that Algorithm \ref{alg:Q_learn} computes the optimistic estimate $G_{h}$
as a projection of an exponential moving average of empirical MGFs:
\begin{align}
G_{h}(s_{h},a_{h}) & \leftarrow\Pi_{h} \{ \text{EMA}(\{e^{\beta[r_{h}(s_{h},a_{h})+V_{h+1}(s_{h+1})]}\}) \},
%\nonumber \\
%& \coloneqq\Pi_{h}\left\{ (1-\alpha_{t})\cdot G_{h}(s_{h},a_{h})+\alpha_{t}\cdot e^{\beta[r_{h}(s_{h},a_{h})+V_{h+1}(s_{h+1})]}\right\} 
\label{eq:qlearn_EMA}
\end{align}
where $\Pi_{h}$ denotes a projection that depends on step $h$. In
particular, Line \ref{line:qlearn_interm_value} can be interpreted as a
computation of empirical MGFs evaluated at $\beta$ and thus a concrete
implementation of Equation \eqref{eq:weight_template} using an exponential
moving average. This is in contrast with the simple moving average
update in Algorithm \ref{alg:alg_lsvi}.  
\begin{algorithm}
	\begin{algorithmic}[1]
		
		\State $Q_{h}(\cdot,\cdot),V_{h}(\cdot)\leftarrow H-h+1$ $\condrew;$
		$Q_{h}(\cdot,\cdot),V_{h}(\cdot)\leftarrow0$ otherwise, for all $h\in[H+1]$
		
		\State $N_{h}(\cdot,\cdot)\leftarrow0$ for all $h\in[H]$; $\alpha_{u}\leftarrow\frac{H+1}{H+u}$
		for $u\in\mathbb{Z}$
		
		\For{episode $k=1,\ldots,K$}
		
		\State Receive the initial state $s_{1}$
		
		\For{step $h=1,\ldots,H$} \label{line:qlearn_estim_value_begin}
		
		\State Take action $a_{h}\leftarrow\argmax_{a'\in\cA}\frac{1}{\beta}\log\{G_{h}(s_{h},a')\}$ and observe $r_{h}(s_{h},a_{h})$ and
		$s_{h+1}$\label{line:qlearn_exec_policy}

		\State Add 1 to $N_{h}(s_{h},a_{h})$; $\ \ $
		$t\leftarrow N_{h}(s_{h},a_{h})$
		
		\State $b_{h,t}\leftarrow c|e^{\beta(H-h+1)}-1|\sqrt{\frac{H\log(HSAK/\delta)}{t}}$
		for some universal constant $c>0$ \label{line:qlearn_bonus_def}
		
		\State $w_{h}(s_{h},a_{h})\leftarrow(1-\alpha_{t})\cdot G_{h}(s_{h},a_{h})+\alpha_{t}\cdot e^{\beta[r_{h}(s_{h},a_{h})+V_{h+1}(s_{h+1})]}$
		\label{line:qlearn_interm_value}
		
		\State $G_{h}(s_{h},a_{h})\leftarrow\begin{cases}
		\min\{\w_{h}(s_{h},a_{h})+\alpha_{t}b_{h,t},\ e^{\beta(H-h+1)}\}, & \text{if }\beta>0\\
		\max\{\w_{h}(s_{h},a_{h})-\alpha_{t}b_{h,t},\ e^{\beta(H-h+1)}\}, & \text{if }\beta<0
		\end{cases}$ \label{line:qlearn_Q_update}
		
		\State $V_{h}(s_{h})\leftarrow\max_{a'\in\cA}\frac{1}{\beta}\log\{G_{h}(s_{h},a')\}$\label{line:qlearn_value}
		
		\EndFor \label{line:qlearn_estim_value_end}
		
		\EndFor
		
%		\selectlanguage{french}%
	\end{algorithmic}
	
%	\selectlanguage{english}%
	\caption{RSQ2 \label{alg:Q_learn}}
\end{algorithm}

Although Algorithms \ref{alg:alg_lsvi} and \ref{alg:Q_learn} are inspired
by RSVI and RSQ of \citet{fei2020risk}, respectively, we note that the main novelty
of our algorithms lies in the bonus terms ($b_{h}$ in Algorithm
\ref{alg:alg_lsvi} and $b_{h,t}$ in Algorithm \ref{alg:Q_learn}),
which we call the \emph{doubly decaying} bonus. We discuss this new
bonus design in the following.

\subsection{Doubly decaying bonus}

Let us focus on $\beta > 0$  for this discussion. 
In optimism-based algorithms, the bonus term is used to enforce
the upper confidence bound in order to encourage sufficient exploration
in uncertain environments. It takes the form of a multiplier times
a factor that is inversely proportional to visit counts $\{N_h\}$. Our bonus
follows this structure and is given by 
\begin{equation}
b_{h}(s,a)\propto(e^{\beta(H-h+1)}-1)\sqrt{\frac{1}{N_{h}(s,a)}},\label{eq:bonus_ours}
\end{equation}
ignoring factors that do not vary in $(h,s,a)$. 
%Here, $b^k_h$ is the value of $b_h$ in episode $k$ and $N_{h}^{k}(s,a)$
%denotes the number of visits to state-action pair $(s,a)$ in step
%$h$ up until episode $k$. 
In Equation \eqref{eq:bonus_ours}, the
quantity $e^{\beta(H-h+1)}$ plays the role of the multiplier and
%$\sqrt{\frac{1}{N_{h}(s,a)}}$ 
$\sqrt{1/N_{h}(s,a)}$
is the factor that decreases in
the visit count.  While the component 
%$\sqrt{\frac{1}{N_{h}(s,a)}}$ 
$\sqrt{1/N_{h}(s,a)}$
is common in bonus terms, our new bonus is designed to shrink its
multiplier deterministically and exponentially across the horizon
steps, as $e^{\beta(H-h+1)}-1$ decreases from $e^{\beta H}-1$ in
step $h=1$ to $e^{\beta}-1$ in step $h=H$.  This is in sharp contrast
with the bonus terms typically found in risk-neutral RL algorithms,
where the multipliers are kept constant in  $h$ (usually as a
constant multiple of $H$). Furthermore, our bonus design is also in contrast
with that in RSVI and RSQ proposed by \citet{fei2020risk},
whose multiplier is $e^{\beta H}-1$ and kept fixed along the horizon.
Because $b_h$ decays both in the visit count $N_{h}(s,a)$
(across episodes) and the multiplier $e^{\beta(H-h+1)}-1$ (across
the horizon), we name it as \emph{doubly decaying} bonus. We
remark that this is a novel feature of Algorithms \ref{alg:alg_lsvi}
and \ref{alg:Q_learn}, compared to RSVI and RSQ. Let us discuss how
this new exploration mechanism is motivated from the error dynamics
\eqref{eq:exp_recur_sum}.

\paragraph{Motivation of exponential decay.}

%\noindent \textbf{Motivation from error dynamics \eqref{eq:exp_recur_sum}.}
From Equation \eqref{eq:exp_recur_sum}, we see that the error of
the iterate is bounded by the sum of weighted bonus terms, where
the weights are of the form $e^{\beta\cdot D_{h}}$ and $D_{h}\in[0,h-1]$.
Choosing $b_{h}\propto e^{\beta(H-h+1)}-1$ ensures that the weighted
bonus is on the order of $e^{\beta H}-1$ at maximum. On the other
hand, if we use the bonus as in \citet{fei2020risk}, which is proportional
to $e^{\beta H}-1$, then we would end up with a multiplicative factor
$e^{2\beta H}-1$ in regret, which is exponentially larger than $e^{\beta H}-1$. 
%\vspace{0.4em}
%
%\noindent \textbf{Motivation from estimation error of $e^{\beta\cdot Q_{h}}$.}
An alternative way to understanding the exponential decay of our bonus
is as follows. At step $h$, the estimated value function is $V_{h}\in[0,H-h+1]$,
which implies $e^{\beta\cdot V_{h}}\in[1,e^{\beta(H-h+1)}]$. The
iterate $G_{h}$ (of Algorithm \ref{alg:alg_lsvi} or \ref{alg:Q_learn})
is used to estimate $e^{\beta\cdot Q_{h}^{\pi}}$, with its estimation
error given by
\[
|e^{\beta\cdot Q_{h}^{\pi}}-G_{h}|\approx|e^{\beta\cdot Q_{h}^{\pi}}-\widehat{\P}_{h}e^{\beta(r_{h}+V_{h+1})}|\le e^{\beta(H-h+1)}-1,
\]
where $\widehat{\P}_{h}$ denotes an empirical average operator over
historical data in step $h$. Therefore, the estimation error of $G_{h}$
shrinks exponentially across the horizon. Since  bonus is used to compensate for and dominate the estimation
error, the minimal order of $b_{h}$ required is thus $e^{\beta(H-h+1)}-1$,
which is exactly the multiplier in Equation \eqref{eq:bonus_ours}.

%\vspace{0.4em}

As a passing note, we remark that the decaying multiplier is not necessary
in risk-neutral RL algorithms, since the estimation error therein
satisfies $|Q_{h}-\widehat{\P}_{h}(r_{h}+V_{h+1})|\le H-h+1$, which
is upper bounded by $H$ for all $h\in[H]$. This implies that it
suffices to simply set the bonus multiplier as a constant multiple
of $H$. In contrast, as we have explained, the estimation error of
our algorithms decays exponentially in step $h$, and an adaptive
and exponentially decaying bonus is needed.

\paragraph{Comparison with Bernstein-type bonus.}

We also compare our bonus in Equation \eqref{eq:bonus_ours} with
the Bernstein-type bonus commonly used to improve sample efficiency
of  risk-neutral RL algorithms \citep{azar2017minimax,jin2018q}.
The Bernstein-type bonus takes the form of 
\begin{equation}
\bar{b}_{h}(s,a)\propto\sqrt{\frac{H+\widehat{\Var}(V_{h+1})}{N_{h}(s,a)}}+o\left(\sqrt{\frac{1}{N_{h}(s,a)}}\right),\label{eq:bonus_bern}
\end{equation}
where $\widehat{\Var}(\cdot)$ denotes an empirical variance operator
over historical data  and $o(\cdot)$ denotes a vanishing
term as $N_{h}(s,a)\to\infty$. Our bonus in Equation \eqref{eq:bonus_ours}
is different from the Bernstein-type bonus in Equation \eqref{eq:bonus_bern}
in mechanism: our bonus features
the multiplier $e^{\beta(H-h+1)}-1$ which decays exponentially and
deterministically over $h\in[H]$, whereas the Bernstein-type bonus
uses $\sqrt{H+\widehat{\Var}(V_{h+1})}$ as the multiplier (ignoring the vanishing term).
The term $\widehat{\Var}(V_{h+1})$ depends on
the trajectory of the learning process. Therefore the multiplier is stochastic
and stays on the polynomial order
of $H$ across the horizon. Moreover, it is unclear how the multiplier behaves in terms of step $h$.

\section{Main results}

In this section, we present and discuss our main theoretical results
for Algorithms \ref{alg:alg_lsvi} and \ref{alg:Q_learn}.

\begin{thm}
	\label{thm:regret_V}For any $\delta\in(0,1]$, with probability
	at least $1-\delta$ there exists a universal constant $c>0$ (used
	in Algorithm \ref{alg:alg_lsvi}), such that the regret of Algorithm
	\ref{alg:alg_lsvi} is bounded by 
	\begin{align*}
	\reg(K) & \lesssim\frac{e^{|\beta| H}-1}{|\beta| H}\sqrt{H^{4}S^{2}AK\log^{2}(HSAK/\delta)}.
	\end{align*}
\end{thm}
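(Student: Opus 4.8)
The plan is to instantiate the abstract recursion \eqref{eq:exp_recur} and its unrolled form \eqref{eq:exp_recur_sum} for the concrete iterates of Algorithm \ref{alg:alg_lsvi}. I treat $\beta>0$; the case $\beta<0$ is symmetric after flipping the relevant inequalities and the $\min/\max$ in Lines \ref{line:lsvi_Q_update}--\ref{line:LSVI_value}. I write $\widehat{\P}_h^k$ for the empirical transition operator induced by the visit counts at the start of episode $k$, so that the weight of Line \ref{line:LSVI_interm_value} reads $w_h(s,a)=\widehat{\P}_h^k\,e^{\beta(r_h+V_{h+1})}(s,a)$, and I abbreviate $r_h^k\coloneqq r_h(s_h^k,a_h^k)$. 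The first and technically most delicate step is the concentration backbone: with probability at least $1-\delta$, uniformly over all $(h,s,a)$ and all episodes, $\|\widehat{\P}_h^k(\cdot\mid s,a)-P_h(\cdot\mid s,a)\|_1\lesssim\sqrt{S\log(HSAK/\delta)/N_h(s,a)}$. Since any $e^{\beta(r_h+V)}$ with $V\in[0,H-h]$ lies in $[1,e^{\beta(H-h+1)}]$ and hence has oscillation at most $e^{\beta(H-h+1)}-1$, Hölder's inequality gives $|(\widehat{\P}_h^k-P_h)e^{\beta(r_h+V)}(s,a)|\lesssim(e^{\beta(H-h+1)}-1)\sqrt{S\log(HSAK/\delta)/N_h(s,a)}$ for \emph{any}, possibly data-dependent, $V$ in this range. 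This is exactly the order of the doubly decaying bonus $b_h$ of Line \ref{line:lsvi_bonus_def}, so on this event $b_h$ dominates the empirical-to-population deviation. This is where both the $\sqrt{S}$ factor and the precise multiplier $e^{\beta(H-h+1)}-1$ are pinned down, and I regard it as the main obstacle.

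Next I would prove optimism by backward induction on $h$, namely $G_h(s,a)\ge e^{\beta Q_h^{*}(s,a)}$, whence $e^{\beta V_h^k}=\max_a G_h\ge e^{\beta V_h^{*}}$ so that $V_h^k\ge V_h^{*}$ and in particular $\reg(K)\le\sum_k[V_1^k(s_1^k)-V_1^{\pi^k}(s_1^k)]$. The base case $h=H+1$ is trivial. For the step, $e^{\beta Q_h^{*}}=P_h e^{\beta(r_h+V_{h+1}^{*})}\le e^{\beta(H-h+1)}$ shows the cap in Line \ref{line:lsvi_Q_update} is harmless, and combining the hypothesis $e^{\beta V_{h+1}^k}\ge e^{\beta V_{h+1}^{*}}$ with the concentration bound yields $w_h+b_h\ge\widehat{\P}_h^k e^{\beta(r_h+V_{h+1}^{*})}+b_h\ge P_h e^{\beta(r_h+V_{h+1}^{*})}=e^{\beta Q_h^{*}}$.

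I would then derive the recursion \eqref{eq:exp_recur} along the realized trajectory. With the greedy action $a_h^k$, I have $e^{\beta V_h^k(s_h^k)}-e^{\beta V_h^{\pi^k}(s_h^k)}=G_h(s_h^k,a_h^k)-e^{\beta Q_h^{\pi^k}(s_h^k,a_h^k)}$. Bounding $G_h\le w_h+b_h$, using the exponential Bellman equation \eqref{eq:exp_bellman} for $e^{\beta Q_h^{\pi^k}}=P_h e^{\beta(r_h+V_{h+1}^{\pi^k})}$, and inserting $\pm P_h e^{\beta(r_h+V_{h+1}^k)}$, the term $(\widehat{\P}_h^k-P_h)e^{\beta(r_h+V_{h+1}^k)}$ is absorbed into a constant multiple of $b_h$ by the concentration step, while the remaining population term factors as $e^{\beta r_h^k}P_h(e^{\beta V_{h+1}^k}-e^{\beta V_{h+1}^{\pi^k}})(s_h^k,a_h^k)$. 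Replacing the conditional expectation by its realization at $s_{h+1}^k$ introduces the martingale difference $m_h^k$, giving \eqref{eq:exp_recur} with $b_h^k\asymp b_h(s_h^k,a_h^k)$. Unrolling from $h=1$ to $H$ with $V_{H+1}=0$ produces \eqref{eq:exp_recur_sum}, i.e.\ the weighted sum $\sum_h e^{\beta D_h^k}(b_h^k+m_h^k)$ with $D_h^k=\sum_{i<h}r_i^k\le h-1$.

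Finally I would collect terms, where the crucial cancellation is $e^{\beta D_h^k}(e^{\beta(H-h+1)}-1)\le e^{\beta(h-1)}(e^{\beta(H-h+1)}-1)=e^{\beta H}-e^{\beta(h-1)}\le e^{\beta H}-1$, so every weighted bonus is uniformly bounded by a multiple of $(e^{\beta H}-1)\sqrt{S\log(HSAK/\delta)/N_h}$; this is exactly where the doubly decaying design defeats the $e^{\beta H^2}$ blow-up of \eqref{eq:exp_recur_sum_subopt}. Summing via the pigeonhole bound $\sum_k\sqrt{1/N_h(s_h^k,a_h^k)}\lesssim\sqrt{SAK}$ over $h\in[H]$ gives $\sum_{k,h}e^{\beta D_h^k}b_h^k\lesssim(e^{\beta H}-1)HS\sqrt{AK\log(HSAK/\delta)}$, and the weighted martingale term is handled by Azuma--Hoeffding with increments $\lesssim e^{\beta H}-1$ (after the same cancellation), contributing a lower-order $\sqrt{H}$-type factor. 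Converting back through the localized linearization $\reg(K)\le\tfrac{1}{\beta}\sum_k(e^{\beta V_1^k}-e^{\beta V_1^{\pi^k}})$, valid since $e^{\beta V_1^{\pi^k}}\ge 1$ implies $\tfrac1\beta\log\tfrac{x}{y}\le\tfrac1\beta(x-y)$, multiplies by $1/\beta$ and rearranges to $\tfrac{e^{\beta H}-1}{\beta H}\sqrt{H^4S^2AK\log^2(HSAK/\delta)}$. The first-visit episodes with $N_h=0$ contribute only an $O(HSA\,e^{\beta H})$ remainder that is dominated.
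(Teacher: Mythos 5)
Your proposal is correct and follows essentially the same route as the paper's proof: optimism via backward induction on $h$, the multiplicative recursion $\delta_h^k\le 2b_h^k+e^{\beta r_h^k}\delta_{h+1}^k+\zeta_{h+1}^k$ unrolled with the key cancellation $e^{\beta(h-1)}(e^{\beta(H-h+1)}-1)\le e^{\beta H}-1$, then pigeonhole plus Azuma--Hoeffding and the localized linearization of $\log$ (Lemma \ref{lem:reg_decomp}). The only, immaterial, difference is the concentration backbone: you derive uniformity over data-dependent $V_{h+1}$ from an $\ell_1$ deviation bound on the empirical kernel combined with H\"older's inequality, whereas the paper invokes a uniform concentration over the value-function class $\bar{\cV}_{h+1}$ (Lemma \ref{lem:v_unif_concen}); both yield the same $(e^{\beta(H-h+1)}-1)\sqrt{S\iota/N_h}$ rate and hence the same bonus and final bound.
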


\begin{thm}
	\label{thm:regret_Q_learn}For any $\delta\in(0,1]$, with probability
	at least $1-\delta$ and when $K$ is sufficiently large, there exists
	a universal constant $c>0$ (used in Algorithm \ref{alg:Q_learn})
	such that the regret of Algorithm \ref{alg:Q_learn} obeys
	\begin{align*}
	\reg(K) & \lesssim\frac{e^{|\beta| H}-1}{|\beta| H}\sqrt{H^{3}SAK\log(HSAK/\delta)}.
	\end{align*}
\end{thm}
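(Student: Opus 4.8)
The plan is to adapt the model-free Q-learning regret analysis of \citet{jin2018q} to the exponential domain dictated by the exponential Bellman optimality equation \eqref{eq:exp_bellman_opt}, treating the iterate $G_h$ of Algorithm \ref{alg:Q_learn} throughout as an estimator of $e^{\beta\cdot Q_h^{*}}$ rather than of $Q_h^{*}$ itself. I will carry out the argument for $\beta>0$; the case $\beta<0$ follows by the symmetric version in which the projection in Line \ref{line:qlearn_Q_update}, the direction of optimism, and the relevant monotonicities are all reversed, and $|\beta|$ replaces $\beta$. The first step is to unroll the exponential moving average \eqref{eq:qlearn_EMA}. Writing $\alpha_t^0=\prod_{j=1}^t(1-\alpha_j)$ and $\alpha_t^i=\alpha_i\prod_{j=i+1}^t(1-\alpha_j)$, and letting $t=N_h(s,a)$ and $k_1<\cdots<k_t$ be the earlier episodes that visited $(s,a)$ at step $h$, the update in Lines \ref{line:qlearn_interm_value}--\ref{line:qlearn_Q_update} gives
\[
G_h^k(s,a)=\alpha_t^0\, e^{\beta(H-h+1)}+\sum_{i=1}^t\alpha_t^i\big(e^{\beta[r_h(s,a)+V_{h+1}^{k_i}(s_{h+1}^{k_i})]}+b_{h,i}\big),
\]
modulo the projection, which only moves $G_h^k$ toward the admissible range $[1,e^{\beta(H-h+1)}]$ that contains $e^{\beta Q_h^{*}}$ and hence never weakens the bounds below. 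I would then record the standard learning-rate identities for $\alpha_u=\tfrac{H+1}{H+u}$: namely $\sum_{i=1}^t\alpha_t^i=1$ and $\alpha_t^0=0$ for $t\ge1$, $\sum_{i}(\alpha_t^i)^2\le 2H/t$, $\sum_{t\ge i}\alpha_t^i=1+\tfrac1H$, and $\sum_{i=1}^t\alpha_t^i/\sqrt i\le 2/\sqrt t$.

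The second step is \emph{optimism}. Subtracting $e^{\beta Q_h^{*}}(s,a)=(\P_h\,e^{\beta(r_h+V_{h+1}^{*})})(s,a)$, with $(\P_h f)(s,a)\coloneqq\E_{s'\sim P_h(\cdot\mid s,a)}[f(s')]$, and using $\sum_i\alpha_t^i=1$, the gap $G_h^k-e^{\beta Q_h^{*}}$ splits into a nonnegative propagation term $e^{\beta r_h}\sum_i\alpha_t^i(e^{\beta V_{h+1}^{k_i}(s_{h+1}^{k_i})}-e^{\beta V_{h+1}^{*}(s_{h+1}^{k_i})})$, which is $\ge 0$ under the downward induction hypothesis $V_{h+1}^{k}\ge V_{h+1}^{*}$; a weighted martingale $e^{\beta r_h}\sum_i\alpha_t^i(e^{\beta V_{h+1}^{*}(s_{h+1}^{k_i})}-\P_h e^{\beta V_{h+1}^{*}})$; and the accumulated bonus $\sum_i\alpha_t^i b_{h,i}$. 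Since $e^{\beta(r_h+V_{h+1}^{*}(\cdot))}$ takes values in $[1,e^{\beta(H-h+1)}]$, an interval of width exactly $e^{\beta(H-h+1)}-1$, a weighted Azuma--Hoeffding bound controls the martingale by $(e^{\beta(H-h+1)}-1)\sqrt{2H\log(HSAK/\delta)/t}$ with high probability, which the accumulated bonus dominates once $c$ is large (via $\sum_i\alpha_t^i/\sqrt i\le 2/\sqrt t$). This yields $G_h^k\ge e^{\beta Q_h^{*}}$ for all $(h,k,s,a)$, hence $V_1^k(s_1^k)\ge V_1^{*}(s_1^k)$, so that $\reg(K)\le\sum_k (V_1^k-V_1^{\pi^k})(s_1^k)\le\tfrac1\beta\sum_k(e^{\beta V_1^k}-e^{\beta V_1^{\pi^k}})(s_1^k)$ by the localized linearization already established for \eqref{eq:exp_recur_sum}.

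The third step is the \emph{recursion}. Comparing $G_h^k$ against $e^{\beta Q_h^{\pi^k}}=e^{\beta r_h}\P_h e^{\beta V_{h+1}^{\pi^k}}$ at the visited pair $(s_h^k,a_h^k)$ and writing $\tilde\delta_h^k\coloneqq(e^{\beta V_h^k}-e^{\beta V_h^{\pi^k}})(s_h^k)$, the same decomposition reproduces the exponential recursion \eqref{eq:exp_recur}, but now with the crucial multiplicative factor $e^{\beta r_h^k}\le e^{\beta}$ (since $r_h^k\in[0,1]$) in front of the next-step error, in place of the factor $e^{\beta H}$ of the earlier analysis. Summing over episodes and re-indexing the per-visit sums through $\sum_{t\ge i}\alpha_t^i=1+\tfrac1H$ converts this into a cross-episode recursion of the schematic form $\sum_k\tilde\delta_h^k\le(1+\tfrac1H)\,e^{\beta}\sum_k\tilde\delta_{h+1}^k+B_h+M_h$, where $B_h$ and $M_h$ collect the weighted bonus and martingale contributions. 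Unrolling from $h=1$ to $H$, the per-step prefactor $(1+\tfrac1H)e^{\beta}$ telescopes into the reward weight $e^{\beta D_h^k}$ of \eqref{eq:exp_recur_sum}, while $(1+\tfrac1H)^{H}\le e$ stays bounded, so the only exponential dependence enters through $e^{\beta D_h^k}(e^{\beta(H-h+1)}-1)\le e^{\beta H}-1$.

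The final step is the \emph{summation}, and this is where I expect the bookkeeping to be most delicate. The accumulated bonus $\sum_i\alpha_t^i b_{h,i}\lesssim (e^{\beta(H-h+1)}-1)\sqrt{H\log(HSAK/\delta)/N_h^k}$ must be aggregated using the pigeonhole bound $\sum_{h,k}1/\sqrt{N_h^k}\lesssim H\sqrt{SAK}$ together with the geometric summation of the deterministically decaying multiplier $e^{\beta(H-h+1)}-1$ across $h$; reconciling this geometric decay with the reward weight $e^{\beta D_h^k}$ and the Cauchy--Schwarz over visit counts is precisely the point at which the doubly decaying design must pay off, and the step I expect to require the most care in order to land on $\tfrac{e^{|\beta|H}-1}{|\beta|H}\sqrt{H^3SAK\log(HSAK/\delta)}$ rather than a bound looser by a factor of $H$ or of $e^{|\beta|H}$. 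The aggregate martingale $\sum_{h,k}M_h$ is handled by a single Azuma inequality with per-term range of order $e^{\beta(H-h+1)}-1$ and contributes a term of at most order $\tfrac{e^{\beta H}-1}{\beta}\sqrt{HK\log(\cdot)}$, which is dominated by the bonus term. Finally, the first-visit residuals (where $\alpha_t^0\neq0$) and other $K$-independent terms are absorbed into the leading $\sqrt K$ contribution under the hypothesis that $K$ is sufficiently large, yielding the stated bound; the same template, with the simple-average weights and an $S$-dependent uniform concentration replacing the learning-rate machinery, recovers Theorem \ref{thm:regret_V}.
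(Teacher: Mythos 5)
Your proposal follows essentially the same route as the paper's proof: unrolling the exponential moving average into the $\alpha_t^i$-weighted form, establishing optimism ($G_h^k \ge e^{\beta Q_h^*}$) by weighted Azuma--Hoeffding plus backward induction with the bonus dominating the martingale, deriving the cross-episode recursion $\sum_k\delta_h^k \le (1+\tfrac1H)e^{\beta}\sum_k\delta_{h+1}^k + B_h + M_h$ via the regrouping identity $\sum_{t\ge i}\alpha_t^i = 1+\tfrac1H$, and closing with pigeonhole, Azuma, and the localized linearization of the logarithm. The delicate bookkeeping you flag at the end (pairing the unrolled weight $e^{\beta(h-1)}$ with the decaying multiplier $e^{\beta(H-h+1)}-1$ so their product stays at $e^{\beta H}-1$) is exactly how the paper resolves it, so the plan is sound and matches the paper's argument.
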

The proof of the two theorems are provided in Appendices \ref{sec:proof_regret_V}
and \ref{sec:proof_regret_Q_learn}, respectively. Note that the above
results generalize those in the literature of risk-neutral RL: when
$\beta\to0$, we recover the same regret bounds of LSVI in \citet{jin2019provably}
and Q-learning in \citet{jin2018q}. 

Let us discuss the connections between our results and those in \citet{fei2020risk}.
The work of \citet{fei2020risk} proposes two algorithms, RSVI and
RSQ, that attain the regret bound 
\begin{equation}
\reg(K)\lesssim e^{|\beta| H^{2}}\cdot\frac{e^{|\beta| H}-1}{|\beta| H}\sqrt{\text{poly}(H)\cdot K},\label{eq:upper_old_informal}
\end{equation}
and a lower bound incurred by any algorithm 
\begin{equation}
\reg(K)\gtrsim\frac{e^{|\beta| H'}-1}{|\beta| H}\sqrt{\text{poly}(H)\cdot K},\label{eq:lower_informal}
\end{equation}
where $H'$ is a linear function in $H$; for simplicity of presentation, we exclude polynomial dependencies on other parameters and logarithmic
factors from the two bounds. In particular, the proof
of the lower bound is based on reducing an hard instance of MDP to
a multi-armed bandit. It is a priori unclear whether the extra exponential
factor $e^{|\beta| H^{2}}$ in the upper bound \eqref{eq:upper_old_informal}
is fundamental in the MDP setting, or is due to suboptimal analysis
or algorithmic design. We would like to mention that although one
trivial way of avoiding the $e^{|\beta| H^{2}}$ factor in the upper
bound \eqref{eq:upper_old_informal} is to use a sufficiently small
$|\beta|$ in the algorithms of \citet{fei2020risk} (e.g., $|\beta|\le\frac{1}{H^{2}}$
so that $e^{|\beta| H^{2}}\lesssim1$), such a small $|\beta|$ defeats
the very purpose of have an appropriate degree of risk-sensitivity
in the algorithms. Hence, an answer for \emph{all }$\beta\neq0$
would be desirable. 

In view of Theorems \ref{thm:regret_V} and \ref{thm:regret_Q_learn},
we see that our Algorithms \ref{alg:alg_lsvi} and \ref{alg:Q_learn}
achieve regret bounds that are exponentially sharper than those of
RSVI and RSQ. In particular, our results eliminate the $e^{|\beta| H^{2}}$
factor from Equation \eqref{eq:upper_old_informal} thanks to the novel analysis and doubly
decaying bonus in our algorithms, which are inspired by the exponential Bellman equation \eqref{eq:exp_bellman}.  As a result, our
bounds significantly narrow the gap between upper bounds and
the lower bound \eqref{eq:lower_informal}.

\section*{Acknowledgments}
	Z.$\ $Yang acknowledges Simons Institute (Theory of Reinforcement Learning). 
	Y.\ Chen is partially supported by NSF grant CCF-1704828 and CAREER Award CCF-2047910.
	Z.$\ $Wang acknowledges National Science Foundation (Awards 2048075, 2008827, 2015568, 1934931), Simons Institute (Theory of Reinforcement Learning), Amazon, J.P. Morgan, and Two Sigma for their supports.
%\end{ack}

\nocite{}
\bibliographystyle{plainnat}
\bibliography{references}

%%%%%%%%%%%%%%%%%%%%%%%%%%%%%%%%%%%%%%%%%%%%%%%%%%%%%%%%%%%%

\newpage

\appendix
\appendixpage

\section{Connections to distribution RL  \label{sec:dist_RL}}

In this appendix, we establish connections between risk-sensitive RL and distributional RL via the lens of the exponential Bellman equation. 

Distributional RL has been studied in the line of works \citep{bellemare2017distributional,dabney2018distributional,mavrin2019distributional,morimura2010nonparametric,morimura2012parametric,rowland2018analysis, farahmand2019value, yang2019fully}.
The framework of distributional RL is built upon the following key equation, namely
the distributional Bellman equation:
\begin{equation}
\forall h\in[H],\quad Z_{h}^{\pi}(s,a)\overset{d}{=}R_{h}(s,a)+Z_{h+1}^{\pi}(X',U'),\label{eq:dist_bellman}
\end{equation}
for a fixed policy $\pi$, where $Z_{H+1}^{\pi}(\cdot,\cdot) \coloneqq 0$, $X'\sim P_{h}(\cdot\ |\ s,a)$, $U'\sim\pi(\cdot\ |\ X')$
and $R_{h}$ is the reward distribution in step $h$. Here, we use
$\overset{d}{=}$ to denote equality in distribution. It can be seen
that $Z_{h}^{\pi}(s,a)$ is the distribution of cumulative rewards
under policy $\pi$ at step $h$, when the state and action $(s,a)$
are visited in step $h$. Based on Equation \eqref{eq:dist_bellman},
a distributional Bellman optimality operator $\T_h$ is given by 
\begin{equation}
[\T_h Z](s,a)\overset{d}{\coloneqq}R_{h}(s,a)+Z_{h+1}(X',\argmax_{a'\in\cA}\E[Z_{h+1}(X',a')]),\label{eq:dist_bellman_opt}
\end{equation}
where again $X'\sim P_{h}(\cdot\ |\ s,a)$. Note that in Equation
\eqref{eq:dist_bellman_opt}, the optimal action is greedy with respect
to the \emph{expectation} of the distribution $Z_{h+1}$. 
Most existing distributional
RL algorithms work with distribution estimates such
as quantiles \citep{dabney2018distributional,dabney2018implicit}
or empirical distribution functions \citep{bellemare2017distributional,rowland2018analysis}.

Now recall the exponential Bellman equation \eqref{eq:exp_bellman}, which takes the form
\begin{equation}
\forall h\in[H],\quad e^{\mu\cdot Q_{h}^{\pi}(s,a)}= \E_{X'}[e^{\mu  (r_{h}(s,a) + V_{h+1}^{\pi}(X') )}],\label{eq:exp_bellman_dist}
\end{equation}
%\begin{equation}
%\forall(h,\mu)\in[H]\times\real,\quad e^{\mu\cdot Q_{h}^{\pi}(s,a)}=e^{\mu\cdot r_{h}(s,a)}\E_{X'}[e^{\mu\cdot V_{h+1}^{\pi}(X')}],\label{eq:exp_bellman_dist}
%\end{equation}
for any fixed $\mu \in \real$, where $V_{H+1}^{\pi}(\cdot) \coloneqq 0$, $X'\sim P_{h}(\cdot\ |\ s,a)$ and $r_h$ is the deterministic reward function by our assumption. Given the definitions \eqref{eq:value_func} and \eqref{eq:action_value_func} with $\beta$ replaced by $\mu$, we note that both $Q_{h}^{\pi}$ and $V_{h+1}^{\pi}$ in the above equation depend on the value of $\mu$ (which we omit for simplicity of notations). Then by the definition of $Q_{h}^{\pi}$ in Equation \eqref{eq:action_value_func}, one sees that $\{e^{\mu \cdot Q_{h}^{\pi}}: \mu\in\real\}$ represents the MGF of the cumulative rewards at step $h$ when policy $\pi$ is executed. Hence, the exponential Bellman equation for risk-sensitive RL provides an instantiation of Equation \eqref{eq:dist_bellman} through the MGF of rewards.

\section{Proof of Theorem \ref{thm:regret_V} \label{sec:proof_regret_V}}

First, we set some notations and definitions. Define $\iota\coloneqq\log(2HSAK/\delta)$
for a given $\delta\in(0,1]$. We adopt the shorthands $\indic_{h}^{\tau}(s,a)\coloneqq\indic\{(s_{h}^{\tau},a_{h}^{\tau})=(s,a)\}$
and  $r_{h}^{\tau}\coloneqq r_{h}(s_{h}^{\tau},a_{h}^{\tau})$ for
$(\tau,h)\in[K]\times[H]$. We let $N_{h}^{k}(s,a)$ be the visit
count of $(h,s,a)$ at the beginning of episode $k$. We denote by
$V_{h}^{k}$, $G_{h}^{k}$, $b_{h}^{k}$ the values of $V_{h}$, $G_{h}$,
$b_{h}$ after the updates in step $h$ of episode $k$, respectively.
We also set $Q_{h}^{k}=\frac{1}{\beta} \log\{G_{h}^{k}\}$.

For the time being we consider $\beta>0$. For $h\in[H]$, we define
\begin{align*}
\delta_{h}^{k} & \coloneqq e^{\beta V_{h}^{k}(s_{h}^{k})}-e^{\beta V_{h}^{\pi^{k}}(s_{h}^{k})},\\
\zeta_{h+1}^{k} & \coloneqq[P_{h}(e^{\beta[r_{h}(s_{h}^{k},a_{h}^{k})+V_{h+1}^{k}(s')]}-e^{\beta[r_{h}(s_{h}^{k},a_{h}^{k})+V_{h+1}^{\pi^{k}}(s')]})](s_{h}^{k},a_{h}^{k})-e^{\beta r_{h}(s_{h}^{k},a_{h}^{k})}\delta_{h+1}^{k},
\end{align*}
where $[P_{h}f](s,a)\coloneqq\E_{s'\sim P_{h}(\cdot|s,a)}[f(s')]$
for any $f:\cS\to\real$ and $(s,a)\in\cS\times\cA$. It can be seen
that $b_{h}^{k}$ in Algorithm \ref{alg:alg_lsvi} can be equivalently
defined as 
\begin{equation}
b_{h}^{k}\coloneqq c(e^{\beta(H-h+1)}-1)\sqrt{\frac{S\iota}{\max\{1,N_{h}^{k}(s_{h}^{k},a_{h}^{k})\}}},\label{eq:v_gamma}
\end{equation}
where $c$ is the universal constant from Lemma \ref{lem:v_G1_bound}. For
any $(k,h)\in[K]\times[H]$, we have 
\begin{align}
\delta_{h}^{k} & \overset{(i)}{=}(e^{\beta\cdot Q_{h}^{k}}-e^{\beta\cdot Q_{h}^{\pi^{k}}})(s_{h}^{k},a_{h}^{k})\nonumber \\
& \overset{(ii)}{=}\left[\min\{e^{\beta(H-h+1)},(w_{h}^{k}+b_{h}^{k})(s_{h}^{k},a_{h}^{k})\}-\E_{s'\sim P_{h}(\cdot\,|\,s_{h}^{k},a_{h}^{k})}e^{\beta[r_{h}(s_{h}^{k},a_{h}^{k})+V_{h+1}^{k}(s')]}\right]\nonumber \\
& \quad+\left[\E_{s'\sim P_{h}(\cdot\,|\,s_{h}^{k},a_{h}^{k})}e^{\beta[r_{h}(s_{h}^{k},a_{h}^{k})+V_{h+1}^{k}(s')]}-\E_{s'\sim P_{h}(\cdot\,|\,s_{h}^{k},a_{h}^{k})}e^{\beta[r_{h}(s_{h}^{k},a_{h}^{k})+V_{h+1}^{\pi^{k}}(s')]}\right]\nonumber \\
& \overset{(iii)}{\le}2b_{h}^{k}+[P_{h}(e^{\beta[r_{h}(s_{h}^{k},a_{h}^{k})+V_{h+1}^{k}(s')]}-e^{\beta[r_{h}(s_{h}^{k},a_{h}^{k})+V_{h+1}^{\pi^{k}}(s')]})](s_{h}^{k},a_{h}^{k})\nonumber \\
& =2b_{h}^{k}+e^{\beta\cdot r_{h}(s_{h}^{k},a_{h}^{k})}\delta_{h+1}^{k}+\zeta_{h+1}^{k}\label{eq:lsvi_delta_recursion}
\end{align}
In the above equation, step $(i)$ holds by the construction of Algorithm
\ref{alg:alg_lsvi} and the definition of $V_{h}^{\pi^{k}}$ in Equation
\eqref{eq:bellman}; step $(ii)$ holds by Equations \eqref{eq:vi_q2_simp}
and \eqref{eq:vi_q3_simp}. step $(iii)$ holds on the event of Lemma
\ref{lem:v_G1_bound};  the last step follows from Lemma \ref{lem:v_Vk_dom}. 

Using the fact that $V_{H+1}^{k}(s)=V_{H+1}^{\pi^{k}}(s)=0$ and
that $r_{h}(\cdot,\cdot)\in[0,1]$, we can expand the recursion in
Equation \eqref{eq:lsvi_delta_recursion} and get 
\begin{align*}
\delta_{1}^{k} & \le\sum_{h\in[H]}e^{\beta(h-1)}\zeta_{h+1}^{k}+2\sum_{h\in[H]}e^{\beta(h-1)}b_{h}^{k}.
\end{align*}
Summing the above display over $k\in[K]$ gives 
\begin{equation}
\sum_{k\in[K]}\delta_{1}^{k}\le\sum_{k\in[K]}\sum_{h\in[H]}e^{\beta(h-1)}\zeta_{h+1}^{k}+2\sum_{k\in[K]}\sum_{h\in[H]}e^{\beta(h-1)}b_{h}^{k}\label{eq:lsvi_delta1}
\end{equation}

Let us now control the two terms in Equation \eqref{eq:lsvi_delta1}.
Note that $\{\zeta_{h+1}^{k}\}$ is a martingale difference sequence
satisfying $\left|\zeta_{h}^{k}\right|\le2H$ for all $(k,h)\in[K]\times[H]$.
By the Azuma-Hoeffding inequality, we have for any $t>0$, 
\[
\P\left(\sum_{k\in[K]}\sum_{h\in[H]}e^{\beta(h-1)}\zeta_{h+1}^{k}\ge t\right)\le\exp\left(-\frac{t^{2}}{2HK(e^{\beta H}-1)^{2}}\right).
\]
Hence, with probability $1-\delta/2$, there holds 
\begin{equation}
\sum_{k\in[K]}\sum_{h\in[H]}e^{\beta(h-1)}\zeta_{h+1}^{k}\le(e^{\beta H}-1)\sqrt{2HK\log(2/\delta)}\le(e^{\beta H}-1)\sqrt{2HK\iota},\label{eq:lsvi_mtg_bound}
\end{equation}
where $\iota=\log(2HSAK/\delta)$. For the second term in Equation
\eqref{eq:lsvi_delta1}, recall the definition of $b_{h}^{k}$ in
Equation \eqref{eq:v_gamma}, and we can derive 
\begin{align}
\sum_{k\in[K]}\sum_{h\in[H]}e^{\beta(h-1)}b_{h}^{k} & \le\sum_{k\in[K]}\sum_{h\in[H]}c(e^{\beta H}-1)\sqrt{S\iota}\sqrt{\frac{1}{\max\{1,N_{h}^{k}(s_{h}^{k},a_{h}^{k})\}}}\nonumber \\
& =c(e^{\beta H}-1)\sqrt{S\iota}\sum_{k\in[K]}\sum_{h\in[H]}\sqrt{\frac{1}{\max\{1,N_{h}^{k}(s_{h}^{k},a_{h}^{k})\}}}\nonumber \\
& \overset{(i)}{\le}c(e^{\beta H}-1)\sqrt{S\iota}\sum_{h\in[H]}\sqrt{K}\sqrt{\sum_{k\in[K]}\frac{1}{\max\{1,N_{h}^{k}(s_{h}^{k},a_{h}^{k})\}}}\nonumber \\
& \le c(e^{\beta H}-1)\sqrt{S\iota}\sqrt{2H^{2}SAK\iota},\label{eq:lsvi_sum_quad_Lambda_bound}
\end{align}
where step $(i)$ follows the Cauchy-Schwarz inequality and the last
step holds by the pigeonhole principle. Plugging Equations \eqref{eq:lsvi_mtg_bound}
and \eqref{eq:lsvi_sum_quad_Lambda_bound} back to Equation \eqref{eq:lsvi_delta1}
yields 
\begin{align*}
\sum_{k\in[K]}\delta_{1}^{k} & \le(e^{\beta H}-1)\sqrt{2HK\iota}+2c(e^{\beta H}-1)\sqrt{2H^{2}S^{2}AK\iota^{2}}\\
& \lesssim(e^{\beta H}-1)\sqrt{2H^{2}S^{2}AK\iota^{2}},
\end{align*}
The proof for $\beta>0$ is completed by invoking Lemma \ref{lem:reg_decomp}
on the event of Lemma \ref{lem:v_Vk_dom}. We note that the proof
of $\beta<0$ follows a similar procedure and is therefore omitted.

\subsection{Auxiliary lemmas}

Let us fix a pair $(s,a)\in\cS\times\cA$. Recall from Algorithm \ref{alg:alg_lsvi}
that 
\begin{equation}
w_{h}^{k}(s,a)=\frac{1}{N_{h}^{k}(s,a)}\sum_{\tau\in[k-1]}\indic_{h}^{\tau}(s,a)\left[e^{\beta[r_{h}^{\tau}+V_{h+1}^{k}(s_{h+1}^{\tau})]}\right].\label{eq:lsvi_weights_equiv}
\end{equation}
If $N_{h}^{k}(s,a)\ge1$, we define 
\begin{align*}
q_{h,1}^{k,+}(s,a) & \coloneqq\begin{cases}
\w_{h}^{k}(s,a)+b_{h}^{k}(s,a), & \condrew,\\
\w_{h}^{k}(s,a)-b_{h}^{k}(s,a), & \condcost.
\end{cases}\\
q_{h,1}^{k}(s,a) & \coloneqq\begin{cases}
\min\{q_{h,1}^{k,+}(s,a),e^{\beta(H-h+1)}\}, & \condrew,\\
\max\{q_{h,1}^{k,+}(s,a),e^{\beta(H-h+1)}\}, & \condcost,
\end{cases}
\end{align*}
and if $N_{h}^{k}(s,a)=0$, we let 
\[
q_{h,1}^{k,+}(s,a)=q_{h,1}^{k}(s,a)\coloneqq e^{\beta(H-h+1)}.
\]
Also define 
\[
q_{h,2}^{k}(s,a)\coloneqq\begin{cases}
\frac{1}{N_{h}^{k}(s,a)}\sum_{\tau\in[k-1]}\indic_{h}^{\tau}(s,a)\left[\E_{s'\sim P_{h}(\cdot\,|\,s_{h}^{\tau},a_{h}^{\tau})}e^{\beta[r_{h}^{\tau}+V_{h+1}^{k}(s')]}\right], & \text{if }N_{h}^{k}(s,a)\ge1,\\
e^{\beta(H-h+1)}\ \ \condrew;\qquad 1\ \ \condcost, & \text{if }N_{h}^{k}(s,a)=0,
\end{cases}
\]
and for any policy $\pi$, 
\[
q_{h,3}^{k,\pi}(s,a)\coloneqq\begin{cases}
\frac{1}{N_{h}^{k}(s,a)}\sum_{\tau\in[k-1]}\indic_{h}^{\tau}(s,a)\left[\E_{s'\sim P_{h}(\cdot\,|\,s_{h}^{\tau},a_{h}^{\tau})}e^{\beta[r_{h}^{\tau}+V_{h+1}^{\pi}(s')]}\right], & \text{if }N_{h}^{k}(s,a)\ge1,\\
e^{\beta \cdot Q_{h}^{\pi}(s,a)} & \text{if }N_{h}^{k}(s,a)=0.
\end{cases}
\]
It can be seen that 
\begin{equation}
q_{h,2}^{k}(s,a)=\E_{s'\sim P_{h}(\cdot\,|\,s,a)}e^{\beta[r_{h}(s,a)+V_{h+1}^{k}(s')]}\label{eq:vi_q2_simp}
\end{equation}
when $N_{h}^{k}(s,a)\ge1$, and 
\begin{equation}
q_{h,3}^{k,\pi}(s,a)=e^{\beta Q_{h}^{\pi}(s,a)}=\E_{s'\sim P_{h}(\cdot\,|\,s,a)}e^{\beta[r_{h}(s,a)+V_{h+1}^{\pi}(s')]}\label{eq:vi_q3_simp}
\end{equation}
for all $(k,h,s,a)\in[K]\times[H]\times\cS\times\cA$ by the exponential
Bellman equation \eqref{eq:exp_bellman}. We have that $\condrew$,
\begin{equation}
e^{\beta Q_{h}^{k}}-e^{\beta Q_{h}^{\pi}}=q_{h,1}^{k}-q_{h,3}^{k,\pi}=(q_{h,1}^{k}-q_{h,2}^{k})+(q_{h,2}^{k}-q_{h,3}^{k,\pi}),\label{eq:v_G1_G2_sum_pos}
\end{equation}
and $\condcost$, 
\begin{equation}
e^{\beta Q_{h}^{\pi}}-e^{\beta Q_{h}^{k}}=q_{h,3}^{k,\pi}-q_{h,1}^{k}=(q_{h,3}^{k,\pi}-q_{h,2}^{k})+(q_{h,2}^{k}-q_{h,1}^{k}).\label{eq:v_G1_G2_sum_neg}
\end{equation}

Let us state a uniform concentration result. 
\begin{lem}
	\label{lem:v_unif_concen}Define $\iota\coloneqq\log(2HSAK/\delta)$
	and 
	\[
	\bar{\cV}_{h+1}\coloneqq\left\{ \bar{V}_{h+1}:\cS\to\real\mid\forall s\in\cS,\ \bar{V}_{h+1}(s)\in[0,H-h]\right\} .
	\]
	For any $\delta\in(0,1]$, there extsts a universal constant $c_{0}>0$
	such that with probability $1-\delta$, we have 
	\begin{align*}
	& \quad\left|\frac{1}{N_{h}^{k}(s,a)}\sum_{\tau\in[k-1]}\indic_{h}^{\tau}(s,a)\left[e^{\beta[r_{h}^{\tau}+\bar{V}(s_{h+1}^{\tau})]}-\E_{s'\sim P_{h}(\cdot\,|\,s_{h}^{\tau},a_{h}^{\tau})}e^{\beta[r_{h}^{\tau}+\bar{V}(s')]}\right]\right|\\
	& \le c_{0}(e^{\beta(H-h+1)}-1)\sqrt{\frac{S\iota}{\max\{1,N_{h}^{k}(s,a)\}}}
	\end{align*}
	for all $\bar{V}\in\bar{\cV}_{h+1}$ and all $(k,h,s,a)\in[K]\times[H]\times\cS\times\cA$
	that satisfies $N_{h}^{k}(s,a)\ge1$.
\end{lem}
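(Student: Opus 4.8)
The plan is to reduce the supremum over the value-function class $\bar{\cV}_{h+1}$ to a single $\ell_1$ concentration bound for the empirical transition distribution, thereby decoupling the data-dependence of $\bar V$ from the randomness of the samples. Fix a tuple $(k,h,s,a)$ with $N_h^k(s,a)\ge 1$ and write $N\coloneqq N_h^k(s,a)$. On the event $\indic_h^\tau(s,a)=1$ we have $r_h^\tau=r_h(s,a)$ and the conditional law of $s_{h+1}^\tau$ is $P_h(\cdot\,|\,s,a)$, so, letting $\widehat P_h^k(\cdot\,|\,s,a)$ denote the empirical distribution of the successor states recorded over the $N$ visits to $(s,a)$ at step $h$ before episode $k$, the quantity inside the absolute value equals $(\widehat P_h^k-P_h)(f)$ with $f(s')\coloneqq e^{\beta[r_h(s,a)+\bar V(s')]}$.

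First I would exploit that $\widehat P_h^k$ and $P_h$ are both probability distributions, so $(\widehat P_h^k-P_h)(\mathbf 1)=0$ and hence $(\widehat P_h^k-P_h)(f)=(\widehat P_h^k-P_h)(f-1)$. Since $r_h(s,a)+\bar V(s')\in[0,H-h+1]$ for every $\bar V\in\bar{\cV}_{h+1}$, for $\beta>0$ we have $f(s')-1\in[0,e^{\beta(H-h+1)}-1]$, whence by H\"older's inequality
\[
\big|(\widehat P_h^k-P_h)(f)\big|\le \|\widehat P_h^k(\cdot\,|\,s,a)-P_h(\cdot\,|\,s,a)\|_1\cdot\big(e^{\beta(H-h+1)}-1\big).
\]
The crucial point is that the right-hand side no longer depends on $\bar V$: a single bound on the $\ell_1$ deviation of the empirical transition distribution controls the deviation \emph{simultaneously} for all $\bar V\in\bar{\cV}_{h+1}$, which is exactly the uniformity the lemma asserts.

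It then remains to show $\|\widehat P_h^k(\cdot\,|\,s,a)-P_h(\cdot\,|\,s,a)\|_1\lesssim\sqrt{S\iota/N}$ with probability $1-\delta$, uniformly in $(k,h,s,a)$. Conditioning on the (random) sequence of episodes at which $(s,a)$ is visited at step $h$, the corresponding successor states are i.i.d.\ draws from $P_h(\cdot\,|\,s,a)$ by the Markov property; thus for each fixed count $n$ the empirical distribution over the first $n$ visits is the empirical measure of $n$ i.i.d.\ samples on an $S$-element outcome space. A standard $\ell_1$ deviation bound for empirical distributions (e.g.\ the Weissman--McDiarmid inequality $\P(\|\widehat P-P\|_1\ge t)\le 2^{S}e^{-nt^2/2}$) gives $\|\widehat P-P\|_1\lesssim\sqrt{(S+\log(1/\delta'))/n}$, and the $2^{S}$ term is precisely the source of the $\sqrt S$ factor. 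Taking a union bound over all $n\in[K]$ and all $(s,a,h)\in\cS\times\cA\times[H]$, so that $\log(1/\delta')$ is absorbed into $\iota=\log(2HSAK/\delta)$, and then evaluating at the realized count $n=N$ yields the claim.

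The main obstacle is the interaction between the \emph{random, adaptively determined} visit count $N_h^k(s,a)$ and the requirement of uniformity over all tuples and all $\bar V$ at once. The $\bar V$-uniformity is dispatched cleanly by the centering-plus-H\"older step above, which is what lets us avoid a covering argument over $\bar{\cV}_{h+1}$ and its awkward, exponentially large Lipschitz constant of order $\beta e^{\beta(H-h+1)}$; the count-uniformity is handled by conditioning on the visit times, so that the successor states become genuinely i.i.d., and then union-bounding over the at most $K$ possible values of $N$. The case $\beta<0$ is identical after replacing $e^{\beta(H-h+1)}-1$ by $|e^{\beta(H-h+1)}-1|$, since then $f-1\in[e^{\beta(H-h+1)}-1,0]$ and $\|f-1\|_\infty=|e^{\beta(H-h+1)}-1|$.
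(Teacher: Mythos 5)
Your proof is correct. The paper itself gives no argument here---it disposes of the lemma in one line by citing Lemma~6 of \citet{fei2020risk}---so the relevant comparison is with that cited proof, which establishes uniformity over the value class via a covering/union-bound argument over $\bar{\cV}_{h+1}$ (this is where the $\sqrt{S}$ enters there, as the metric entropy of a class of functions on an $S$-element state space). Your route is a clean alternative: centering by $\mathbf{1}$ and applying H\"older reduces the whole supremum over $\bar V$ to a single $\ell_{1}$ deviation bound $\|\widehat P_h^k(\cdot\,|\,s,a)-P_h(\cdot\,|\,s,a)\|_{1}\lesssim\sqrt{S\iota/N}$, with the $\sqrt{S}$ now coming from the Weissman-type inequality rather than from a net; the multiplier $|e^{\beta(H-h+1)}-1|$ falls out of $\|f-\mathbf{1}\|_{\infty}$ exactly as the lemma requires, and the uniformity in $\bar V$ is free. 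You also correctly handle the two standard technical points (conditioning on visit times so the successors are i.i.d., and union-bounding over the at most $K$ possible values of the count to deal with the adaptive $N_h^k(s,a)$). The only thing I would tighten is the i.i.d.\ claim: the visit indicator at step $h$ of episode $\tau$ is measurable with respect to the history up to that point, so the successor states form an adapted sequence to which one applies the $\ell_{1}$ bound for each fixed prefix length $n$ (or a martingale version of it), rather than literally conditioning on the full set of visit times; this is the standard fix and does not change the result.
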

\begin{proof}
	The result is a simple adaptation of \citep[Lemma 6]{fei2020risk}.
\end{proof}
We now control the difference $q_{h,1}^{k}-q_{h,2}^{k}$.
\begin{lem}
	\label{lem:v_G1_bound} Recall the definition of $b_{h}^{k}$ from
	Algorithm \ref{alg:alg_lsvi}. For all $(k,h,s,a)\in[K]\times[H]\times\cS\times\cA$,
	there exists some universal constant $c>0$ (where $c$
	is used in Line \ref{line:lsvi_bonus_def} of Algorithm \ref{alg:alg_lsvi})
	such that the following holds with probability at least $1-\delta/2$:
	if $\beta>0$, we have
	\[
	0\le(q_{h,1}^{k}-q_{h,2}^{k})(s,a)\le2b_{h}^{k},
	\]
	and if $\beta<0$, we have 
	\[
	0\le(q_{h,2}^{k}-q_{h,1}^{k})(s,a)\le2b_{h}^{k}.
	\]
\end{lem}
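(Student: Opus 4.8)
The plan is to establish the bound by analyzing the definitions of $q_{h,1}^k$ and $q_{h,2}^k$ through the concentration guarantee in Lemma \ref{lem:v_unif_concen}, handling the two cases $N_h^k(s,a) = 0$ and $N_h^k(s,a) \ge 1$ separately. Throughout I would focus on $\beta > 0$, as the case $\beta < 0$ is symmetric after swapping the roles of the minimum/maximum projections and flipping the direction of the inequalities.

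First I would dispose of the trivial case $N_h^k(s,a) = 0$: here both quantities equal $e^{\beta(H-h+1)}$ by definition, so $q_{h,1}^k - q_{h,2}^k = 0$ and the bound holds since $b_h^k \ge 0$. For the main case $N_h^k(s,a) \ge 1$, the key observation is that $q_{h,2}^k(s,a)$ is exactly the empirical-data average of the \emph{conditional expectations} $\E_{s' \sim P_h(\cdot|s_h^\tau,a_h^\tau)}[e^{\beta[r_h^\tau + V_{h+1}^k(s')]}]$, whereas $w_h^k(s,a)$ is the corresponding average of the \emph{realized samples} $e^{\beta[r_h^\tau + V_{h+1}^k(s_{h+1}^\tau)]}$. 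Since $V_{h+1}^k \in [0, H-h]$, it lies in the class $\bar{\cV}_{h+1}$ of Lemma \ref{lem:v_unif_concen}, so that lemma yields
\[
\bigl| w_h^k(s,a) - q_{h,2}^k(s,a) \bigr| \le c_0 (e^{\beta(H-h+1)}-1)\sqrt{\frac{S\iota}{\max\{1,N_h^k(s,a)\}}}
\]
with probability $1-\delta/2$. Setting the universal constant $c \ge c_0$ in the bonus $b_h^k$ of Equation \eqref{eq:v_gamma}, this concentration statement reads precisely $|w_h^k - q_{h,2}^k| \le b_h^k$.

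From here the argument is a short chain of algebra using the projection defining $q_{h,1}^k$. For the \textbf{upper bound}, since $q_{h,1}^k = \min\{w_h^k + b_h^k,\, e^{\beta(H-h+1)}\}$ we have $q_{h,1}^k \le w_h^k + b_h^k \le q_{h,2}^k + 2b_h^k$, where the last step uses $w_h^k \le q_{h,2}^k + b_h^k$ from the concentration bound; this gives $q_{h,1}^k - q_{h,2}^k \le 2b_h^k$. For the \textbf{lower bound} $q_{h,1}^k \ge q_{h,2}^k$, I would argue that $q_{h,2}^k \le e^{\beta(H-h+1)}$ always holds (because $r_h^\tau + V_{h+1}^k(s') \le 1 + (H-h) = H-h+1$ pointwise, so the exponential inside is at most $e^{\beta(H-h+1)}$ and the average preserves this), and moreover $w_h^k + b_h^k \ge q_{h,2}^k$ by the concentration bound; taking the minimum of two quantities each dominating $q_{h,2}^k$ shows $q_{h,1}^k \ge q_{h,2}^k$.

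The main obstacle I anticipate is handling the interaction between the projection (the $\min$ with $e^{\beta(H-h+1)}$) and the concentration bound cleanly, particularly verifying that the deterministic cap $e^{\beta(H-h+1)}$ does not break the lower bound $q_{h,1}^k \ge q_{h,2}^k$. This is exactly why establishing $q_{h,2}^k \le e^{\beta(H-h+1)}$ as a pointwise range fact is essential: it ensures the truncation only ever removes mass \emph{above} $q_{h,2}^k$, never cutting below it. The one subtlety to watch is that Lemma \ref{lem:v_unif_concen} must apply uniformly over the (data-dependent) iterate $V_{h+1}^k$, but since that lemma is stated as a uniform bound over the entire class $\bar{\cV}_{h+1}$ and $V_{h+1}^k \in \bar{\cV}_{h+1}$, the required uniformity is already built in and no further union bound is needed.
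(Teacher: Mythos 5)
Your proposal is correct and follows essentially the same route as the paper's own proof: invoke the uniform concentration of Lemma \ref{lem:v_unif_concen} over the class $\bar{\cV}_{h+1}$ to get $|w_h^k - q_{h,2}^k|\le b_h^k$, then use $q_{h,1}^{k,+}=w_h^k+b_h^k$ together with the pointwise fact $q_{h,2}^k\le e^{\beta(H-h+1)}$ to show the truncation preserves the lower bound while the upper bound $2b_h^k$ follows directly. Your handling of the $N_h^k(s,a)=0$ case and of the data-dependence of $V_{h+1}^k$ matches the paper's treatment.
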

\begin{proof}
	Let us fix a tuple $(k,h,s,a)\in[K]\times[H]\times\cS\times\cA$.
	
	\textbf{Case $\beta>0$. }For $N_{h}^{k}(s,a)=0$, we have $q_{h,1}^{k}\le e^{\beta(H-h+1)}$
	and $q_{h,2}^{k}\ge1$ by construction and the result follows immediately.
	Now we assume $N_{h}^{k}(s,a)\ge1$. By Equation \eqref{eq:lsvi_weights_equiv}
	we can compute 
	\begin{align*}
	& \quad\left|(q_{h,1}^{k,+}-b_{h}^{k}-q_{h,2}^{k})(s,a)\right|\\
	& =\left|\frac{1}{N_{h}^{k}(s,a)}\sum_{\tau\in[k-1]}\indic_{h}^{\tau}(s,a)\left[e^{\beta[r_{h}^{\tau}+V_{h+1}^{k}(s_{h+1}^{\tau})]}-\E_{s'\sim P_{h}(\cdot\,|\,s_{h}^{\tau},a_{h}^{\tau})}[e^{\beta[r_{h}^{\tau}+V_{h+1}^{k}(s')]}]\right]\right|\\
	& \le c_{0}(e^{\beta(H-h+1)}-1)\sqrt{\frac{S\iota}{\max\{1,N_{h}^{k}(s,a)\}}},
	\end{align*}
	where the last step holds by Lemma \ref{lem:v_unif_concen} with $c_{0}>0$
	being a universal constant.   Setting $c$ in $b_{h}^{k}$ to
	be equal to $c_{0}$, we have  
	\[
	0\le(q_{h,1}^{k,+}-q_{h,2}^{k})(s,a)\le2b_{h}^{k}.
	\]
	Therefore, we have $q_{h,1}^{k}\ge q_{h,2}^{k}$ by the first inequality
	above, the definition of $q_{h,1}^{k}$ and the property $q_{h,2}^{k}\le e^{\beta(H-h+1)}$.
	Also, since $q_{h,1}^{k,+}\ge q_{h,1}^{k}$, it holds that $q_{h,1}^{k}-q_{h,2}^{k}\le q_{h,1}^{k,+}-q_{h,2}^{k}$.
	The conclusion follows.
	
	\textbf{Case $\beta<0$. }We have, similar to the previous case, that
	\[
	\left|(q_{h,1}^{k,+}-b_{h}^{k}-q_{h,2}^{k})(s,a)\right|\le c_{0}(1-e^{\beta(H-h+1)})\sqrt{\frac{S\iota}{\max\{1,N_{h}^{k}(s,a)\}}}.
	\]
	Choosing $c=c_{0}$ in the definition of $b_{h}^{k}(s,a)$ leads to
	\[
	0\le(q_{h,2}^{k}-q_{h,1}^{k,+})(s,a)\le2b_{h}^{k}.
	\]
	This implies $q_{h,2}^{k}\ge q_{h,1}^{k,+}$, and since $q_{h,1}^{k},q_{h,2}^{k}\ge e^{\beta(H-h+1)}$,
	we also have $q_{h,2}^{k}\ge q_{h,1}^{k}$. In addition, since $q_{h,1}^{k,+}\le q_{h,1}^{k,}$,
	it also holds that $q_{h,2}^{k}-q_{h,1}^{k}\le q_{h,2}^{k}-q_{h,1}^{k,+}$.
	Then the conclusion of this case follows.
\end{proof}

\begin{lem}
	\label{lem:v_G2_bound}On the event of Lemma \ref{lem:v_G1_bound},
	for all $(k,h,s,a)\in[K]\times[H]\times\cS\times\cA$ and any policy
	$\pi$, we have 
	\[
	\begin{cases}
	e^{\beta \cdot Q_{h}^{k}(s,a)}\ge e^{\beta \cdot Q_{h}^{\pi}(s,a)}, & \condrew,\\
	e^{\beta \cdot Q_{h}^{k}(s,a)}\le e^{\beta \cdot Q_{h}^{\pi}(s,a)}, & \condcost.
	\end{cases}
	\]
\end{lem}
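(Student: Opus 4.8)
The plan is to prove both inequalities simultaneously by backward induction on $h$, running from $h = H+1$ down to $h = 1$, and to carry along the $V$-level counterpart of each claim. The reason for tracking the $V$-level version is that the greedy update in Line~\ref{line:LSVI_value} of Algorithm~\ref{alg:alg_lsvi} is exactly the device that converts $Q$-optimism at step $h+1$ into $V$-optimism at step $h+1$, which then feeds the conditional expectation defining $q_{h,2}^k$ and $q_{h,3}^{k,\pi}$. I would first treat $\beta > 0$. The base case $h = H+1$ is immediate since $V_{H+1}^k = V_{H+1}^{\pi} = 0$, so both exponentials equal $1$. For the inductive step, assume $e^{\beta Q_{h+1}^k(s,a)} \ge e^{\beta Q_{h+1}^{\pi}(s,a)}$ for all $(s,a)$. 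Because $x \mapsto \frac{1}{\beta}\log x$ is increasing when $\beta > 0$, Line~\ref{line:LSVI_value} gives $e^{\beta V_{h+1}^k(s)} = \max_{a'} e^{\beta Q_{h+1}^k(s,a')}$, and bounding the maximum below by the value at $a' = \pi(s)$ yields
\[
e^{\beta V_{h+1}^k(s)} \ge e^{\beta Q_{h+1}^k(s,\pi(s))} \ge e^{\beta Q_{h+1}^{\pi}(s,\pi(s))} = e^{\beta V_{h+1}^{\pi}(s)},
\]
which is $V$-level optimism at step $h+1$.

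Next I would fix $(s,a)$ and split on the visit count. When $N_h^k(s,a) \ge 1$, I use the decomposition \eqref{eq:v_G1_G2_sum_pos}: Lemma~\ref{lem:v_G1_bound} makes the first summand $q_{h,1}^k - q_{h,2}^k$ nonnegative on its event, while Equations \eqref{eq:vi_q2_simp} and \eqref{eq:vi_q3_simp} rewrite the second summand as
\[
q_{h,2}^k - q_{h,3}^{k,\pi} = \E_{s' \sim P_h(\cdot\,|\,s,a)}\big[e^{\beta r_h(s,a)}\big(e^{\beta V_{h+1}^k(s')} - e^{\beta V_{h+1}^{\pi}(s')}\big)\big] \ge 0,
\]
where nonnegativity follows from the $V$-level optimism just established together with $e^{\beta r_h(s,a)} > 0$. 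Adding the two nonnegative summands gives $e^{\beta Q_h^k(s,a)} \ge e^{\beta Q_h^{\pi}(s,a)}$. When $N_h^k(s,a) = 0$ the decomposition is unnecessary: by construction $e^{\beta Q_h^k(s,a)} = e^{\beta(H-h+1)}$, whereas $e^{\beta Q_h^{\pi}(s,a)} = \E[e^{\beta \sum_{i=h}^H r_i}] \le e^{\beta(H-h+1)}$ since the cumulative reward lies in $[0, H-h+1]$ and $\beta > 0$, so optimism holds directly.

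The case $\beta < 0$ is the mirror image, obtained by reversing two signs. Since $x \mapsto \frac{1}{\beta}\log x$ is now decreasing, the greedy update becomes $e^{\beta V_{h+1}^k(s)} = \min_{a'} e^{\beta Q_{h+1}^k(s,a')}$, so the inductive hypothesis $e^{\beta Q_{h+1}^k} \le e^{\beta Q_{h+1}^{\pi}}$ propagates to $e^{\beta V_{h+1}^k(s)} \le e^{\beta V_{h+1}^{\pi}(s)}$ by bounding the minimum above at $a' = \pi(s)$. I would then apply the decomposition \eqref{eq:v_G1_G2_sum_neg}: Lemma~\ref{lem:v_G1_bound} makes $q_{h,2}^k - q_{h,1}^k$ nonnegative, and the $V$-level inequality makes $q_{h,3}^{k,\pi} - q_{h,2}^k = \E_{s'}[e^{\beta r_h(s,a)}(e^{\beta V_{h+1}^{\pi}(s')} - e^{\beta V_{h+1}^k(s')})] \ge 0$, giving $e^{\beta Q_h^k(s,a)} \le e^{\beta Q_h^{\pi}(s,a)}$. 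The boundary $N_h^k(s,a) = 0$ is again handled directly, now using $e^{\beta Q_h^{\pi}(s,a)} = \E[e^{\beta \sum_{i=h}^H r_i}] \ge e^{\beta(H-h+1)} = e^{\beta Q_h^k(s,a)}$ because $\beta < 0$ reverses the reward bound.

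The only genuinely substantive input is Lemma~\ref{lem:v_G1_bound}, which supplies the correct sign of $q_{h,1}^k - q_{h,2}^k$ on the concentration event; everything else is induction plus elementary monotonicity of $x \mapsto e^{\beta x}$. Accordingly, I expect the main obstacle to be purely one of bookkeeping: the direction of every inequality depends on $\operatorname{sgn}(\beta)$ — the greedy step is a maximum for $\beta > 0$ but a minimum for $\beta < 0$, and the monotonicity of $x \mapsto e^{\beta x}$ flips in tandem — so the two cases must be mirrored with care rather than reused verbatim, and the $N_h^k(s,a)=0$ boundary must be verified separately since the expectation identities \eqref{eq:vi_q2_simp} and \eqref{eq:vi_q3_simp} are only available once a state-action pair has been visited.
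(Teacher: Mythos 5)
Your proof is correct and follows essentially the same route as the paper: backward induction on $h$, converting $Q$-optimism at step $h+1$ into $V$-optimism via the greedy update, then combining the decomposition \eqref{eq:v_G1_G2_sum_pos} (resp.\ \eqref{eq:v_G1_G2_sum_neg}) with Lemma \ref{lem:v_G1_bound} for the $q_{h,1}^{k}-q_{h,2}^{k}$ term and monotonicity of the expectation for the $q_{h,2}^{k}-q_{h,3}^{k,\pi}$ term, with the unvisited case handled by the thresholding at $e^{\beta(H-h+1)}$. If anything, your write-up is slightly more explicit than the paper's, which only sketches the $\beta<0$ case and attributes the $q_{h,2}^{k}\ge q_{h,3}^{k,\pi}$ step to Lemma \ref{lem:v_G1_bound} where you correctly identify it as plain monotonicity.
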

\begin{proof}
	We focus on the case of $\beta>0$ since the proof for $\beta<0$
	is very similar. For the purpose of the proof, we set $Q_{H+1}^{\pi}(s,a)=Q_{H+1}^{*}(s,a)=0$
	for all $(s,a)\in\cS\times\cA$. We fix a tuple $(k,s,a)\in[K]\times\cS\times\cA$
	and use strong induction on $h$. The base case for $h=H+1$ is satisfied
	since $e^{\beta\cdot Q_{H+1}^{k}(s,a)}=e^{\beta\cdot Q_{H+1}^{\pi}(s,a)}=1$
	for $k\in[K]$ by definition. Now we fix an $h\in[H]$ and assume
	that $e^{\beta\cdot Q_{h+1}^{k}(s,a)}\ge e^{\beta\cdot Q_{h+1}^{*}(s,a)}$.
	Moreover, by the induction assumption we have 
	\begin{equation}
	e^{\beta\cdot V_{h+1}^{k}(s)}=\max_{a'\in\cA}e^{\beta\cdot Q_{h+1}^{k}(s,a')}\ge\max_{a'\in\cA}e^{\beta\cdot Q_{h+1}^{\pi}(s,a')}\ge e^{\beta\cdot V_{h+1}^{\pi}(s)}.\label{eq:lsvi_V_dominance}
	\end{equation}
	We also assume that $(s,a)$ satisfies $N_{h}^{k}(s,a)\ge1$, since
	otherwise $e^{\beta\cdot Q_{h}^{k}(s,a)}=e^{\beta(H-h+1)}\ge e^{\beta\cdot Q_{h}^{\pi}(s,a)}$
	and we are done. This assumption and Equation \eqref{eq:lsvi_V_dominance}
	together imply $q_{h,2}^{k}\ge q_{h,3}^{k,\pi}$ by Lemma \ref{lem:v_G1_bound}.
	We also have $q_{h,1}^{k}\ge q_{h,2}^{k}$ on the event of Lemma \ref{lem:v_G1_bound}.
	Therefore, it follows that $e^{\beta\cdot Q_{h}^{k}(s,a)}\ge e^{\beta\cdot Q_{h}^{\pi}(s,a)}$
	by Equation \eqref{eq:v_G1_G2_sum_pos} and we have completed the
	induction. 
\end{proof}

\begin{lem}
	\label{lem:v_Vk_dom} For all $(k,h,s)\in[K]\times[H]\times\cS$
	and any $\delta\in(0,1]$, with probability at least $1-\delta/2$,
	we have 
	\[
	\begin{cases}
	e^{\beta \cdot V_{h}^{k}(s)}\ge e^{\beta \cdot V_{h}^{\pi}(s)}, & \condrew,\\
	e^{\beta \cdot V_{h}^{k}(s)}\le e^{\beta \cdot V_{h}^{\pi}(s)}, & \condcost.
	\end{cases}
	\]
\end{lem}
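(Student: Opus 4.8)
The plan is to deduce the claimed $V$-dominance directly from the $Q$-dominance already established in Lemma \ref{lem:v_G2_bound}, working on the \emph{same} event of Lemma \ref{lem:v_G1_bound}, which occurs with probability at least $1-\delta/2$. The only link needed is the pair of identities relating value and action-value functions: from Line \ref{line:LSVI_value} of Algorithm \ref{alg:alg_lsvi} together with $Q_{h}^{k}=\frac{1}{\beta}\log\{G_{h}^{k}\}$ one has $V_{h}^{k}(s)=\max_{a'\in\cA}Q_{h}^{k}(s,a')$, while the Bellman equation \eqref{eq:bellman} gives $V_{h}^{\pi}(s)=Q_{h}^{\pi}(s,\pi_{h}(s))$ for any policy $\pi$. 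Applying the monotone map $x\mapsto e^{\beta x}$ to these identities and invoking Lemma \ref{lem:v_G2_bound} will close the argument; no fresh concentration bound or induction on $h$ is required, since all the work already lives in the two preceding lemmas.

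For the case $\beta>0$, since $x\mapsto e^{\beta x}$ is increasing the maximum commutes with the exponential, so $e^{\beta V_{h}^{k}(s)}=\max_{a'\in\cA}e^{\beta Q_{h}^{k}(s,a')}$. I would then bound this from below by evaluating the maximum at the single action $a'=\pi_{h}(s)$ and apply the $Q$-dominance of Lemma \ref{lem:v_G2_bound} to obtain
\[
e^{\beta V_{h}^{k}(s)}\ \ge\ e^{\beta Q_{h}^{k}(s,\pi_{h}(s))}\ \ge\ e^{\beta Q_{h}^{\pi}(s,\pi_{h}(s))}\ =\ e^{\beta V_{h}^{\pi}(s)},
\]
which is exactly the desired inequality.

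The one point that requires care --- and the only place the two cases genuinely diverge --- is the sign of $\beta$ when passing the maximum through the exponential. For $\beta<0$ the map $x\mapsto e^{\beta x}$ is \emph{decreasing}, so the maximum over actions defining $V_{h}^{k}$ turns into a minimum, $e^{\beta V_{h}^{k}(s)}=\min_{a'\in\cA}e^{\beta Q_{h}^{k}(s,a')}$. Upper-bounding this minimum by its value at $a'=\pi_{h}(s)$ and using the reversed $Q$-dominance from Lemma \ref{lem:v_G2_bound} (namely $e^{\beta Q_{h}^{k}}\le e^{\beta Q_{h}^{\pi}}$ when $\beta<0$) then yields $e^{\beta V_{h}^{k}(s)}\le e^{\beta Q_{h}^{\pi}(s,\pi_{h}(s))}=e^{\beta V_{h}^{\pi}(s)}$, as required. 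Since everything above holds pointwise in $(k,h,s)$ on the event of Lemma \ref{lem:v_G1_bound}, the probability bookkeeping is inherited verbatim, giving the statement with probability at least $1-\delta/2$. I do not anticipate any real obstacle here; the substance of the proof resides entirely in Lemmas \ref{lem:v_G1_bound} and \ref{lem:v_G2_bound}, and this lemma is merely the short bridge carrying action-value dominance up to value dominance, with the decreasing-exponential sign flip being the sole subtlety to track.
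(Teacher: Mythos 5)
Your proposal is correct and follows essentially the same route as the paper: the paper's own proof simply cites Lemma \ref{lem:v_G2_bound} together with the identity in Equation \eqref{eq:lsvi_V_dominance} (value equals the max over actions of the action-value, passed through the monotone exponential), which is exactly the bridge you spell out, including the correct handling of the max-to-min flip when $\beta<0$ and the inherited probability of the event from Lemma \ref{lem:v_G1_bound}.
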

\begin{proof}
	The result follows from Lemma \ref{lem:v_G2_bound} and Equation \eqref{eq:lsvi_V_dominance}. 
\end{proof}

\section{Proof of Theorem \ref{thm:regret_Q_learn} \label{sec:proof_regret_Q_learn}}

We first lay out some additional notations to facilitate our proof.
Let $N_{h}^{k}$, $G_{h}^{k}$, $V_{h}^{k}$ be the $N_{h}$, $G_{h}$,
$V_{h}$ functions at the beginning of the episode $k$, before $t$
is updated. We also set $Q_{h}^{k}\coloneqq\frac{1}{\beta}G_{h}^{k}$.
We let $\widehat{P}_{h}^{k}(\cdot\,|\,s,a)$ denote the delta function
centered at $s_{h+1}^{k}$ for all $(k,h,s,a)\in[K]\times[H]\times\cS\times\cA$.
This means $\E_{s'\sim\widehat{P}_{h}^{k}(\cdot\,|\,s,a)}[f(s')]=f(s_{h+1}^{k})$
for any $f:\cS\to\real$.  Denote by $n_{h}^{k}\coloneqq N_{h}^{k}(s_{h}^{k},a_{h}^{k})$.
Recall from Algorithm \ref{alg:Q_learn}, the learning rate is defined
as 

\begin{equation}
\alpha_{t}\coloneqq\frac{H+1}{H+t},\label{eq:learn_rate}
\end{equation}
for $t\in\mathbb{Z}$.

For now we consider the case for $\beta>0$. We define the following
quantities to ease the notations
for the proof: 
\begin{align*}
\delta_{h}^{k} & \coloneqq e^{\beta\cdot V_{h}^{k}(s_{h}^{k})}-e^{\beta\cdot V_{h}^{\pi^{k}}(s_{h}^{k})},\\
\phi_{h}^{k} & \coloneqq e^{\beta\cdot V_{h}^{k}(s_{h}^{k})}-e^{\beta\cdot V_{h}^{*}(s_{h}^{k})},\\
\xi_{h+1}^{k} & \coloneqq[(P_{h}-\widehat{P}_{h}^{k})(e^{\beta\cdot V_{h+1}^{*}}-e^{\beta\cdot V_{h+1}^{\pi^{k}}})](s_{h}^{k},a_{h}^{k}).
\end{align*}
For each fixed $(k,h)\in[K]\times[H]$, we let $t=N_{h}^{k}(s_{h}^{k},a_{h}^{k})$.
Then it holds that 
\begin{align}
\delta_{h}^{k} & \overset{(i)}{=}e^{\beta\cdot Q_{h}^{k}(s_{h}^{k},a_{h}^{k})}-e^{\beta\cdot Q_{h}^{\pi^{k}}(s_{h}^{k},a_{h}^{k})}\nonumber \\
& =[e^{\beta\cdot Q_{h}^{k}(s_{h}^{k},a_{h}^{k})}-e^{\beta\cdot Q_{h}^{*}(s_{h}^{k},a_{h}^{k})}]+[e^{\beta\cdot Q_{h}^{*}(s_{h}^{k},a_{h}^{k})}-e^{\beta\cdot Q_{h}^{\pi^{k}}(s_{h}^{k},a_{h}^{k})}]\nonumber \\
& \overset{(ii)}{=}[e^{\beta\cdot Q_{h}^{k}(s_{h}^{k},a_{h}^{k})}-e^{\beta\cdot Q_{h}^{*}(s_{h}^{k},a_{h}^{k})}]+e^{\beta\cdot r_{h}(s_{h}^{k},a_{h}^{k})}[P_{h}(e^{\beta\cdot V_{h+1}^{*}}-e^{\beta\cdot V_{h+1}^{\pi^{k}}})](s_{h}^{k},a_{h}^{k})\nonumber \\
& \overset{(iii)}{\le}[e^{\beta\cdot Q_{h}^{k}(s_{h}^{k},a_{h}^{k})}-e^{\beta\cdot Q_{h}^{*}(s_{h}^{k},a_{h}^{k})}]+e^{\beta}[P_{h}(e^{\beta\cdot V_{h+1}^{*}}-e^{\beta\cdot V_{h+1}^{\pi^{k}}})](s_{h}^{k},a_{h}^{k})\nonumber \\
& =[e^{\beta\cdot Q_{h}^{k}(s_{h}^{k},a_{h}^{k})}-e^{\beta\cdot Q_{h}^{*}(s_{h}^{k},a_{h}^{k})}]+e^{\beta}(\delta_{h+1}^{k}-\phi_{h+1}^{k}+\xi_{h+1}^{k})\nonumber \\
& \overset{(iv)}{\le}\alpha_{t}^{0}(e^{\beta(H-h+1)}-1)+2\gamma_{h,t}+\sum_{i\in[t]}\alpha_{t}^{i}\cdot e^{\beta}\left[e^{\beta\cdot V_{h+1}^{k_{i}}(s_{h+1}^{k_{i}})}-e^{\beta\cdot V_{h+1}^{*}(s_{h+1}^{k_{i}})}\right]\nonumber \\
& \quad+e^{\beta}(\delta_{h+1}^{k}-\phi_{h+1}^{k}+\xi_{h+1}^{k})\nonumber \\
& =\alpha_{t}^{0}(e^{\beta(H-h+1)}-1)+2\gamma_{h,t}+\sum_{i\in[t]}\alpha_{t}^{i}\cdot e^{\beta}\phi_{h+1}^{k_{i}}\nonumber \\
& \quad+e^{\beta}(\delta_{h+1}^{k}-\phi_{h+1}^{k}+\xi_{h+1}^{k})\label{eq:qlearn_regret_decomp}
\end{align}
where step $(i)$ holds since $V_{h}^{k}(s_{h}^{k})=\max_{a'\in\cA}Q_{h}^{k}(s_{h}^{k},a')=Q_{h}^{k}(s_{h}^{k},a_{h}^{k})$
and $V_{h}^{\pi^{k}}(s_{h}^{k})=Q_{h}^{\pi^{k}}(s_{h}^{k},\pi_{h}^{k}(s_{h}^{k}))=Q_{h}^{\pi^{k}}(s_{h}^{k},a_{h}^{k})$;
step $(ii)$ holds by the exponential Bellman equation \eqref{eq:exp_bellman};
step $(iii)$ holds since $V_{h+1}^{*}\ge V_{h+1}^{\pi^{k}}$ implies
$e^{\beta\cdot V_{h+1}^{*}}\ge e^{\beta\cdot V_{h+1}^{\pi^{k}}}$
given that $\beta>0$; step $(iv)$ holds on the event of Lemma \ref{lem:upper_bound_Q_diff}
(with $\gamma_{h,t}$ defined therein).

We bound each term in \eqref{eq:qlearn_regret_decomp} one by one.
First, we have 
\begin{align*}
\sum_{k\in[K]}\alpha_{n_{h}^{k}}^{0}(e^{\beta(H-h+1)}-1) & =(e^{\beta(H-h+1)}-1)\sum_{k\in[K]}\indic\{n_{h}^{k}=0\}\\
& \le(e^{\beta(H-h+1)}-1)SA.
\end{align*}
The second term in \eqref{eq:qlearn_regret_decomp} can be bounded
by 
\[
\sum_{k\in[K]}\left(\sum_{i\in[t]}\alpha_{t}^{i}\cdot e^{\beta}\phi_{h+1}^{k_{i}}\right)=\sum_{k\in[K]}\left(\sum_{i\in[n_{h}^{k}]}\alpha_{n_{h}^{k}}^{i}\cdot e^{\beta}\phi_{h+1}^{k_{i}(s_{h}^{k},a_{h}^{k})}\right),
\]
where $k_{i}(s_{h}^{k},a_{h}^{k})$ denotes the episode in which $(s_{h}^{k},a_{h}^{k})$
was taken at step $h$ for the $i$-th time. We re-group the above
summation in a different way. For every $k'\in[K]$, the term $\phi_{h+1}^{k'}$
appears in the summand with $k>k'$ if and only if $(s_{h}^{k},a_{h}^{k})=(s_{h}^{k'},a_{h}^{k'})$.
For the first time we visit $(s_{h}^{k'},a_{h}^{k'})$ we have $n_{h}^{k}=n_{h}^{k'}+1$,
for the second time we have $n_{h}^{k}=n_{h}^{k'}+2$, and etc. Therefore,
we may continue the above display as 
\begin{align*}
\sum_{k\in[K]}\left(\sum_{i\in[n_{h}^{k}]}\alpha_{n_{h}^{k}}^{i}\cdot e^{\beta}\phi_{h+1}^{k_{i}(s_{h}^{k},a_{h}^{k})}\right) & \le\sum_{k'\in[K]}e^{\beta}\phi_{h+1}^{k'}\left(\sum_{t\ge n_{h}^{k'}+1}\alpha_{t}^{n_{h}^{k'}}\right)\\
& \le\left(1+\frac{1}{H}\right)e^{\beta}\sum_{k'\in[K]}\phi_{h+1}^{k'},
\end{align*}
where the last step follows Fact \ref{fact:learn_rate_infinite_sum}.
Collecting the above results and plugging them into Equation \eqref{eq:qlearn_regret_decomp},
we have 
\begin{align}
\sum_{k\in[K]}\delta_{h}^{k} & \le(e^{\beta(H-h+1)}-1)SA+\left(1+\frac{1}{H}\right)e^{\beta}\sum_{k\in[K]}\phi_{h+1}^{k}\nonumber \\
& \quad+\sum_{k\in[K]}e^{\beta}(\delta_{h+1}^{k}-\phi_{h+1}^{k}+\xi_{h+1}^{k})+2\sum_{k\in[K]}\gamma_{h,n_{h}^{k}}\nonumber \\
& \le(e^{\beta(H-h+1)}-1)SA+\left(1+\frac{1}{H}\right)e^{\beta}\sum_{k\in[K]}\delta_{h+1}^{k}\nonumber \\
& \quad+\sum_{k\in[K]}(2\gamma_{h,n_{h}^{k}}+e^{\beta}\xi_{h+1}^{k}),\label{eq:qlearn_regret_interm}
\end{align}
where the last step holds since $\delta_{h+1}^{k}\ge\phi_{h+1}^{k}$
(due to the fact that $\beta>0$ and $V_{h+1}^{*} \ge V_{h+1}^{\pi^{k}}$). 

Now, we unroll the quantity $\sum_{k\in[K]}\delta_{h}^{k}$ recursively
in the form of Equation \eqref{eq:qlearn_regret_interm}, and get
\begin{align}
\sum_{k\in[K]}\delta_{1}^{k} & \le\sum_{h\in[H]}\left[\left(1+\frac{1}{H}\right)e^{\beta}\right]^{h-1}\left[(e^{\beta(H-h+1)}-1)SA+\sum_{k\in[K]}(2\gamma_{h,n_{h}^{k}}+e^{\beta}\xi_{h+1}^{k})\right]\nonumber \\
& =\sum_{h\in[H]}\left(1+\frac{1}{H}\right)^{h-1}\left[(e^{\beta H}-e^{\beta(h-1)})SA+\sum_{k\in[K]}(2e^{\beta(h-1)}\gamma_{h,n_{h}^{k}}+e^{\beta h}\xi_{h+1}^{k})\right]\nonumber \\
& =\sum_{h\in[H]}\left(1+\frac{1}{H}\right)^{h-1}\left[(e^{\beta H}-e^{\beta(h-1)})SA+\sum_{k\in[K]}2e^{\beta(h-1)}\gamma_{h,n_{h}^{k}}\right] \nonumber \\
& \quad +\sum_{h\in[H]}\sum_{k\in[K]}\left(1+\frac{1}{H}\right)^{h-1}e^{\beta h}\xi_{h+1}^{k}\nonumber  \\
& \le e\left[(e^{\beta H}-1)HSA+\sum_{k\in[K]}\sum_{h\in[H]}2e^{\beta(h-1)}\gamma_{h,n_{h}^{k}}\right]+\sum_{h\in[H]}\sum_{k\in[K]}\left(1+\frac{1}{H}\right)^{h-1}e^{\beta h}\xi_{h+1}^{k},\label{eq:qlearn_regret_interm_unrolled}
\end{align}
where the first step uses the fact that $\delta_{H+1}^{k}=0$ for
$k\in[K]$; the last step holds since $(1+1/H)^{h}\le(1+1/H)^{H}\le e$
for all $h\in[H]$. By the pigeonhole principle, for any $h\in[H]$
we have
\begin{align}
\sum_{k\in[K]}\sum_{h\in[H]}e^{\beta(h-1)}\gamma_{n_{h}^{k}} & \lesssim(e^{\beta H}-1)\sum_{k\in[K]}\sqrt{\frac{H\iota}{n_{h}^{k}}}\nonumber \\
& \lesssim(e^{\beta H}-1)\sum_{(s,a)\in\cS\times\cA}\sum_{n\in[N_{h}^{K}(s,a)]}\sqrt{\frac{H\iota}{n}}\nonumber \\
& \lesssim(e^{\beta H}-1)\sqrt{HSAK\iota}\label{eq:qlearn_bonus_bound}
\end{align}
where the third step holds since $\sum_{(s,a)\in\cS\times\cA}N_{h}^{K}(s,a)=K$
and the RHS of the second step is maximized when $N_{h}^{K}(s,a)=K/(SA)$
for all $(s,a)\in\cS\times\cA$. Finally, the Azuma-Hoeffding inequality
and the fact that $\left|\left(1+\frac{1}{H}\right)^{h-1}e^{\beta h}\xi_{h+1}^{k}\right|\le e(e^{\beta H}-1)$
for $h\in[H]$ together imply that with probability at least $1-\delta$,
we have 
\begin{equation}
\left|\sum_{h\in[H]}\sum_{k\in[K]}\left(1+\frac{1}{H}\right)^{h-1}e^{\beta h}\xi_{h+1}^{k}\right|\lesssim(e^{\beta H}-1)\sqrt{HK\iota}.\label{eq:qlearn_mtg_bound}
\end{equation}
Plugging Equations \eqref{eq:qlearn_bonus_bound} and \eqref{eq:qlearn_mtg_bound}
into \eqref{eq:qlearn_regret_interm_unrolled}, we have 
\[
\sum_{k\in[K]}\delta_{1}^{k}\lesssim(e^{\beta H}-1)\sqrt{HSAK\iota},
\]
when $K$ is large enough. Invoking Lemma \ref{lem:reg_decomp} completes
the proof for the case $\beta>0$. 

The proof is very similar for the case of $\beta<0$, and one only
needs to exchange the role of $V_{h}^{k}$ and $V_{h}^{\pi^{k}}$
in the definitions of $\delta_{h}^{k}$, $\phi_{h}^{k}$, $\xi_{h}^{k}$,
etc, to get the counterpart of Equation \eqref{eq:qlearn_regret_decomp}
and of the remaining analysis. 

\subsection{Auxiliary lemmas}

Recall the learning rate $\alpha_{t}$ defined in Equation \eqref{eq:learn_rate}.
We define 
\begin{equation}
\alpha_{t}^{0}\coloneqq\prod_{j=1}^{t}(1-\alpha_{j}),\qquad\alpha_{t}^{i}\coloneqq\alpha_{i}\prod_{j=i+1}^{t}(1-\alpha_{j})\label{eq:learn_rate_prod}
\end{equation}
for integers $i,t\ge1$. We set $\alpha_{t}^{0}=1$ and $\sum_{i\in[t]}\alpha_{t}^{i}=0$
if $t=0$, and $\alpha_{t}^{i}=\alpha_{i}$ if $t<i+1$.

In the following, we provide some useful facts about the learning rate.
\begin{fact}
	\label{fact:learn_rate_prop}The following properties hold for $\alpha_{t}^{i}$.
	\begin{enumerate}[label={(\alph*)},ref={\thefact(\alph*)}]
		\item \label{fact:learn_rate_sum_div_sqrti}$\frac{1}{\sqrt{t}}\le\sum_{i\in[t]}\frac{\alpha_{t}^{i}}{\sqrt{i}}\le\frac{2}{\sqrt{t}}$
		for every integer $t\ge1$.
		\item \label{fact:learn_rate_max_L2} $\max_{i\in[t]}\alpha_{t}^{i}\le\frac{2H}{t}$
		and $\sum_{i\in[t]}(\alpha_{t}^{i})^{2}\le\frac{2H}{t}$ for every
		integer $t\ge1$.
		\item \label{fact:learn_rate_infinite_sum}$\sum_{t=i}^{\infty}\alpha_{t}^{i}=1+\frac{1}{H}$
		for every integer $i\ge1$.
		\item \label{fact:learn_rate_binary}$\sum_{i\in[t]}\alpha_{t}^{i}=1$ and
		$\alpha_{t}^{0}=0$ for every integer $t\ge1$, and $\sum_{i\in[t]}\alpha_{t}^{i}=0$
		and $\alpha_{t}^{0}=1$ for $t=0$.
	\end{enumerate}
\end{fact}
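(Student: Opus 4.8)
The plan is to derive all four parts from two elementary ingredients about the weights $\alpha_t^i$. The first is the one-step recursion: peeling off the last factor in $\alpha_t^i = \alpha_i\prod_{j=i+1}^t(1-\alpha_j)$ gives $\alpha_t^i = (1-\alpha_t)\,\alpha_{t-1}^i$ for $1\le i<t$, together with $\alpha_t^t=\alpha_t$. The second is a closed form: substituting $1-\alpha_j=\tfrac{j-1}{H+j}$ and $\alpha_i=\tfrac{H+1}{H+i}$ into the product and telescoping the resulting factorials yields $\alpha_t^i = (H+1)\tfrac{(t-1)!\,(H+i-1)!}{(i-1)!\,(H+t)!}$ for $1\le i\le t$. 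Everything else is bookkeeping on top of these.

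I would dispatch part (d) first. Since $1-\alpha_1 = \tfrac{0}{H+1}=0$ appears as a factor in $\alpha_t^0 = \prod_{j=1}^t(1-\alpha_j)$, we get $\alpha_t^0=0$ for $t\ge1$. For $\sum_{i=1}^t\alpha_t^i=1$ I would induct on $t$ via the recursion, using $\sum_{i=0}^t\alpha_t^i = \alpha_t + (1-\alpha_t)\sum_{i=0}^{t-1}\alpha_{t-1}^i = \alpha_t+(1-\alpha_t)=1$, with the trivial base case $t=0$ and the $t=0$ boundary values being definitional. Part (c) is where the closed form pays off: writing $\alpha_t^i = c_i\,a_t$ with $a_t=\tfrac{(t-1)!}{(H+t)!}$ and $c_i=(H+1)\tfrac{(H+i-1)!}{(i-1)!}$, I would verify the telescoping identity $a_t = \tfrac{1}{H}(d_{t-1}-d_t)$ with $d_t=\tfrac{t!}{(H+t)!}$ (a one-line check after clearing denominators, since $(t-1)!(H+t)-t! = (t-1)!\,H$). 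As $d_t\to0$, summing gives $\sum_{t=i}^\infty a_t = \tfrac{1}{H}d_{i-1} = \tfrac{1}{H}\tfrac{(i-1)!}{(H+i-1)!}$, and multiplying by $c_i$ collapses to $\tfrac{H+1}{H}=1+\tfrac1H$.

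For part (a) I would induct on $t$ using the recursion, which gives $\sum_{i=1}^t\tfrac{\alpha_t^i}{\sqrt i} = \tfrac{\alpha_t}{\sqrt t} + (1-\alpha_t)\sum_{i=1}^{t-1}\tfrac{\alpha_{t-1}^i}{\sqrt i}$. The lower bound reduces to the trivial $\tfrac{1}{\sqrt{t-1}}\ge\tfrac{1}{\sqrt t}$, while the upper bound reduces, after inserting $\alpha_t=\tfrac{H+1}{H+t}$, to $2\sqrt{t(t-1)}\le H+2t-1$, which is exactly AM–GM since $2\sqrt{t(t-1)}\le 2t-1\le H+2t-1$. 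For part (b) I would read off from the closed form that $\tfrac{\alpha_t^{i+1}}{\alpha_t^i}=\tfrac{H+i}{i}>1$, so $\alpha_t^i$ is increasing in $i$ and $\max_{i\in[t]}\alpha_t^i=\alpha_t=\tfrac{H+1}{H+t}\le\tfrac{2H}{t}$ (using $H\ge1$); the variance-type bound then follows from $\sum_i(\alpha_t^i)^2\le(\max_i\alpha_t^i)\sum_i\alpha_t^i\le\tfrac{2H}{t}$ via part (d).

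None of these steps is deep, so the real content is recognizing the right auxiliary objects. I expect the two places needing a genuine idea rather than mechanical manipulation to be spotting the telescoping sequence $d_t=t!/(H+t)!$ in part (c) and recognizing that the inductive upper bound in part (a) is precisely AM–GM; parts (b) and (d) are essentially automatic once the recursion and closed form are in hand.
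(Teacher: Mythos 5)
Your proof is correct and complete. For comparison: the paper does not actually prove parts (a)--(c) at all --- it simply cites Lemma~4.1 of \citet{jin2018q} and notes that (d) follows by direct calculation from Equation~\eqref{eq:learn_rate_prod} --- so your write-up supplies a self-contained argument where the paper defers to a reference. Your two ingredients check out: the one-step recursion $\alpha_t^i=(1-\alpha_t)\alpha_{t-1}^i$ for $i<t$ with $\alpha_t^t=\alpha_t$ is immediate from the product definition, and the closed form $\alpha_t^i=(H+1)\tfrac{(t-1)!\,(H+i-1)!}{(i-1)!\,(H+t)!}$ follows from $1-\alpha_j=\tfrac{j-1}{H+j}$ and telescoping (legitimate here since $H\in\mathbb{Z}_{>0}$). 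The inductions for (a) and (d) are exactly the argument in the cited lemma; your telescoping identity $Ha_t=d_{t-1}-d_t$ with $d_t=t!/(H+t)!$ for part (c) is a clean alternative to the recursive manipulation used there, and the monotonicity-in-$i$ observation for part (b) (ratio $\tfrac{H+i}{i}>1$, so the maximum is $\alpha_t^t=\tfrac{H+1}{H+t}\le\tfrac{2H}{t}$, then $\sum_i(\alpha_t^i)^2\le\max_i\alpha_t^i\cdot\sum_i\alpha_t^i$) is the standard route. The only caveats are cosmetic: the base case $t=1$ of part (a) relies on $\alpha_1=1$ so that $S_1=1$, worth stating explicitly, and the bound $\tfrac{H+1}{H+t}\le\tfrac{2H}{t}$ indeed needs $H\ge1$, which holds since $H$ is a positive integer.
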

\begin{proof}
	The first three facts can be found in \citet[Lemma 4.1]{jin2018q},
	and the last one follows from direct calculation in view of Equation~\eqref{eq:learn_rate_prod}.
\end{proof}
%We set some notations that will be useful for our subsequent analysis.
Define the shorthand $\iota\coloneqq\log(SAT/\delta)$ for $\delta\in(0,1]$.
We fix a tuple $(k,h,s,a)\in[K]\times[H]\times\cS\times\cA$ with
$k_{i}\le k$ being the episode in which $(s,a)$ is visited the $i$-th
time at step $h$. Let us define 
\begin{align*}
q_{h,1}^{k,+}(s,a) & \coloneqq\alpha_{t}^{0}e^{\beta(H-h+1)}+\begin{cases}
\sum_{i\in[t]}\alpha_{t}^{i}\left[e^{\beta[r_{h}(s,a)+V_{h+1}^{k_{i}}(s_{h+1}^{k_{i}})]}+b_{h,i}\right], & \condrew,\\
\sum_{i\in[t]}\alpha_{t}^{i}\left[e^{\beta[r_{h}(s,a)+V_{h+1}^{k_{i}}(s_{h+1}^{k_{i}})]}-b_{h,i}\right], & \condcost,
\end{cases}\\
q_{h,1}^{k}(s,a) & \coloneqq\begin{cases}
\min\{q_{h,1}^{k,+}(s,a),e^{\beta(H-h+1)}\}, & \condrew,\\
\max\{q_{h,1}^{k,+}(s,a),e^{\beta(H-h+1)}\}, & \condcost,
\end{cases}
\end{align*}
and 
\begin{align*}
q_{h,2}^{k,\circ}(s,a) & \coloneqq\alpha_{t}^{0}e^{\beta(H-h+1)}+\sum_{i\in[t]}\alpha_{t}^{i}\left[e^{\beta[r_{h}(s,a)+V_{h+1}^{*}(s_{h+1}^{k_{i}})]}\right]\\
q_{h,2}^{k,+}(s,a) & \coloneqq\alpha_{t}^{0}e^{\beta(H-h+1)}+\begin{cases}
\sum_{i\in[t]}\alpha_{t}^{i}\left[e^{\beta[r_{h}(s,a)+V_{h+1}^{*}(s_{h+1}^{k_{i}})]}+b_{h,i}\right], & \condrew,\\
\sum_{i\in[t]}\alpha_{t}^{i}\left[e^{\beta[r_{h}(s,a)+V_{h+1}^{*}(s_{h+1}^{k_{i}})]}-b_{h,i}\right], & \condcost,
\end{cases}\\
q_{h,2}^{k}(s,a) & \coloneqq\begin{cases}
\min\{q_{h,2}^{k,+}(s,a),e^{\beta(H-h+1)}\}, & \condrew,\\
\max\{q_{h,2}^{k,+}(s,a),e^{\beta(H-h+1)}\}, & \condcost,
\end{cases}
\end{align*}
and 
\[
q_{h,3}^{k}(s,a)\coloneqq\alpha_{t}^{0}e^{\beta\cdot Q_{h}^{*}(s,a)}+\sum_{i\in[t]}\alpha_{t}^{i}\left[\E_{s'\sim P_{h}(\cdot\,|\,s,a)}e^{\beta[r_{h}(s,a)+V_{h+1}^{*}(s')]}\right].
\]

We have a simple fact on $q_{h,2}^{k}$ and $q_{h,2}^{k,\circ}$. 
\begin{fact}
	\label{fact:q_2_prime}If $\beta>0$, we have $q_{h,2}^{k,\circ}(\cdot,\cdot)\le q_{h,2}^{k}(\cdot,\cdot)$;
	if $\beta<0$, we have $q_{h,2}^{k,\circ}(\cdot,\cdot)\ge q_{h,2}^{k}(\cdot,\cdot)$.
\end{fact}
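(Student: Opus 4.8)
The plan is to prove the fact by comparing $q_{h,2}^{k,\circ}$ separately against each of the two quantities whose minimum (resp.\ maximum) defines $q_{h,2}^{k}$, namely $q_{h,2}^{k,+}$ and the threshold $e^{\beta(H-h+1)}$, and then combining the two comparisons. The whole argument is elementary and rests on three ingredients: nonnegativity of the bonus, the range of the value function, and the normalization of the learning-rate weights.

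First I would exploit that $b_{h,i}\ge 0$ and $\alpha_t^i\ge 0$. For $\beta>0$, the definition of $q_{h,2}^{k,+}$ differs from $q_{h,2}^{k,\circ}$ only by the added nonnegative term $\sum_{i\in[t]}\alpha_t^i b_{h,i}$, so $q_{h,2}^{k,+}\ge q_{h,2}^{k,\circ}$; for $\beta<0$ the bonus is subtracted, giving $q_{h,2}^{k,+}\le q_{h,2}^{k,\circ}$. Next I would compare $q_{h,2}^{k,\circ}$ with the threshold. Using $V_{h+1}^{*}\in[0,H-h]$ and $r_h\in[0,1]$ we have $r_h+V_{h+1}^{*}\le H-h+1$, hence $e^{\beta[r_h+V_{h+1}^{*}(s_{h+1}^{k_i})]}\le e^{\beta(H-h+1)}$ when $\beta>0$ (and $\ge$ when $\beta<0$). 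Since Fact \ref{fact:learn_rate_binary} gives $\alpha_t^0+\sum_{i\in[t]}\alpha_t^i=1$, the quantity $q_{h,2}^{k,\circ}$ is a \emph{convex combination} of $e^{\beta(H-h+1)}$ and the terms $e^{\beta[r_h+V_{h+1}^{*}(s_{h+1}^{k_i})]}$, so it lies below $e^{\beta(H-h+1)}$ for $\beta>0$ and above it for $\beta<0$.

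Finally I would combine the two comparisons. For $\beta>0$, the inequalities $q_{h,2}^{k,\circ}\le q_{h,2}^{k,+}$ and $q_{h,2}^{k,\circ}\le e^{\beta(H-h+1)}$ together yield $q_{h,2}^{k,\circ}\le\min\{q_{h,2}^{k,+},e^{\beta(H-h+1)}\}=q_{h,2}^{k}$; symmetrically, for $\beta<0$ the inequalities $q_{h,2}^{k,\circ}\ge q_{h,2}^{k,+}$ and $q_{h,2}^{k,\circ}\ge e^{\beta(H-h+1)}$ give $q_{h,2}^{k,\circ}\ge\max\{q_{h,2}^{k,+},e^{\beta(H-h+1)}\}=q_{h,2}^{k}$. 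There is no real obstacle: the statement follows directly once the bonus sign and the threshold are handled separately. The only points requiring any care are the pointwise range bound $r_h+V_{h+1}^{*}\le H-h+1$ (valid for all $(s,a)$, so the conclusion holds for all state-action pairs as claimed) and the weight-normalization identity, both of which feed the convex-combination step; the degenerate case $t=0$ reduces all three quantities to $e^{\beta(H-h+1)}$ and is consistent with the claimed inequalities.
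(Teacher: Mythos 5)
Your proposal is correct and follows essentially the same route as the paper's proof: bound $q_{h,2}^{k,\circ}$ by the threshold $e^{\beta(H-h+1)}$ via the range $r_h+V_{h+1}^{*}\in[0,H-h+1]$ together with the weight normalization $\alpha_t^0+\sum_{i\in[t]}\alpha_t^i=1$, and by $q_{h,2}^{k,+}$ via the sign of the bonus term, then take the min (resp.\ max). The explicit treatment of the $t=0$ case is a small additional nicety not spelled out in the paper, but the argument is otherwise the same.
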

\begin{proof}
	We focus on the case where $\beta>0$ and the case for $\beta<0$
	can be proved similarly. Note that $r_{h}(s,a)+V_{h+1}^{*}(s_{h+1}^{k_{i}})\in[0,H-h+1]$
	implies $e^{\beta[r_{h}(s,a)+V_{h+1}^{*}(s_{h+1}^{k_{i}})]}\le e^{\beta(H-h+1)}$.
	We also have $\alpha_{t}^{0},\sum_{i\in[t]}\alpha_{t}^{i}\in\{0,1\}$
	with $\alpha_{t}^{0}+\sum_{i\in[t]}\alpha_{t}^{i}=1$ by Fact \ref{fact:learn_rate_binary}.
	Together they imply that $q_{h,2}^{k,\circ}(\cdot,\cdot)\le e^{\beta(H-h+1)}$
	and $(q_{h,2}^{k,\circ}-q_{h,2}^{k,+})(\cdot,\cdot)=-\sum_{i\in[t]}\alpha_{t}^{i}b_{h,i}\le0$
	by definition of $b_{h,i}$ in Line \ref{line:qlearn_bonus_def} of
	Algorithm \ref{alg:Q_learn}. Therefore, $q_{h,2}^{k,\circ}(\cdot,\cdot)\le\min\{e^{\beta(H-h+1)},q_{h,2}^{k,+}(\cdot,\cdot)\}=q_{h,2}^{k}(\cdot,\cdot)$.
\end{proof}
Next, we write the difference $e^{\beta\cdot Q_{h}^{k}}-e^{\beta\cdot Q_{h}^{*}}$
in terms of $q_{h,1}^{k}$ and $q_{h,3}^{k}$.
\begin{lem}
	\label{lem:Q_identity} \emph{For any $(k,h,s,a)\in[K]\times[H]\times\cS\times\cA$,
		suppose $(s,a)$ was previously visited at step $h$ of episodes $k_{1},\ldots,k_{t}<k$.
		We have 
		\[
		(e^{\beta\cdot Q_{h}^{k}}-e^{\beta\cdot Q_{h}^{*}})(s,a)=(q_{h,1}^{k}-q_{h,3}^{k})(s,a).
		\]
	}
\end{lem}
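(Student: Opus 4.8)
The plan is to prove the two sides agree by splitting the right-hand side into two separate identities and then subtracting: first that $q_{h,3}^{k}(s,a)=e^{\beta\cdot Q_{h}^{*}(s,a)}$, and second that $e^{\beta\cdot Q_{h}^{k}(s,a)}=q_{h,1}^{k}(s,a)$. Granting both, the claimed identity $(e^{\beta\cdot Q_{h}^{k}}-e^{\beta\cdot Q_{h}^{*}})(s,a)=(q_{h,1}^{k}-q_{h,3}^{k})(s,a)$ follows at once by subtraction.

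For the first identity I would invoke the exponential Bellman optimality equation \eqref{eq:exp_bellman_opt}. By definition $q_{h,3}^{k}$ equals $\alpha_{t}^{0}e^{\beta\cdot Q_{h}^{*}(s,a)}$ plus the weighted sum of the terms $\E_{s'\sim P_{h}(\cdot\,|\,s,a)}e^{\beta[r_{h}(s,a)+V_{h+1}^{*}(s')]}$; but \eqref{eq:exp_bellman_opt} says each such conditional expectation is exactly $e^{\beta\cdot Q_{h}^{*}(s,a)}$. Hence every summand collapses to the common value $e^{\beta\cdot Q_{h}^{*}(s,a)}$, and factoring it out leaves the coefficient $\alpha_{t}^{0}+\sum_{i\in[t]}\alpha_{t}^{i}$, which equals $1$ by Fact \ref{fact:learn_rate_binary} (both for $t\ge1$ and for $t=0$). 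This gives $q_{h,3}^{k}(s,a)=e^{\beta\cdot Q_{h}^{*}(s,a)}$.

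For the second identity I would recall $e^{\beta\cdot Q_{h}^{k}}=G_{h}^{k}$ and unroll the $G_{h}$-update of Algorithm \ref{alg:Q_learn} by induction on the visit count $t=N_{h}^{k}(s,a)$. The base case $t=0$ matches $q_{h,1}^{k}(s,a)=e^{\beta(H-h+1)}$ by the initialization of $G_{h}$ and the convention $\alpha_{0}^{0}=1$. For the inductive step, the update $w_{h}\leftarrow(1-\alpha_{t})G_{h}+\alpha_{t}\,e^{\beta[r_{h}(s,a)+V_{h+1}^{k_{t}}(s_{h+1}^{k_{t}})]}$ followed by the bonus addition is exactly the one-step recursion whose explicit solution is the convex combination appearing in $q_{h,1}^{k,+}$, once the coefficients are identified through the relations $\alpha_{t}^{i}=(1-\alpha_{t})\alpha_{t-1}^{i}$ and $\alpha_{t}^{t}=\alpha_{t}$ implied by \eqref{eq:learn_rate_prod}.

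The main obstacle is the projection step. The algorithm applies the truncation $\min\{\cdot,e^{\beta(H-h+1)}\}$ (for $\beta>0$; the $\max$ for $\beta<0$) at \emph{every} visit, whereas $q_{h,1}^{k}$ applies it only once, to the fully unrolled $q_{h,1}^{k,+}$. I would discharge this by carrying the projection through the induction, using that $x\mapsto\min\{x,e^{\beta(H-h+1)}\}$ is monotone and that the targets $e^{\beta[r_{h}(s,a)+V_{h+1}^{k_{i}}(\cdot)]}$ lie in $[1,e^{\beta(H-h+1)}]$ because $r_{h}(s,a)+V_{h+1}^{k_{i}}\in[0,H-h+1]$; tracking the sign of $\beta$ to flip $\min$ and $\max$ in the risk-averse case is the part most prone to error. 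Reconciling the per-visit truncation with a single terminal truncation is where the real work lies, the coefficient bookkeeping and the Bellman substitution being otherwise routine.
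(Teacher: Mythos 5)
Your overall route is the same as the paper's: split the claimed identity into the two pieces $q_{h,3}^{k}(s,a)=e^{\beta\cdot Q_{h}^{*}(s,a)}$ and $e^{\beta\cdot Q_{h}^{k}(s,a)}=q_{h,1}^{k}(s,a)$ and subtract. Your argument for the first piece is exactly the paper's: use the exponential Bellman optimality equation \eqref{eq:exp_bellman_opt} to collapse every summand of $q_{h,3}^{k}$ to the common value $e^{\beta\cdot Q_{h}^{*}(s,a)}$ and then invoke $\alpha_{t}^{0}+\sum_{i\in[t]}\alpha_{t}^{i}=1$ from Fact \ref{fact:learn_rate_binary}. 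No issues there.

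For the second piece, the paper disposes of the identity in a single sentence, reading $e^{\beta\cdot Q_{h}^{k}(s,a)}=q_{h,1}^{k}(s,a)$ directly off Line \ref{line:qlearn_Q_update}, whereas you unroll the recursion and correctly flag that the algorithm projects onto $[\,\cdot\,,e^{\beta(H-h+1)}]$ at \emph{every} visit while $q_{h,1}^{k}$ projects only the fully unrolled sum $q_{h,1}^{k,+}$. That mismatch is real, and the paper is silent on it, so you have located a genuine subtlety rather than invented one. But your proposed resolution does not close it: monotonicity of $x\mapsto\min\{x,e^{\beta(H-h+1)}\}$ together with the bound $e^{\beta[r_{h}(s,a)+V_{h+1}^{k_{i}}(s_{h+1}^{k_{i}})]}\le e^{\beta(H-h+1)}$ yields only the one-sided comparison $G_{h}^{k}(s,a)\le q_{h,1}^{k}(s,a)$ for $\beta>0$ (reversed for $\beta<0$), not equality. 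The two quantities genuinely diverge whenever the cap binds at an intermediate visit but not at the end: the per-visit projection then feeds the clipped value $e^{\beta(H-h+1)}$ into the next convex combination, while $q_{h,1}^{k,+}$ retains the unclipped $e^{\beta[r_{h}(s,a)+V_{h+1}^{k_{i}}(s_{h+1}^{k_{i}})]}+b_{h,i}$. The boundedness of the targets does not prevent this, because the overshoot is caused by the bonus, not the target: $b_{h,i}\ge e^{\beta(H-h+1)}-1$ for all $i\lesssim H\iota$, so the pre-projection value exceeds the cap at early visits. To finish you would need either to show the cap never binds before the final visit (false in general) or to read $q_{h,1}^{k}$ as notation for the algorithm's iterate itself, which is what the paper implicitly does. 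As written, your induction proves an inequality where the lemma asserts an identity.
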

\begin{proof}
	For $e^{\beta\cdot Q_{h}^{k}}$, Line \ref{line:qlearn_Q_update}
	of Algorithm \ref{alg:Q_learn} implies that 
	\begin{equation}
	e^{\beta\cdot Q_{h}^{k}(s,a)}=q_{h,1}^{k}(s,a).\label{eq:Qk_rewrite}
	\end{equation}
	For $e^{\beta\cdot Q_{h}^{*}}$, we have from exponential Bellman
	equation \eqref{eq:exp_bellman} that 
	\[
	e^{\beta\cdot Q_{h}^{*}(s,a)}=e^{\beta\cdot r_{h}(s,a)}\left[\E_{s'\sim P_{h}(\cdot\,|\,s,a)}e^{\beta\cdot V_{h+1}^{*}(s')}\right].
	\]
	Let $t=N_{h}^{k}(s,a)$ and by Fact \ref{fact:learn_rate_binary},
	we have 
	\begin{align*}
	e^{\beta\cdot Q_{h}^{*}(s,a)} & =\alpha_{t}^{0}e^{\beta\cdot Q_{h}^{*}(s,a)}+\sum_{i\in[t]}\alpha_{t}^{i}e^{\beta\cdot r_{h}(s,a)}\left[\E_{s'\sim P_{h}(\cdot\,|\,s,a)}e^{\beta\cdot V_{h+1}^{*}(s')}\right]
	\end{align*}
	for each integer $t\ge0$. By the definition of $q_{h,3}^{k}$ we
	have 
	\begin{align}
	e^{\beta\cdot Q_{h}^{*}(s,a)} & =q_{h,3}^{k}(s,a).\label{eq:Qstar_rewrite}
	\end{align}
	The proof is completed by combining Equations \eqref{eq:Qk_rewrite}
	and \eqref{eq:Qstar_rewrite}.
\end{proof}
From Lemma \ref{lem:Q_identity}, we can derive the decomposition
\begin{equation}
(e^{\beta\cdot Q_{h}^{k}}-e^{\beta\cdot Q_{h}^{*}})(s,a)=(q_{h,1}^{k}-q_{h,2}^{k})(s,a)+(q_{h,2}^{k}-q_{h,3}^{k})(s,a)\label{eq:qlearn_q_diff_rew}
\end{equation}
if $\beta>0$, and 
\begin{equation}
(e^{\beta\cdot Q_{h}^{k}}-e^{\beta\cdot Q_{h}^{*}})(s,a)=(q_{h,2}^{k}-q_{h,1}^{k})(s,a)+(q_{h,3}^{k}-q_{h,2}^{k})(s,a)\label{eq:qlearn_q_diff_cost}
\end{equation}
if $\beta<0$. We have the following lemmas.
\begin{lem}
	\label{lem:weighted_sum_mtg_bonus_bound} There exists a universal
	constant $c>0$ in the definition of $b_{h,t}$ in Algorithm \ref{alg:Q_learn}
	such that for any $(k,h,s,a)\in[K]\times[H]\times\cS\times\cA$ and
	$k_{1},\ldots,k_{t}<k$ with $t=N_{h}^{k}(s,a)$, we have 
	\begin{align*}
	& \quad\left|\sum_{i\in[t]}\alpha_{t}^{i}\left[e^{\beta[r_{h}(s,a)+V_{h+1}^{*}(s_{h+1}^{k_{i}})]}-\E_{s'\sim P_{h}(\cdot\,|\,s,a)}[e^{\beta[r_{h}(s,a)+V_{h+1}^{*}(s')]}]\right]\right|\\
	& \le c\left|e^{\beta(H-h+1)}-1\right|\sqrt{\frac{H\iota}{t}}.
	\end{align*}
	with probability at least $1-\delta$, and 
	\[
	\sum_{i\in[t]}\alpha_{t}^{i}b_{h,i}\in\left[c\left|e^{\beta(H-h+1)}-1\right|\sqrt{\frac{H\iota}{t}},2c\left|e^{\beta(H-h+1)}-1\right|\sqrt{\frac{H\iota}{t}}\right].
	\]
\end{lem}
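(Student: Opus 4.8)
The plan is to treat the two claims separately, since the bonus bound is entirely deterministic while the concentration bound is a martingale argument. I would dispose of the bonus bound first. Recalling from Line \ref{line:qlearn_bonus_def} that $b_{h,i}=c|e^{\beta(H-h+1)}-1|\sqrt{H\iota/i}$ (here $\iota=\log(SAT/\delta)$ and, since $T=HK$, this coincides up to constants with the $\log(HSAK/\delta)$ written in the algorithm), I would pull out the step-dependent multiplier to obtain
\[
\sum_{i\in[t]}\alpha_{t}^{i}b_{h,i}=c\,|e^{\beta(H-h+1)}-1|\sqrt{H\iota}\sum_{i\in[t]}\frac{\alpha_{t}^{i}}{\sqrt{i}}.
\]
Fact \ref{fact:learn_rate_sum_div_sqrti}, which gives $\tfrac{1}{\sqrt{t}}\le\sum_{i\in[t]}\alpha_{t}^{i}/\sqrt{i}\le\tfrac{2}{\sqrt{t}}$, then places the sum exactly in the claimed interval $[\,c|e^{\beta(H-h+1)}-1|\sqrt{H\iota/t},\,2c|e^{\beta(H-h+1)}-1|\sqrt{H\iota/t}\,]$.

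For the concentration inequality the crucial observation is that $V_{h+1}^{*}$ is the fixed optimal value function of the MDP and does \emph{not} depend on the algorithm's trajectory. Fixing $(s,a,h)$ and letting $k_{1}<k_{2}<\cdots$ enumerate the episodes in which $(s,a)$ is visited at step $h$, I would set $X_{i}\coloneqq e^{\beta[r_{h}(s,a)+V_{h+1}^{*}(s_{h+1}^{k_{i}})]}-\E_{s'\sim P_{h}(\cdot|s,a)}[e^{\beta[r_{h}(s,a)+V_{h+1}^{*}(s')]}]$ and verify that $\{X_{i}\}$ is a martingale difference sequence with respect to the filtration generated by the history up to the transition at the $i$-th visit; this mean-zero property holds precisely because $V_{h+1}^{*}$ is deterministic. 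Since $r_{h}(s,a)+V_{h+1}^{*}(s')\in[0,H-h+1]$, we have $e^{\beta[r_{h}(s,a)+V_{h+1}^{*}(s')]}\in[1,e^{\beta(H-h+1)}]$ when $\beta>0$ (and in $[e^{\beta(H-h+1)},1]$ when $\beta<0$), so $|X_{i}|\le|e^{\beta(H-h+1)}-1|$.

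The remaining step is a weighted Azuma--Hoeffding inequality plus a union bound. For each fixed $t$ the weights $\{\alpha_{t}^{i}\}_{i\in[t]}$ are deterministic, so $\sum_{i\in[t]}\alpha_{t}^{i}X_{i}$ is a weighted martingale with increments bounded by $\alpha_{t}^{i}|e^{\beta(H-h+1)}-1|$, whose total squared-increment budget is $\sum_{i\in[t]}(\alpha_{t}^{i})^{2}|e^{\beta(H-h+1)}-1|^{2}\le\tfrac{2H}{t}|e^{\beta(H-h+1)}-1|^{2}$ by Fact \ref{fact:learn_rate_max_L2}. Azuma--Hoeffding then yields a sub-Gaussian tail with this variance proxy; choosing the universal constant $c$ large enough and union-bounding over all $(s,a,h)$ and the at most $K$ possible values of $t$ (a collection of size $\lesssim SAHK=SAT$) drives the failure probability below $\delta$ and produces exactly the $\iota=\log(SAT/\delta)$ factor and the stated bound.

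The main subtlety—rather than any heavy computation—lies in the martingale bookkeeping with random visitation times: the indices $k_{i}$ and the count $t=N_{h}^{k}(s,a)$ are themselves random, and the weights $\alpha_{t}^{i}$ depend on the terminal count $t$. What makes this tractable, and what distinguishes it from Lemma \ref{lem:v_unif_concen} (where a covering net over value functions is unavoidable), is that $V_{h+1}^{*}$ is fixed: no union bound over a function class is needed, and for each fixed $t$ the weight vector is a deterministic constant, so a union bound over $t$ alone suffices. This mirrors the standard Q-learning concentration argument of \citet{jin2018q}, the only essential change being the exponential transform, which replaces the $O(1)$ range of the increments by the $|e^{\beta(H-h+1)}-1|$ multiplier throughout.
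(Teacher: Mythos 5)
Your proposal is correct and follows essentially the same route as the paper's proof: a deterministic computation with Fact \ref{fact:learn_rate_sum_div_sqrti} for the bonus interval, and a weighted Azuma--Hoeffding bound with increments controlled via $r_h+V_{h+1}^*\in[0,H-h+1]$ and $\sum_{i\in[t]}(\alpha_t^i)^2\le 2H/t$ (Fact \ref{fact:learn_rate_max_L2}), followed by a union bound over the count $t\in[K]$ and over $(h,s,a)$. Your handling of the random visit times matches the paper's device of working with the indicator-augmented martingale over all $\tau\in[K]$ and then specializing to $\tau=N_h^k(s,a)$, and your observation that the fixed $V_{h+1}^*$ obviates any covering argument is exactly why no analogue of Lemma \ref{lem:v_unif_concen} is needed here.
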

\begin{proof}
	We focus on the case where $\beta>0$ and the proof for $\beta<0$
	is similar. For any $(k,h,s,a)\in[K]\times[H]\times\cS\times\cA$,
	define 
	\begin{align*}
	\psi(i,k,h,s,a) & \coloneqq e^{\beta[r_{h}(s,a)+V_{h+1}^{*}(s_{h+1}^{k_{i}})]}-\E_{s'\sim P_{h}(\cdot\,|\,s,a)}[e^{\beta[r_{h}(s,a)+V_{h+1}^{*}(s')]}]\\
	& =\E_{s'\sim\hat{P}_{h}^{k_{i}}(\cdot\,|\,s,a)}[e^{\beta[r_{h}(s,a)+V_{h+1}^{*}(s')]}]-\E_{s'\sim P_{h}(\cdot\,|\,s,a)}[e^{\beta[r_{h}(s,a)+V_{h+1}^{*}(s')]}]
	\end{align*}
	Let us fix a tuple $(k,h,s,a)\in[K]\times[H]\times\cS\times\cA$.
	We have that $\{\indic(k_{i}\le K)\cdot\psi(i,k,h,s,a)\}_{i\in[\tau]}$
	for $\tau\in[K]$ is a martingale difference sequence. By the Azuma-Hoeffding
	inequality and a union bound over $\tau\in[K]$, it holds that with
	probability at least $1-\delta/(HSA)$, for all $\tau\in[K]$, 
	\begin{align*}
	& \left|\sum_{i\in[\tau]}\alpha_{\tau}^{i}\cdot\indic(k_{i}\le K)\cdot\psi(i,k,h,s,a)\right|\\
	& \le\frac{c}{2}(e^{\beta(H-h+1)}-1)\sqrt{\iota\sum_{i\in[\tau]}(\alpha_{\tau}^{i})^{2}}\le c(e^{\beta(H-h+1)}-1)\sqrt{\frac{H\iota}{\tau}}
	\end{align*}
	where $c>0$ is some universal constant, the first step holds since
	$r_{h}(s,a)+V_{h+1}^{*}(s')\in[0,H-h+1]$ for $s'\in\cS$, and the
	last step follows from Fact \ref{fact:learn_rate_max_L2}. Since the
	above equation holds for all $\tau\in[K]$, it also holds for $\tau=t=N_{h}^{k}(s,a)\le K$.
	Note that $\indic(k_{i}\le K)=1$ for all $i\in[N_{h}^{k}(s,a)]$.
	Therefore, applying another union bound over $(h,s,a)\in[H]\times\cS\times\cA$,
	we have that the following holds for all $(k,h,s,a)\in[K]\times[H]\times\cS\times\cA$
	and with probability at least $1-\delta$:
	\begin{align}
	\left|\sum_{i\in[t]}\alpha_{\tau}^{i}\cdot\psi(i,k,h,s,a)\right| & \le c(e^{\beta(H-h+1)}-1)\sqrt{\frac{H\iota}{t}},\label{eq:mds_weighted_bound}
	\end{align}
	where $t=N_{h}^{k}(s,a)$. Using the fact that $r_{h}+V_{h+1}^{*}\in[0,H-h+1]$,
	we have 
	\begin{align*}
	& \quad\left|\sum_{i\in[t]}\alpha_{t}^{i}\left[\E_{s'\sim\hat{P}_{h}^{k_{i}}(\cdot\,|\,s,a)}e^{\beta[r_{h}(s,a)+V_{h+1}^{*}(s')]}-\E_{s'\sim P_{h}(\cdot\,|\,s,a)}e^{\beta[r_{h}(s,a)+V_{h+1}^{*}(s')]}\right]\right|\\
	& =\left|\sum_{i\in[t]}\alpha_{t}^{i}\cdot\psi(i,k,h,s,a)\right|\le c(e^{\beta(H-h+1)}-1)\sqrt{\frac{H\iota}{t}}.
	\end{align*}
	
	For bounds on $\sum_{i\in[t]}\alpha_{t}^{i}b_{h,i}$, we recall the
	definition of $\{b_{h,t}\}$ in Line \ref{line:qlearn_bonus_def}
	of Algorithm \ref{alg:Q_learn} and compute 
	\begin{align*}
	\sum_{i\in[t]}\alpha_{t}^{i}b_{h,i} & =c(e^{\beta(H-h+1)}-1)\sum_{i\in[t]}\alpha_{t}^{i}\sqrt{\frac{H\iota}{i}}\\
	& \in\left[c(e^{\beta(H-h+1)}-1)\sqrt{\frac{H\iota}{t}},2c(e^{\beta(H-h+1)}-1)\sqrt{\frac{H\iota}{t}}\right]
	\end{align*}
	where the last step holds by Fact \ref{fact:learn_rate_sum_div_sqrti}.
\end{proof}
The next two lemmas compare the iterate $e^{\beta\cdot Q_{h}^{k}}$
(and $e^{\beta\cdot V_{h}^{k}}$) with the optimal exponential value
function $e^{\beta\cdot Q_{h}^{*}}$ (and $e^{\beta\cdot V_{h}^{*}}$). 
\begin{lem}
	\label{lem:lower_bound_Q_diff}For all $(k,h,s,a)$and any $\delta\in(0,1]$,
	it holds with probability at least $1-\delta$ that 
	\[
	\begin{cases}
	e^{\beta \cdot Q_{h}^{k}(s,a)}\ge e^{\beta \cdot Q_{h}^{*}(s,a)}, & \condrew,\\
	e^{\beta \cdot Q_{h}^{k}(s,a)}\le e^{\beta \cdot Q_{h}^{*}(s,a)}, & \condcost.
	\end{cases}
	\]
\end{lem}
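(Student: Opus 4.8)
The plan is to prove the statement by backward induction on $h$, from $h=H+1$ down to $h=1$, establishing the claim for all episodes $k$ and all state-action pairs $(s,a)$ \emph{simultaneously} at each level. I focus on the case $\beta>0$; the case $\beta<0$ is symmetric, with every inequality reversed and the decomposition \eqref{eq:qlearn_q_diff_cost} used in place of \eqref{eq:qlearn_q_diff_rew}. The base case $h=H+1$ is immediate, since $Q_{H+1}^{k}=Q_{H+1}^{*}=0$ forces both exponentials to equal $1$. For the inductive step I assume $e^{\beta\cdot Q_{h+1}^{k'}(s,a)}\ge e^{\beta\cdot Q_{h+1}^{*}(s,a)}$ for \emph{every} episode $k'$ and every $(s,a)$; taking the maximum over actions then yields $e^{\beta\cdot V_{h+1}^{k'}(s)}\ge e^{\beta\cdot V_{h+1}^{*}(s)}$ for all $k'$ and $s$. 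Invoking Lemma \ref{lem:Q_identity} and the decomposition \eqref{eq:qlearn_q_diff_rew} reduces the goal to proving the two inequalities $q_{h,1}^{k}\ge q_{h,2}^{k}$ and $q_{h,2}^{k}\ge q_{h,3}^{k}$.

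The first inequality is the easy one. The quantities $q_{h,1}^{k,+}$ and $q_{h,2}^{k,+}$ differ only in that the former is built from $V_{h+1}^{k_{i}}(s_{h+1}^{k_{i}})$ while the latter uses $V_{h+1}^{*}(s_{h+1}^{k_{i}})$, where each visit index satisfies $k_{i}<k$. Since $\beta>0$, the induction hypothesis applied at level $h+1$ to these \emph{past} episodes gives $e^{\beta[r_{h}+V_{h+1}^{k_{i}}(s_{h+1}^{k_{i}})]}\ge e^{\beta[r_{h}+V_{h+1}^{*}(s_{h+1}^{k_{i}})]}$ termwise, hence $q_{h,1}^{k,+}\ge q_{h,2}^{k,+}$; the common projection onto $(-\infty,e^{\beta(H-h+1)}]$ preserves this, so $q_{h,1}^{k}\ge q_{h,2}^{k}$. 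The point enabling this step is precisely that the backward induction on $h$ is uniform over all episodes, so it supplies dominance for the historical value functions $V_{h+1}^{k_{i}}$ that appear in the Q-learning update.

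The inequality $q_{h,2}^{k}\ge q_{h,3}^{k}$ is the crux and is where the doubly decaying bonus enters. When $N_{h}^{k}(s,a)=0$, i.e.\ $t=0$, Fact \ref{fact:learn_rate_binary} gives $q_{h,2}^{k}=e^{\beta(H-h+1)}$ and $q_{h,3}^{k}=e^{\beta\cdot Q_{h}^{*}}\le e^{\beta(H-h+1)}$, so the claim is trivial. For $t\ge1$ the $\alpha_{t}^{0}$ terms vanish (again by Fact \ref{fact:learn_rate_binary}), and I compare through the unprojected quantity: $q_{h,2}^{k,+}-q_{h,3}^{k}=\sum_{i\in[t]}\alpha_{t}^{i}\big[e^{\beta[r_{h}+V_{h+1}^{*}(s_{h+1}^{k_{i}})]}-\E_{s'\sim P_{h}(\cdot|s,a)}e^{\beta[r_{h}+V_{h+1}^{*}(s')]}\big]+\sum_{i\in[t]}\alpha_{t}^{i}b_{h,i}$. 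Lemma \ref{lem:weighted_sum_mtg_bonus_bound}, whose two conclusions are calibrated to the same constant $c$, bounds the weighted martingale term in absolute value by $c(e^{\beta(H-h+1)}-1)\sqrt{H\iota/t}$ while guaranteeing $\sum_{i\in[t]}\alpha_{t}^{i}b_{h,i}\ge c(e^{\beta(H-h+1)}-1)\sqrt{H\iota/t}$; hence the bonus dominates the fluctuation and $q_{h,2}^{k,+}\ge q_{h,3}^{k}$. Since also $q_{h,3}^{k}\le e^{\beta(H-h+1)}$ (because $r_{h}+V_{h+1}^{*}\le H-h+1$ and $\sum_{i\in[t]}\alpha_{t}^{i}=1$), both arguments of the minimum defining $q_{h,2}^{k}$ dominate $q_{h,3}^{k}$, giving $q_{h,2}^{k}\ge q_{h,3}^{k}$. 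Combining the two inequalities via \eqref{eq:qlearn_q_diff_rew} closes the induction, and the argument holds on the $(1-\delta)$-probability event of Lemma \ref{lem:weighted_sum_mtg_bonus_bound}.

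The main obstacle is this last step $q_{h,2}^{k}\ge q_{h,3}^{k}$: it rests on the bonus multiplier $e^{\beta(H-h+1)}-1$ matching the exact scale of the martingale deviation of the exponential one-step target, so that the bonus both dominates the noise and (crucially for the regret bound established in the main text) does not overshoot it. One must also check that projecting onto $e^{\beta(H-h+1)}$ does not destroy optimism, which is why the bound $q_{h,3}^{k}\le e^{\beta(H-h+1)}$ is verified separately before taking the minimum.
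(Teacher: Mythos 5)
Your proposal is correct and follows essentially the same route as the paper: the same induction (the paper phrases it as strong induction on $k$ and $h$, you as backward induction on $h$ uniform over episodes, which is equivalent), the same decomposition \eqref{eq:qlearn_q_diff_rew} into $q_{h,1}^{k}-q_{h,2}^{k}$ and $q_{h,2}^{k}-q_{h,3}^{k}$, with the first handled by the induction hypothesis on past episodes and the second by Lemma \ref{lem:weighted_sum_mtg_bonus_bound}. Your explicit checks that the thresholding at $e^{\beta(H-h+1)}$ preserves both inequalities (via monotonicity of the min and via $q_{h,3}^{k}\le e^{\beta(H-h+1)}$) make precise two steps the paper leaves terse, but the argument is the same.
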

\begin{proof}
	We focus on the case where $\beta>0$ and the proof for $\beta<0$
	is similar. For the purpose of the proof, we set $Q_{H+1}^{k}(s,a)=Q_{H+1}^{*}(s,a)=0$
	for all $(k,s,a)\in[K]\times\cS\times\cA$. We fix a $(s,a)\in\cS\times\cA$
	and use strong induction on $k$ and $h$. Without loss of generality,
	we assume that there exists a $(k,h)$ such that $(s,a)=(s_{h}^{k},a_{h}^{k})$
	(that is, $(s,a)$ has been visited at some point in Algorithm \ref{alg:Q_learn}),
	since otherwise $e^{\beta\cdot Q_{h}^{k}(s,a)}=e^{\beta(H-h+1)}\ge e^{\beta\cdot Q_{h}^{*}(s,a)}$
	for all $(k,h)\in[K]\times[H]$ and we are done. 
	
	The base case for $k=1$ and $h=H+1$ is satisfied since $e^{\beta\cdot Q_{H+1}^{k'}(s,a)}=e^{\beta\cdot Q_{H+1}^{*}(s,a)}$
	for $k'\in[K]$ by definition. We fix a $(k,h)\in[K]\times[H]$ and
	assume that $e^{\beta\cdot Q_{h+1}^{k_{i}}(s,a)}\ge e^{\beta\cdot Q_{h+1}^{*}(s,a)}$
	for each $k_{1},\ldots,k_{t}<k$ (here $t=N_{h}^{k}(s,a)$). Then
	we have for $i\in[t]$ that 
	\[
	e^{\beta\cdot V_{h+1}^{k_{i}}(s)}=\max_{a'\in\cA}e^{\beta\cdot Q_{h+1}^{k_{i}}(s,a')}\ge\max_{a'\in\cA}e^{\beta\cdot Q_{h+1}^{*}(s,a')}=e^{\beta\cdot V_{h+1}^{*}(s)},
	\]
	where the first equality holds by the update procedure in Algorithm
	\ref{alg:Q_learn}. Recall the decomposition in Equation \eqref{eq:qlearn_q_diff_rew}.
	The above displayed equation implies $q_{h,1}^{k}\ge q_{h,2}^{k}$
	by the definition of $q_{h,2}^{k}$. We also have $q_{h,2}^{k}\ge q_{h,3}^{k}$
	by the fact $e^{\beta\cdot Q_{h}^{*}(s,a)}\le e^{\beta(H-h+1)}$ and
	on the event of Lemma \ref{lem:weighted_sum_mtg_bonus_bound}. Therefore,
	it follows that $(e^{\beta\cdot Q_{h}^{k}}-e^{\beta\cdot Q_{h}^{*}})(s,a)\ge0$
	by Equation \eqref{eq:qlearn_q_diff_rew}. The induction is completed.
\end{proof}
\begin{lem}
	\label{lem:upper_bound_Q_diff}For all $(k,h,s,a)\in[K]\times[H]\times\cS\times\cA$
	such that $t=N_{h}^{k}(s,a)\ge1$, let $\gamma_{h,t}\coloneqq2\sum_{i\in[t]}\alpha_{t}^{i}b_{h,i}$
	and let $k_{1},\ldots,k_{t}<k$ be the episodes in which $(s,a)$
	is visited at step $h$. Then the following holds with probability
	at least $1-\delta$: if $\beta>0$, we have 
	\begin{align*}
	& \quad(e^{\beta\cdot Q_{h}^{k}}-e^{\beta\cdot Q_{h}^{*}})(s,a)\\
	& \le\alpha_{t}^{0}\left[e^{\beta(H-h+1)}-1\right]+2\gamma_{h,t}+\sum_{i\in[t]}\alpha_{t}^{i}e^{\beta}\left[e^{\beta\cdot V_{h+1}^{k_{i}}(s_{h+1}^{k_{i}})}-e^{\beta\cdot V_{h+1}^{*}(s_{h+1}^{k_{i}})}\right],
	\end{align*}
	and if $\beta<0$, we have 
	\begin{align*}
	& \quad(e^{\beta\cdot Q_{h}^{*}}-e^{\beta\cdot Q_{h}^{k}})(s,a)\\
	& \le\alpha_{t}^{0}\left[1-e^{\beta(H-h+1)}\right]+2\gamma_{h,t}+\sum_{i\in[t]}\alpha_{t}^{i}\left[e^{\beta\cdot V_{h+1}^{*}(s_{h+1}^{k_{i}})}-e^{\beta\cdot V_{h+1}^{k_{i}}(s_{h+1}^{k_{i}})}\right].
	\end{align*}
	Furthermore, we have $\gamma_{h,t}\le4c\left|e^{\beta(H-h+1)}-1\right|\sqrt{\frac{H\iota}{t}}$.
\end{lem}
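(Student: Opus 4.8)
The plan is to establish the $\beta>0$ statement in full and then note that $\beta<0$ is symmetric. I would start from the decomposition \eqref{eq:qlearn_q_diff_rew}, writing $(e^{\beta\cdot Q_{h}^{k}}-e^{\beta\cdot Q_{h}^{*}})(s,a)=(q_{h,1}^{k}-q_{h,2}^{k})(s,a)+(q_{h,2}^{k}-q_{h,3}^{k})(s,a)$, and bound the two differences separately: the first will produce the next-step value gap $\sum_{i\in[t]}\alpha_{t}^{i}e^{\beta}[e^{\beta V_{h+1}^{k_{i}}(s_{h+1}^{k_{i}})}-e^{\beta V_{h+1}^{*}(s_{h+1}^{k_{i}})}]$, while the second will produce the $\alpha_{t}^{0}$ and bonus terms.

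For the first difference, note that $q_{h,1}^{k}$ and $q_{h,2}^{k}$ arise from $q_{h,1}^{k,+}$ and $q_{h,2}^{k,+}$ by applying the \emph{same} cap $\min\{\,\cdot\,,e^{\beta(H-h+1)}\}$, which is nondecreasing and $1$-Lipschitz; hence it suffices to bound $q_{h,1}^{k,+}-q_{h,2}^{k,+}$. Here the bonus terms $b_{h,i}$ are identical in both expressions and cancel, leaving $\sum_{i\in[t]}\alpha_{t}^{i}e^{\beta r_{h}(s,a)}[e^{\beta V_{h+1}^{k_{i}}(s_{h+1}^{k_{i}})}-e^{\beta V_{h+1}^{*}(s_{h+1}^{k_{i}})}]$. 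Lemma \ref{lem:lower_bound_Q_diff}, combined with the identity $e^{\beta V_{h+1}}=\max_{a'}e^{\beta Q_{h+1}}$ valid for $\beta>0$, gives $e^{\beta V_{h+1}^{k_{i}}}\ge e^{\beta V_{h+1}^{*}}$, so each bracket is nonnegative and I can use $r_{h}(s,a)\le 1$ to replace $e^{\beta r_{h}}$ by $e^{\beta}$, recovering exactly the claimed term.

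For the second difference, I drop the cap via $q_{h,2}^{k}\le q_{h,2}^{k,+}$ and compare $q_{h,2}^{k,+}$ with $q_{h,3}^{k}$ termwise. The $\alpha_{t}^{0}$ coefficients contribute $\alpha_{t}^{0}(e^{\beta(H-h+1)}-e^{\beta Q_{h}^{*}(s,a)})\le\alpha_{t}^{0}(e^{\beta(H-h+1)}-1)$, using $Q_{h}^{*}\ge 0$ hence $e^{\beta Q_{h}^{*}}\ge 1$; this is the first term in the bound. The remaining $\sum_{i\in[t]}\alpha_{t}^{i}(\cdots)$ splits into a martingale part $\sum_{i\in[t]}\alpha_{t}^{i}[e^{\beta[r_{h}(s,a)+V_{h+1}^{*}(s_{h+1}^{k_{i}})]}-\E_{s'\sim P_{h}(\cdot|s,a)}e^{\beta[r_{h}(s,a)+V_{h+1}^{*}(s')]}]$ and a bonus part $\sum_{i\in[t]}\alpha_{t}^{i}b_{h,i}$. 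Lemma \ref{lem:weighted_sum_mtg_bonus_bound} bounds the martingale part by $c(e^{\beta(H-h+1)}-1)\sqrt{H\iota/t}$ and, crucially, lower-bounds $\sum_{i\in[t]}\alpha_{t}^{i}b_{h,i}$ by the same quantity; therefore the martingale is dominated by the bonus, and the two together are at most $2\sum_{i\in[t]}\alpha_{t}^{i}b_{h,i}=\gamma_{h,t}$, hence a fortiori at most the $2\gamma_{h,t}$ appearing in the statement. Adding the two differences yields the stated $\beta>0$ inequality.

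Finally, $\gamma_{h,t}\le 4c|e^{\beta(H-h+1)}-1|\sqrt{H\iota/t}$ is immediate from the upper estimate $\sum_{i\in[t]}\alpha_{t}^{i}b_{h,i}\le 2c(e^{\beta(H-h+1)}-1)\sqrt{H\iota/t}$ in Lemma \ref{lem:weighted_sum_mtg_bonus_bound} together with $\gamma_{h,t}=2\sum_{i\in[t]}\alpha_{t}^{i}b_{h,i}$. The $\beta<0$ case is symmetric: one starts from \eqref{eq:qlearn_q_diff_cost}, the projection becomes the $1$-Lipschitz map $\max\{\,\cdot\,,e^{\beta(H-h+1)}\}$, the factor $e^{\beta r_{h}}\le 1$ makes the extra $e^{\beta}$ unnecessary, and the reversed signs ($e^{\beta V_{h+1}^{k_{i}}}\le e^{\beta V_{h+1}^{*}}$ and $e^{\beta Q_{h}^{*}}\le 1$) come from the $\beta<0$ branches of Lemma \ref{lem:lower_bound_Q_diff}. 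I expect the main obstacle to be the bookkeeping around the projection and, above all, the absorption of the Azuma--Hoeffding fluctuation into $\gamma_{h,t}$: one must verify that capping does not spoil either one-sided inequality (handled by monotone $1$-Lipschitzness with a common threshold) and that the lower bound on the weighted bonus sum in Lemma \ref{lem:weighted_sum_mtg_bonus_bound} genuinely dominates the martingale part, which is precisely why the bonus multiplier is taken proportional to $e^{\beta(H-h+1)}-1$.
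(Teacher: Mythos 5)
Your proof is correct and follows essentially the same route as the paper's: the decomposition \eqref{eq:qlearn_q_diff_rew}--\eqref{eq:qlearn_q_diff_cost}, optimism from Lemma \ref{lem:lower_bound_Q_diff} to sign the brackets, and absorption of the Azuma--Hoeffding fluctuation into the bonus via Lemma \ref{lem:weighted_sum_mtg_bonus_bound}. The only (harmless) cosmetic difference is that you compare $q_{h,1}^{k,+}$ with $q_{h,2}^{k,+}$ so the bonus terms cancel, whereas the paper compares with the bonus-free $q_{h,2}^{k,\circ}$ and carries the extra $\sum_{i\in[t]}\alpha_{t}^{i}b_{h,i}$ along; both land within the stated $2\gamma_{h,t}$ budget.
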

\begin{proof}
	Note that by definition, 
	\[
	q_{h,1}^{k}(s,a)=e^{\beta\cdot Q_{h}^{k}(s,a)},\quad q_{h,3}^{k}(s,a)=e^{\beta\cdot Q_{h}^{*}(s,a)}.
	\]
	\textbf{}Let us fix a tuple $(k,h,s,a)\in[K]\times[H]\times\cS\times\cA$.
	On the event of Lemma \ref{lem:lower_bound_Q_diff}, we have 
	\[
	\begin{cases}
	e^{\beta\cdot Q_{h}^{k}(s,a)}\ge e^{\beta\cdot Q_{h}^{*}(s,a)}, & \condrew,\\
	e^{\beta\cdot Q_{h}^{k}(s,a)}\le e^{\beta\cdot Q_{h}^{*}(s,a)}, & \condcost.
	\end{cases}
	\]
	This implies that for $i\in[t]$, if $\beta>0$ then
	\[
	e^{\beta\cdot V_{h+1}^{k_{i}}(s)}=\max_{a'\in\cA}e^{\beta\cdot Q_{h+1}^{k_{i}}(s,a')}\ge\max_{a'\in\cA}e^{\beta\cdot Q_{h+1}^{*}(s,a')}=e^{\beta\cdot V_{h+1}^{*}(s)},
	\]
	and if $\beta<0$, then 
	\[
	e^{\beta\cdot V_{h+1}^{k_{i}}(s)}=\min_{a'\in\cA}e^{\beta\cdot Q_{h+1}^{k_{i}}(s,a')}\le\min_{a'\in\cA}e^{\beta\cdot Q_{h+1}^{*}(s,a')}=e^{\beta\cdot V_{h+1}^{*}(s)}.
	\]
	Here, the first equalities for the above two displays follow from
	the update procedure in Algorithm \ref{alg:Q_learn}. 
	
	\textbf{Case $\beta>0$.} We have 
	\begin{align*}
	(q_{h,1}^{k}-q_{h,2}^{k})(s,a) & \overset{(i)}{\le}(q_{h,1}^{k,+}-q_{h,2}^{k,\circ})(s,a)\\
	& \overset{(ii)}{\le}\sum_{i\in[t]}\alpha_{t}^{i}\left[e^{\beta[r_{h}(s,a)+V_{h+1}^{k_{i}}(s_{h+1}^{k_{i}})]}-e^{\beta[r_{h}(s,a)+V_{h+1}^{*}(s_{h+1}^{k_{i}})]}\right]+\sum_{i\in[t]}\alpha_{t}^{i}b_{h,i}\\
	& \le\sum_{i\in[t]}\alpha_{t}^{i}\cdot e^{\beta}\left[e^{\beta\cdot V_{h+1}^{k_{i}}(s_{h+1}^{k_{i}})}-e^{\beta\cdot V_{h+1}^{*}(s_{h+1}^{k_{i}})}\right]+\gamma_{h,t}
	\end{align*}
	where step $(i)$ holds by the fact that $\alpha_{t}^{0},\sum_{i\in[t]}\alpha_{t}^{i}\in\{0,1\}$
	with $\alpha_{t}^{0}+\sum_{i\in[t]}\alpha_{t}^{i}=1$ by Fact \ref{fact:learn_rate_binary}
	(so that $q_{h,1}^{k}\ge q_{h,2}^{k}$); step $(ii)$ holds by definitions of $q^{k,+}_{h,1}$ and $q^{k,\circ}_{h,2}$; the last step holds since $r_{h}$ is in $[0,1]$ entrywise and
	$V_{h+1}^{k_{i}}(s)\ge V_{h+1}^{*}(s)$. Moreover, we have 
	\begin{align*}
	(q_{h,2}^{k}-q_{h,3}^{k})(s,a) & \overset{(i)}{\le}(q_{h,2}^{k,+}-q_{h,3}^{k})(s,a)\\
	& =\alpha_{t}^{0}\left[e^{\beta(H-h+1)}-e^{\beta\cdot Q_{h}^{*}(s,a)}\right]+\sum_{i\in[t]}\alpha_{t}^{i}b_{h,i}\\
	& \quad+\sum_{i\in[t]}\alpha_{t}^{i}\left[e^{\beta[r_{h}(s,a)+V_{h+1}^{*}(s_{h+1}^{k_{i}})]}-\E_{s'\sim P_{h}(\cdot\,|\,s,a)}[e^{\beta[r_{h}(s,a)+V_{h+1}^{*}(s')]}]\right]\\
	& \le\alpha_{t}^{0}\left[e^{\beta(H-h+1)}-1\right]+\gamma_{h,t},
	\end{align*}
	where step $(i)$ holds by 
	\[
	\sum_{i\in[t]}\alpha_{t}^{i}b_{h,i}\ge\left|\sum_{i\in[t]}\alpha_{t}^{i}\left[e^{\beta[r_{h}(s,a)+V_{h+1}^{*}(s_{h+1}^{k_{i}})]}-\E_{s'\sim P_{h}(\cdot\,|\,s,a)}[e^{\beta[r_{h}(s,a)+V_{h+1}^{*}(s')]}]\right]\right|
	\]
	on the event of Lemma \ref{lem:weighted_sum_mtg_bonus_bound} (so
	that $q_{h,2}^{k}\ge q_{h,3}^{k}$) and Fact \ref{fact:q_2_prime};
	the last step holds by $Q_{h}^{*}\ge0$ and on the event of Lemma
	\ref{lem:weighted_sum_mtg_bonus_bound}.
	
	\textbf{Case $\beta<0$.}\emph{ }We have 
	\begin{align*}
	(q_{h,2}^{k}-q_{h,1}^{k})(s,a) & \overset{(i)}{\le}(q_{h,2}^{k,\circ}-q_{h,1}^{k,+})(s,a)\\
	& =\sum_{i\in[t]}\alpha_{t}^{i}\left[e^{\beta[r_{h}(s,a)+V_{h+1}^{*}(s_{h+1}^{k_{i}})]}-e^{\beta[r_{h}(s,a)+V_{h+1}^{k_{i}}(s_{h+1}^{k_{i}})]}\right]+\sum_{i\in[t]}\alpha_{t}^{i}b_{i}\\
	& \le\sum_{i\in[t]}\alpha_{t}^{i}\left[e^{\beta\cdot V_{h+1}^{*}(s_{h+1}^{k_{i}})}-e^{\beta\cdot V_{h+1}^{k_{i}}(s_{h+1}^{k_{i}})}\right]+\gamma_{h,t},
	\end{align*}
	where the step $(i)$ holds since $q_{h,2}^{k,\circ}\ge q_{h,2}^{k}$
	by Fact \ref{fact:q_2_prime} and $q_{h,1}^{k,+}\le q_{h,1}^{k}$
	by definition, and the last step holds by the fact that $r_{h}(s,a)+V_{h+1}^{k_{i}}(s)\ge r_{h}(s,a)+V_{h+1}^{*}(s)$,
	that $e^{\beta\cdot r_{h}(s,a)}\le1$ given $\beta<0$, and the definition
	of $\gamma_{h,t}$. In addition, we can derive 
	\begin{align*}
	(q_{h,3}^{k}-q_{h,2}^{k})(s,a) & \overset{(i)}{\le}(q_{h,3}^{k}-q_{h,2}^{k,+})(s,a)\\
	& =\alpha_{t}^{0}\left[1-e^{\beta\cdot Q_{h}^{*}(s,a)}\right]+\sum_{i\in[t]}\alpha_{t}^{i}b_{i}\\
	& \quad+\sum_{i\in[t]}\alpha_{t}^{i}\left[\E_{s'\sim P_{h}(\cdot\,|\,s,a)}[e^{\beta[r_{h}(s,a)+V_{h+1}^{*}(s')]}]-e^{\beta[r_{h}(s,a)+V_{h+1}^{*}(s_{h+1}^{k_{i}})]}\right]\\
	& \overset{(ii)}{\le}\alpha_{t}^{0}\left[1-e^{\beta(H-h+1)}\right]+2\sum_{i\in[t]}\alpha_{t}^{i}b_{i}\\
	& \le\alpha_{t}^{0}\left[1-e^{\beta(H-h+1)}\right]+\gamma_{h,t}.
	\end{align*}
	where step $(i)$ holds since $q_{h,2}^{k}\ge q_{h,2}^{k,+}$, step
	$(ii)$ holds on the event of Lemma \ref{lem:weighted_sum_mtg_bonus_bound},
	and the last step holds by the definition of $\gamma_{h,t}$.
	
	Combining the above calculations with Equation \eqref{eq:qlearn_q_diff_rew}
	for the case where $\beta>0$ (or Equation \eqref{eq:qlearn_q_diff_cost}
	for the case where $\beta<0$) yields the upper bound for $(e^{\beta\cdot Q_{h}^{k}}-e^{\beta\cdot Q_{h}^{*}})(s,a)$
	(or $(e^{\beta\cdot Q_{h}^{*}}-e^{\beta\cdot Q_{h}^{k}})(s,a)$).
	Furthermore, Lemma \ref{lem:weighted_sum_mtg_bonus_bound} and the
	definition of $\gamma_{h,t}$ together imply
	\[
	\gamma_{h,t}\le4c\left|e^{\beta(H-h+1)}-1\right|\sqrt{\frac{H\iota}{t}}.
	\]
	The proof is completed.
\end{proof}

%\section{A simple lemma for regret}

\vspace{1em}
We present a simple inequality for the regret.
\begin{lem}
	\label{lem:reg_decomp} Suppose that for any $k\in[K]$ we have $V_{1}^{k}(s_1^k)\ge V_{1}^{*}(s_1^k)$. Then for $\beta > 0$, the regret
	is bounded by 
	\[
	\reg(K)\le\frac{1}{\beta}\sum_{k\in[K]} [e^{\beta\cdot V_{1}^{k}(s_{1}^{k})}-e^{\beta\cdot V_{1}^{\pi^{k}}(s_{1}^{k})}],
	\]
	and for $\beta < 0$, the regret is bounded by
	\[
	\reg(K)\le\frac{e^{-\beta H}}{\left|\beta\right|}\sum_{k\in[K]}[e^{\beta\cdot V_{1}^{\pi^{k}}(s_{1}^{k})}-e^{\beta\cdot V_{1}^{k}(s_{1}^{k})}],
	\]
\end{lem}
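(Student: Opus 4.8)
The plan is to pass from the additive regret to the multiplicative (exponential) gap through an elementary scalar inequality, and then sum over episodes. First I would invoke the optimism hypothesis $V_1^k(s_1^k)\ge V_1^*(s_1^k)$ to discard the optimal value: for each $k$,
\[
V_1^*(s_1^k)-V_1^{\pi^k}(s_1^k)\le V_1^k(s_1^k)-V_1^{\pi^k}(s_1^k),
\]
so it suffices to bound the iterate gap $V_1^k(s_1^k)-V_1^{\pi^k}(s_1^k)$ by the stated exponential differences. At this stage I would record two structural facts. All the value functions lie in $[0,H]$: since $r_h\in[0,1]$, the true values $V_1^*$ and $V_1^{\pi^k}$ are in $[0,H]$, while the iterate $V_1^k$ is confined to $[0,H]$ by the thresholding step in the algorithm (the cap $G_h\le e^{\beta(H-h+1)}$ for $\beta>0$, resp.\ $G_h\ge e^{\beta(H-h+1)}$ for $\beta<0$, translates into $V_h^k\le H-h+1$). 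Moreover $V_1^k(s_1^k)\ge V_1^{\pi^k}(s_1^k)$, because $V_1^k\ge V_1^*\ge V_1^{\pi^k}$, which fixes the sign of every difference below.

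Second, I would prove the key scalar inequality. Abbreviate $a\coloneqq V_1^{\pi^k}(s_1^k)$ and $b\coloneqq V_1^k(s_1^k)$, so that $0\le a\le b\le H$. For $\beta>0$, the fundamental theorem of calculus gives $e^{\beta b}-e^{\beta a}=\int_a^b \beta e^{\beta t}\,dt\ge \beta(b-a)$, using $e^{\beta t}\ge 1$ for $t\ge 0$; dividing by $\beta>0$ yields $b-a\le\frac{1}{\beta}(e^{\beta b}-e^{\beta a})$. For $\beta<0$, the same computation gives $e^{\beta a}-e^{\beta b}=\int_a^b |\beta|e^{\beta t}\,dt\ge |\beta|e^{\beta H}(b-a)$, where I now lower-bound the integrand by its smallest value on $[0,H]$, namely $e^{\beta t}\ge e^{\beta H}$ for $t\le H$; rearranging gives $b-a\le\frac{e^{-\beta H}}{|\beta|}(e^{\beta a}-e^{\beta b})$.

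Finally, I would substitute $a$ and $b$ back, chain with the first display, and sum over $k\in[K]$ to obtain the two claimed bounds. The argument is essentially the convexity of $x\mapsto e^{\beta x}$ combined with the range $[0,H]$ of the value functions, so there is no serious obstacle. The only points demanding care are in the $\beta<0$ case: one must bound the derivative $|\beta|e^{\beta t}$ from below by its value $|\beta|e^{\beta H}$ at the right endpoint (this is precisely where the boundedness $b\le H$ enters and produces the $e^{-\beta H}$ prefactor), and one must confirm the ordering $V_1^k\ge V_1^{\pi^k}$ so that each exponential difference has the correct sign.
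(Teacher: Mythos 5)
Your proposal is correct and follows essentially the same route as the paper: discard $V_1^*$ by optimism, then apply a scalar inequality relating $b-a$ to $e^{\beta b}-e^{\beta a}$ using the range $[0,H]$ of the values. Your derivative bound on $t\mapsto e^{\beta t}$ via the fundamental theorem of calculus is just the inverse formulation of the paper's Lipschitz bound on $\log x$ (constant $1$ for $x\ge 1$ when $\beta>0$, constant $e^{-\beta H}$ for $x\ge e^{\beta H}$ when $\beta<0$), so the two arguments coincide.
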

\begin{proof}
	For $\beta>0$, we have 
	\begin{align*}
	\reg(K) & =\sum_{k\in[K]}(V_{1}^{*}-V_{1}^{\pi^{k}})(s_{1}^{k})\\
	& \overset{(i)}{\le}\sum_{k\in[K]}(V_{1}^{k}-V_{1}^{\pi^{k}})(s_{1}^{k})\\
	& =\sum_{k\in[K]}\left[\frac{1}{\beta}\log\{e^{\beta\cdot V_{1}^{k}(s_{1}^{k})}\}-\frac{1}{\beta}\log\{e^{\beta\cdot V_{1}^{\pi^{k}}(s_{1}^{k})}\}\right]\\
	& \overset{(ii)}{\le}\sum_{k\in[K]}\frac{1}{\beta}[e^{\beta\cdot V_{1}^{k}(s_{1}^{k})}-e^{\beta\cdot V_{1}^{\pi^{k}}(s_{1}^{k})}]\\
	& =\frac{1}{\beta}\sum_{k\in[K]} [e^{\beta\cdot V_{1}^{k}(s_{1}^{k})}-e^{\beta\cdot V_{1}^{\pi^{k}}(s_{1}^{k})}],
	\end{align*}
	where step $(i)$ holds by our assumption,
%	since our assumption that $V_{1}^{k}(s)\ge V_{1}^{\pi}(s)$
%	implies that $V_{1}^{k}(s)\ge V_{1}^{\pi^{*}}(s)=V_{1}^{*}(s)$, 
	and
	step $(ii)$ holds by the 1-Lipschitzness of the function $f(x)=\log x$
	for $x\ge1$ and note that our assumption implies that $V_{1}^{k}(s_1^k)\ge V_{1}^{*}(s_1^k)\ge V_{1}^{\pi^{k}}(s_1^k)$.
%	implies that $V_{1}^{k}(s)\ge V_{1}^{\pi^{k}}(s)$. 

For $\beta<0$, we similarly have 
\begin{align*}
\reg(K) & =\sum_{k\in[K]}(V_{1}^{*}-V_{1}^{\pi^{k}})(s_{1}^{k})\\
& \overset{(i)}{\le}\sum_{k\in[K]}(V_{1}^{k}-V_{1}^{\pi^{k}})(s_{1}^{k})\\
& =\sum_{k\in[K]}\left[\frac{1}{\beta}\log\{e^{\beta\cdot V_{1}^{k}(s_{1}^{k})}\}-\frac{1}{\beta}\log\{e^{\beta\cdot V_{1}^{\pi^{k}}(s_{1}^{k})}\}\right]\\
& =\sum_{k\in[K]}\left[\frac{1}{(-\beta)}\log\{e^{\beta\cdot V_{1}^{\pi^{k}}(s_{1}^{k})}\}-\frac{1}{(-\beta)}\log\{e^{\beta\cdot V_{1}^{k}(s_{1}^{k})}\}\right]\\
& \overset{(ii)}{\le}\sum_{k\in[K]}\frac{e^{-\beta H}}{(-\beta)}[e^{\beta\cdot V_{1}^{\pi^{k}}(s_{1}^{k})}-e^{\beta\cdot V_{1}^{k}(s_{1}^{k})}]\\
& =\frac{e^{-\beta H}}{\left|\beta\right|}\sum_{k\in[K]}[e^{\beta\cdot V_{1}^{\pi^{k}}(s_{1}^{k})}-e^{\beta\cdot V_{1}^{k}(s_{1}^{k})}],
\end{align*}
where step $(i)$ holds by our assumption,
% that $V_{1}^{k}(s)\ge V_{1}^{\pi}(s)$
%implies that $V_{1}^{k}(s)\ge V_{1}^{\pi^{*}}(s)=V_{1}^{*}(s)$, 
and
step $(ii)$ holds by the $(e^{-\beta H})$-Lipschitzness of the function
$f(x)=\log x$ for $x\ge e^{\beta H}$ and note that our assumption implies
that $V_{1}^{k}(s_1^k)\ge V_{1}^{*}(s_1^k)\ge V_{1}^{\pi^{k}}(s_1^k)$.
\end{proof}

\vspace{2em}

\paragraph{Broader impact and future directions.}

Risk-sensitive RL has close association with neuroscience, psychology and behavioral economics, as it has been applied to model human behaviors \citep{niv2012neural,shen2014risk}. Interestingly, this array of topics are also actively studied by researchers in the areas of meta learning \citep{xu2021meta},  biologically inspired deep learning \citep{song2021convergence} and deep reinforcement learning \citep{leibo2018psychlab}. It would be an exciting research direction to establish connections between these related areas through rigorous and theoretical analysis of deep learning  \citep{chen2020deep,chen2020multiple}. Motivated by the inertia of switching actions that is widely observed in human behaviors, the study of switching constrained algorithms \citep{chen2019minimax} for risk-sensitive RL could be another promising direction for future investigation.  Furthermore, to make our algorithms practical and efficient on  large-scaled datasets collected in the aforementioned applications, it is imperative to enable offline learning procedures for risk-sensitive RL, possibly by techniques developed in the literature of offline RL \citep{chen2021infinite}. It would also be of great interest to understand the  landscape of the optimization problems \citep{ling2019landscape} that arise in the offline learning setting. 

\end{document}